\newtheorem{theorem}{Theorem}
\newtheorem{lemma}[theorem]{Lemma}
\newtheorem{corollary}[theorem]{Corollary}
\newtheorem{definition}[theorem]{Definition}
\newtheorem{assumption}[theorem]{Assumption}
\title{Graph Partial Label Learning with Potential Cause Discovering}
\author{
    Hang Gao\textsuperscript{\rm 1}\thanks{Equal contribution.}
    Jiaguo Yuan$^*$\textsuperscript{\rm 1},
    Peng Qiao\textsuperscript{\rm 1},
    Fengge Wu\textsuperscript{\rm 1}\thanks{Corresponding author, fengge@iscas.ac.cn.},
    Changwen Zheng\textsuperscript{\rm 1},
    Huaping Liu\textsuperscript{\rm 2}
}
\begin{document}

\maketitle

\begin{abstract}
Graph Neural Networks (GNNs) have garnered widespread attention for their potential to address the challenges posed by graph representation learning, which face complex graph-structured data across various domains. However, due to the inherent complexity and interconnectedness of graphs, accurately annotating graph data for training GNNs is extremely challenging. To address this issue, we have introduced Partial Label Learning (PLL) into graph representation learning. PLL is a critical weakly supervised learning problem where each training instance is associated with a set of candidate labels, including the ground-truth label and the additional interfering labels. PLL allows annotators to make errors, which reduces the difficulty of data labeling. Subsequently, we propose a novel graph representation learning method that enables GNN models to effectively learn discriminative information within the context of PLL. Our approach utilizes potential cause extraction to obtain graph data that holds causal relationships with the labels. By conducting auxiliary training based on the extracted graph data, our model can effectively eliminate the interfering information in the PLL scenario. We support the rationale behind our method with a series of theoretical analyses. Moreover, we conduct extensive evaluations and ablation studies on multiple datasets, demonstrating the superiority of our proposed method. \textit{Codes and appendix are available in the supplementary materials.}
\end{abstract}

\section{Introduction}

Learning graph representations has emerged as a pivotal research area within the field of computer science, garnering significant attention from both academia and industry \cite{DBLP:journals/tai/JiaoCLYYLLH23}. This burgeoning interest is primarily motivated by the pressing need to address the challenges posed by complex graph-structured data across diverse domains \cite{DBLP:journals/nn/ShenJLWXSC23,DBLP:journals/bib/ZhangWRD22,DBLP:conf/aaai/WangSLCJ0W22,DBLP:journals/nn/ShenJLWXSC23}. Currently, the emergence of GNNs, combined with data-driven approaches, provides a reliable and efficient solution for graph representation learning \cite{DBLP:conf/iclr/KipfW17}. However, accurately annotating the data required for training GNNs has long been a challenging task, owing to the inherent complexity of the data \cite{DBLP:conf/asl/PikhurkoV09}. Unlike traditional tabular data, graphs encompass intricate relationships and interconnectedness between entities, rendering significant annotation resources and error-prone \cite{DBLP:conf/nips/ShamsiVKGA22}. In order to address such an issue, unsupervised methods \cite{DBLP:journals/tai/00010YAWP021,DBLP:conf/nips/YouCSCWS20,DBLP:conf/aaai/LiuZZLP23,DBLP:conf/aaai/JiLHWZX24,DBLP:conf/aaai/LiJGQZ024} have been introduced to reduce reliance upon annotated graph data, and methods aim at learning graph representations under noisy labels \cite{DBLP:journals/tbd/LiangYCLX24, DBLP:conf/kdd/Dai0W21,DBLP:conf/icassp/YuanLQZJZ23,DBLP:journals/asc/LeeKK23} have made models more robust. Though the aforementioned methods have addressed the issue of what to do when faced with suboptimal labeling of graph data, they have not addressed whether there are ways to better label graph data under limited resources and manpower. 

\begin{figure*}[t]
	\centering
	\begin{minipage}[t]{1\linewidth}
        \centering
        \includegraphics[width=1\linewidth]{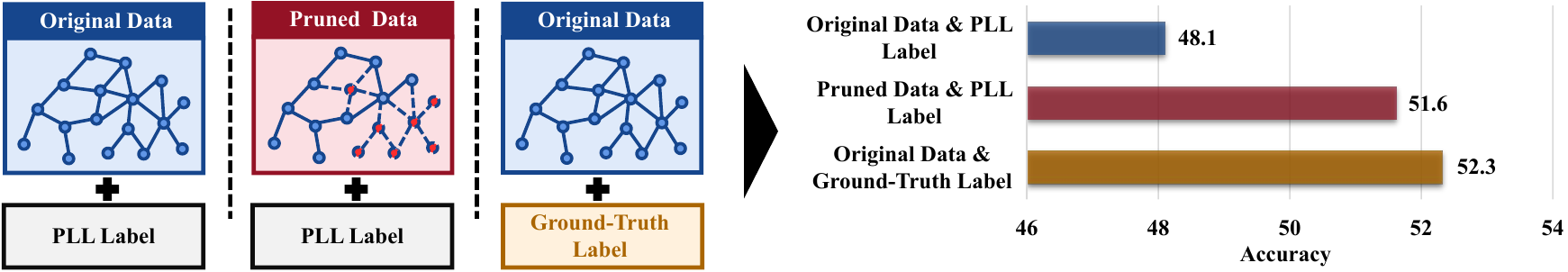}
	\end{minipage}
	\centering
    \vskip -0.1in
	\caption{Compared results. For GNN baseline, we use ARMA \cite{arma}. We adopt Graph Twitter dataset \cite{dataset:sst} to conduct the experiments.}
	\label{fig:motivation}
     \vskip -0.2in
\end{figure*}
Therefore, we introduce the PLL \cite{DBLP:phd/ndltd/Szummer02} paradigm to establish a novel and more flexible approach of graph representation learning. In PLL, each training sample is associated with multiple labels, while only one of them is the ground-truth label. Therefore, PLL allows for less-than-perfect accuracy during actual data annotation \cite{DBLP:conf/icml/GongYB22}. In other words, the PLL provides a solution that allows annotators not to ensure the complete accuracy of graph data, as long as one of the labels is correct, thereby reducing the requirements for annotation. Despite the advantages of PLL, the inherent structural complexity and semantically rich nature of graph data \cite{DBLP:conf/asl/PikhurkoV09} make the PLL problem in graph representation learning rather challenging. Highly informative inputs, combined with multiple uncertain labels, pose the risk of the model learning irrelevant information. Therefore, it would be beneficial if the task-relevant information could be identified within the graph data. Recently, methods incorporating causal learning mechanisms demonstrate that they can identify graph data causally related to the ground-truth labels \cite{DBLP:conf/iclr/WuWZ0C22, ciga, 123123}, thereby avoiding the learning of task-irrelevant information. However, these approaches are validated under general supervised scenarios. Yet, we wonder an intriguing question: \textbf{what impact does training models with graph data causally related to the ground-truth labels have in PLL scenarios?}




To investigate this issue further, we design and conduct a motivational experiment. Using the Granger Causality Criterion \cite{DBLP:journals/jmis/Dutta01} and a GNN model pre-trained with ground-truth labels, we prune the graph data to isolate a subset that consists only of data causally related to the ground-truth labels. Subsequently, we employ three different strategies to train a GNN model: 1) Training on the PLL dataset; 2) Training with pruned graphs and the labels from the PLL dataset; 3) Training under a conventional supervised learning scenario. All three models are tested on the same data set, and their results are compared. As Figure \ref{fig:motivation} reveals, the model trained with the pruned graphs significantly outperforms those trained with the original PLL dataset, and nearly matches the performance of models trained under conventional supervised scenarios with only ground-truth labels. This motivational experiment confirms a critical finding: \textbf{even in PLL scenarios where training involves inaccurate labels, effective identification and utilization of a graph causal subset can yield results comparable to conventional supervised graph representation learning} (a formal definition of graph causal subset is provided in Definition \ref{def:graph_causal_subset}). We also substantiate this finding with Theorem \ref{thm:leq}. However, the motivational experiment relies on a pre-trained GNN model with ground-truth labels and thus cannot be directly used in our task. For graph representation learning under the PLL scenario, the complexity of the data combined with label uncertainty can make locating the graph causal subset quite challenging.

To address such an issue, we propose a novel approach named \textit{\textbf{G}raph \textbf{P}otential \textbf{C}auses \textbf{D}iscovering} (GPCD). To extract the desired data, we introduce the concept of potential causes from causal theory \cite{pearl2000models}. Through theoretical validation, we demonstrate that identifying such causes can help pinpoint the graph causal subsets. GPCD analyzes the probabilistic relationships learned by the GNN on the PLL graph dataset and uses the Local Criteria For Inferring Causal Relations \cite{pearl2000models} to identify potential causes. Based on these identified causes, GPCD constructs an estimation of the graph causal subset, which then guides the training process. Moreover, the design of GPCD is substantiated through rigorous theoretical analysis. The effectiveness of GPCD is empirically validated across seven distinct datasets.

Our contributions are summarized as follows:

\begin{itemize}
\item We have discovered and demonstrated that identifying the graph causal subset can effectively ensure the efficacy of graph representation learning in the PLL context.

\item We propose an innovative method, GPCD. GPCD effectively estimates the graph causal subset with potential causes discovery, offering a novel solution for effectively utilizing graph training data under PLL scenarios.

\item We have conducted a detailed theoretical analysis and experimental validation of GPCD.

\end{itemize}




\section{Related Works}

\paragraph{Partial Label Learning.}
Methods addressing the PLL problem mainly consist of two types: average-based methods and identification-based methods. The average-based method treats each label in the candidate label set equally, assigning identical weight to them \cite{avg1,avg2,avg3}. The identification-based method treats the ground-truth label as a latent variable and identifies the ground-truth label iteratively during the learning process \cite{identify1,identify2,identify3,identify4,identify5}. Recently, confidence-based methods have achieved promising results. e.g., Pico \cite{pico} employs contrastive learning \cite{moco} and class prototype-based label disambiguation to address PLL. As a contemporary work, DEER\cite{10543125}  also attempts to address the PLL problem in graph data. However, like other PLL methods, DEER approaches the problem from the perspective of probability distributions. In contrast, this paper explores the issue from the perspective of causal discovery, focusing primarily on solving the problem by incorporating causal reasoning. 

\paragraph{Causal Representation Learning on Graphs.}
Currently, causal learning \cite{DBLP:journals/csur/GuoCLH020} finds its application within graph learning paradigms, aiming to bolster interpretability and enhance models by discerning causal relationships existing between data and labels. This array of methodologies can be delineated into two fundamental strata: inherent interpretability and invariant learning. Approaches underpinned by inherent interpretability integrate rationalization modules, typified by mechanisms like attention \cite{gan,DBLP:conf/nips/VaswaniSPUJGKP17} and pooling \cite{DBLP:conf/aaai/NguyenG18,pool,DBLP:conf/icml/WuC0L22}. In contrast, methods predicated upon invariant learning are crafted to discern pivotal subgraphs that wield a determinative role in the prognostications of GNNs \cite{gnnexplainer,DBLP:conf/icml/ChangZYJ20,DBLP:conf/icml/BevilacquaZ021}. Additionally, the paradigm of invariant learning finds utility in mitigating the challenges of generalization pertaining to out-of-distribution (OOD) data \cite{DBLP:conf/iclr/WuWZ0C22,123123,ciga}. We utilize causal learning techniques to effectively address and mitigate the impact of misleading labels encountered in graph representation learning under PLL scenarios and address the issue of how to locate graph causal subsets within data under such scenarios.






\section{Problem Formulation}


PLL investigates the challenge of learning from the training samples and their corresponding labels, which comprise both ground-truth labels and noisy labels that are indistinguishable directly. In the specific context of applying PLL to graph representation learning, we endeavor to train a dedicated GNN $f(\cdot)$, employing graph $\mathcal{G}$ and corresponding label $Y$, where $Y$ comprises both ground-truth label $Y^{*}$ and irrelevant noisy label $\widetilde{Y}$. Our problem lies in how to enable $f(\cdot)$ to model the relationship between $G$ and $Y^{*}$, while $Y^{*}$ is mixed with $\widetilde{Y}$. 

To enhance clarity throughout the paper, we denote the graph variable as $G$ and use $G_{i}$ to represent the specific $i$-th graph sample within the data set $\{G_{i}\}_{i=1}^{N}$, where $N$ is the number of graph samples. Regarding the label $Y$, it is defined as $Y \in \mathbb{R}^{K \times D}$, where $K$ indicates the number of different labels attached to each graph sample and $D$ represents the number of classes. $Y_{i}$ denotes the labels associated with $G_{i}$. Consequently, $Y_{i}$ comprises $K$ one-hot vectors, each representing a label attached to a graph sample, with only one vector corresponding to the ground-truth label. We then define $Y_{i}^{[k]}$ as the $k$-th one-hot vector label within $Y_{i}$. $Y_{i}^{[k]}$ is $D$-dimensional, corresponding to $D$ classes.

The issue we need to address is how to accurately model the relationship between $G$ and $Y^{*}$ within the context of PLL. As demonstrated in the motivating experiments, utilizing the graph causal subset $G^{*}$, which comprises only data causally related to the ground-truth labels, effectively achieves this objective; however, identifying $G^{*}$ in practical tasks is extremely challenging and elusive. Therefore, we introduce the concept of potential causes from the Local Criteria For Inferring Causal Relations \cite{pearl2000models}, and based on this, we approximate the estimation of $G^{*}$. The definition of potential causes and the criteria for locating them are as follows.
\begin{definition}  (Potential Cause) \cite{pearl2000models}
\label{def:poc}
A variable $a$ has a potential causal influence on another variable $b$ if the following conditions hold.

1. $a$ and $b$ are dependent in every context.

2. There exists a variable $c$ and a context $S$ such that (i) $a$ and $c$ are independent given $S$ and (ii) $c$ and $b$ are dependent given $S$.
\end{definition} 
Within the definition, the term ``context'' means a set of variables tied to specific values. We employ this definition to aid in the identification of variables that contribute positively to learning.

\section{Methodology}







In this section, we elaborate on the proposed method, GPCD. We first provide an overview of the approach, followed by detailed subsections. Our subsequent discussion will revolve around the process of locating and estimating the graph causal subset. Therefore, we present its formal definition.
\begin{definition}
(Graph Causal Subset) For graph sample $G$ and the corresponding ground-truth label $Y^{*}$, we denote $G^{*}$ as the \textit{graph causal subset} of $G$ if the following conditions hold.\\
1. $G^{*} \subseteq G.$ \\
2. For any $ C \subseteq G \setminus G^{*}$ (where $\setminus$ denotes the setminus operation), there exists no causal relationship between $C$ and $Y^{*}$.\\
3. For any $ X \subseteq G^{*}, X \neq \varnothing $, X holds causal relationships with $Y^{*}$.
\label{def:graph_causal_subset}
\end{definition}
As demonstrated in the conducted motivating experiments, identifying the graph causal subset $G^{*}$ within $G$ can effectively assist us in training a GNN model $f(\cdot)$ even in the presence of noise-label interference. However, locating $G^{*}$ is exceedingly challenging due to the intricate interrelations within graph data and the difficulty in distinguishing labels within PLL. Therefore, we introduce the concept of potential causes to aid in identifying $G^{*}$; we prove that locating the potential causes of $Y$ will yield an effective estimation of $G^{*}$ with Theorem \ref{thm:lag}. We have provided relevant explanations of potential causes in Definition \ref{def:poc}.

Based on the idea of using potential causes to locate the graph causal subset and then guiding accurate training of GNNs, we have designed GPCD, which can be divided into three phases: \textbf{1) pre-training}, where $f(\cdot)$ is trained to capture the probabilistic relationships within the PLL graph dataset; \textbf{2) graph causal subset estimation}, where potential causes that likely hold causal influence upon labels are identified based on $f(\cdot)$, such potential causes yield an estimation of graph causal subset; \textbf{3) auxiliary training}, where the identified potential causes are employed to guide and refine the training process. 

Figure \ref{fig:framework} demonstrates the framework of our proposed method, which will be elaborated phase by phase in the following content.

\begin{figure*}[t]
\centering
\includegraphics[width=1\textwidth]{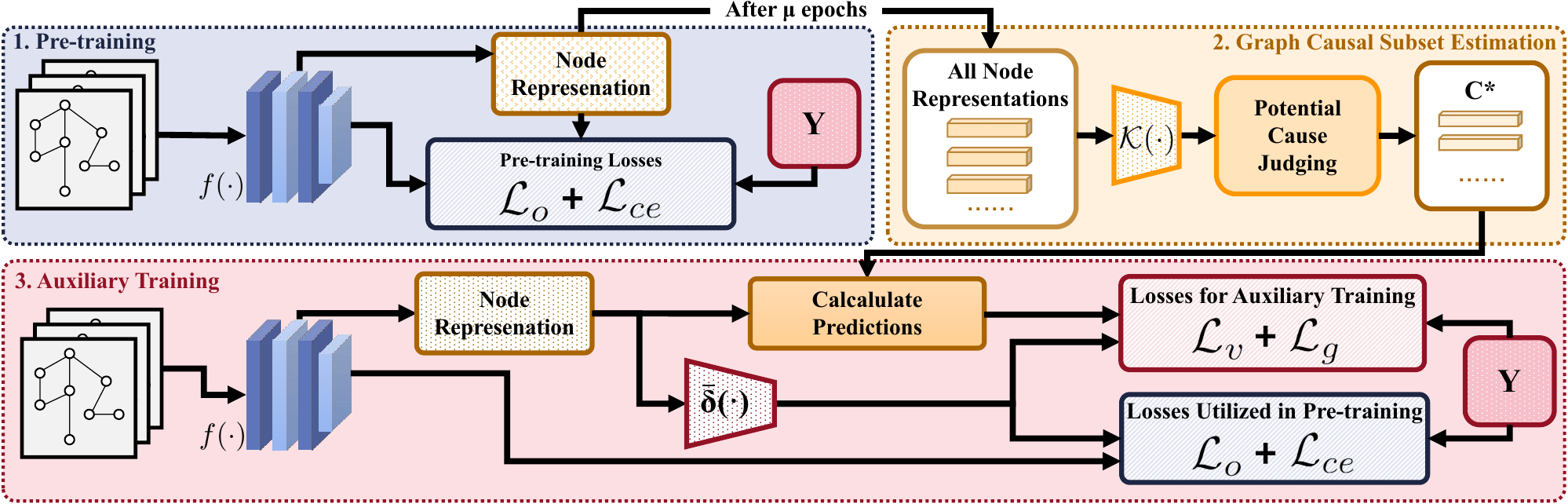} 
\vskip -0.1in
\caption{The framework of our proposed method.}

\label{fig:framework}
\vskip -0.2in
\end{figure*}


\subsection{Pre-training}
In the pre-training phase, $f(\cdot)$ is updated for $\mu$ epochs based on the cross-entropy loss $\mathcal{L}_{ce}$ and the node-level loss $\mathcal{L}_{o}$. Given the graph set $\{G_{i}\}_{i=1}^{N}$ and the corresponding labels $\{Y_{i}\}_{i=1}^{N}$, $\mathcal{L}_{ce}$ can be formulated as:
\begin{gather}
    \mathcal{L}_{ce} =  -\frac{1}{NK}\sum_{i=1}^{N} \sum_{k=1}^{K} \mathcal{H}\Big( f(G_{i}), Y_{i}^{[k]}\Big),
    \label{eq:lce}
\end{gather}
where $\mathcal{H}(\cdot)$ denotes the function that calculates the cross-entropy loss for each sample. As $Y_{i}^{[k]}$ denote the $k$-th one-hot vector label of $Y_{i}$, $\mathcal{L}_{ce}$ can be regarded as calculating cross-entropy loss over all the given labels.

As for $\mathcal{L}_{o}$, it is designed to enable $f(\cdot)$ to capture the direct relationship between node representations and graph labels, so as to locate potential causes in a fine-grained way. $\mathcal{L}_{o}$ can be formulated as:
\begin{gather}
\mathcal{L}_{o} = -\frac{1}{NK} \sum_{i=1}^{N} \sum_{k=1}
^{K} \mathcal{H}\Big( 
\nonumber\\
\text{norm}\Big(\sum_{l=1}^{|\mathcal{V}_i|} \Big(M_{i} \circ \bar{\delta}\big(\psi(G_{i})\big)\Big)^{[l]}\Big), Y_{i}^{[k]}\Big),
\label{eq:lo}
\end{gather}
where $\mathcal{V}_{i}$ denotes the node set of $G_{i}$, $|\mathcal{V}_{i}|$ denotes the number of nodes within $G_{i}$. $\psi(\cdot)$ denotes the sub-network of $f(\cdot)$ that computing node features. Additionally, $\bar{\delta}(\cdot)$, an MLP network, projects each of these node features into $D$-dimensional vectors, where $D$ is the number of classes and each vector element is constrained to the range $[-1,1]$. Consequently, the output $\bar{\delta}\big(\psi(G_{i})\big)$ corresponds to a matrix of dimensions $|\mathcal{V}_{i}| \times D$. These projection vectors are then summed up, and go through normalization $norm(\cdot)$ to calculate the probability estimation. $\circ$ denotes the element-wise product. $M_{i}$ is a mask matrix with the same size as $\bar{\delta}\big(\psi(G_{i})\big)$, which can be represented as follows:
\begin{gather}
    M_{i}^{[l,d]} = \frac{\text{abl}\big(\bar{\delta}\big(\psi(G_{i})\big)^{[l,d]}\big)}{\sum_{l=1}^{|\mathcal{V}_{i}|} \text{abl}\big(\bar{\delta}\big(\psi(G_{i})\big)^{[l,d]}\big)},
\end{gather}
where $\text{abl}(\cdot)$ denotes the calculation of absolute value, $M_{i}^{[l,d]}$ denote the element of matrix $M_{i}$ at row $l$ and column $d$. $\bar{\delta}\big(\psi(G_{i})\big)^{[l,d]}$ refers to the element at row $l$ and column $d$ of the output matrix obtained from $\bar{\delta}\big(\psi(G_{i})\big)$. 
The $M_{i}$ functions as a weighting mechanism, and helps calculate the mean value based on these weights. It guarantees that if $\bar{\delta}\big(\psi(G_{i})\big)^{[l,d]}$ equals $0$, it exerts no influence on the ultimate prediction, e.g., there can be infinite nodes representations add to $\psi(G_{i})$, as long as the corresponding output of $\bar{\delta}(\cdot)$ equals $0$, the prediction won't change. Conversely, as $\bar{\delta}\big(\psi(G_{i})\big)^{[l,d]}$ approaches extreme values, including $1$ and $-1$, its impact on the prediction significantly increases. We have demonstrated the rationality of the design of $\mathcal{L}_{o}$ through the proof provided in Theorem \ref{thm:is_potential_cause}. We then sum up $\mathcal{L}_{o}$ and $\mathcal{L}_{ce}$ as the total loss for pre-training.


\subsection{Graph Causal Subset Estimation}
After the pre-training phase, we proceed to estimate the graph causal subset $G^{*}$ of $G$. As discussed before, directly locating $G^{*}$ is infeasible; therefore, we turn to utilize the potential causes of $Y$ to estimate $G^{*}$. Benefit from the pre-training based on $\mathcal{L}_{o}$, $\bar{\delta}(\cdot)$ can now model the relationship between the node representations outputted by $\psi(\cdot)$ and label $Y$. Therefore, we could analyze the node representations to identify the potential causes of $Y$, such potential causes will yields an effective estimation of $G^{*}$ as demonstrated within Theorem \ref{thm:is_potential_cause}. Due to the large number of node features corresponding to all graph samples, it is impractical to analyze them individually. Thus, we first summarize the features through clustering, the process can be formulated as follows:
\begin{gather}
    C = \mathcal{K}\Big(\{\psi(G_{i})\}_{i=1}^{N}\Big),
\end{gather}
where $\mathcal{K}(\cdot)$ denotes the data clustering operation with K-Means \cite{macqueen1967some} and Elbow \cite{ketchen1996application} algorithms, the detailed implementation can be found in \textbf{Appendix} B.4. $C$ denotes the set of centers of grouped representations, each $\bm{c}_{j} \in C$ denotes a prototype vector that represents the center of the $j$-th group. Therefore, each $\bm{c}_{j}$ summarizes a category set $Z_{j}$ of node features, $Z_{j}$ denotes the set of node features that belongs to the $j$-th group.

Subsequently, we locate the potential causes of $Y$. Specifically, If the following condition holds:
\begin{gather}
\bar{\delta}(z)^{[d]}>\Delta, \forall z \in Z_{j},
\label{eq:ieq1}
\end{gather}
or
\begin{gather}
\bar{\delta}(z)^{[d]}<-\Delta, \forall z \in Z_{j},
\label{eq:ieq2}
\end{gather}
then we will regard the graph information corresponding to the node features within $Z_{j}$ as a potential cause of $Y$, $\Delta$ is a hyperparameter, and $\Delta \in (0,1)$. The fundamental intuition here is to identify features that exert a consistent or similar influence on model predictions, i.e., those are always greater than $\Delta$ or less than $-\Delta$. We further provide Theorem \ref{thm:is_potential_cause}, and the theoretical analysis follows as the demonstration of such judging criteria. We locate all $\bm{c}_{j}$ corresponding to $Z_{j}$ that satisfy the given criteria and select them to form a new set $C^{*}$. In the following learning, we could locate node representations that are similar to those within $C^{*}$, and then the graph information corresponding to such representations can be regarded as potential causes, thus serving as an estimation of $G^{*}$.

\subsection{Auxiliary Training} 

We devised an auxiliary training procedure to integrate the $C^{*}$ and guide the model training process. As demonstrated by the motivating experiment, $G^{*}$ and $Y$ with noisy label $\widetilde{Y}$ can also effectively train a GNN free from the influence of $\widetilde{Y}$. We further provide Theorem \ref{thm:leq} as theoretical proof to support such a claim. By utilizing $C^{*}$, we can locate the node representations corresponding to the estimation of $G^{*}$. Next, we opt to find the relationship between such representations and $Y$, then utilize the acquired relationship as guidance to conduct auxiliary training of the GNN model.

Specifically, for each node representation that similar to $\bm{c}^{*}_{j} \in C^{*}$, we directly adopt a vector $\bm{o}_{j}$ to serve as the corresponding output of $\bar{\delta}(\cdot)$, $\bm{o}_{j} \in \mathbb{R}^{1 \times D}$. Set $O = \{\bm{o}_{j}\}_{j=1}^{|C^{*}|}$ models the relationship between $Y$ and representations of the estimation of $G^{*}$. $\bm{o}_{j}$ is initialized with the value of $\bar{\delta}(\bm{c}^{*}_{j})$, and set as a learnable vector. $\bm{o}_{j}$ is updated using the following loss function:
\begin{gather}
\mathcal{L}_{v} = \sum_{i=1}^{N} \sum_{k=1}^{K} \mathcal{H}\Big( \text{epow}(\frac{1}{|\mathcal{V}_{i}|}\sum_{l=1}^{|\mathcal{V}_{i}|}\bm{w}_{i,l}, \lambda), Y_{i}^{[k]}\Big),
\label{eq:lv}
\end{gather}
where $\text{epow}([ a_1,a_2,...,a_n ]^{\top}, \lambda) = [ e^{(a_1)^{\lambda}}, e^{(a_2)^{\lambda}},...,e^{(a_n)^{\lambda}}]^{\top}$, $\bm{w}_{i,l}$ can be calculated with the following equation:
\begin{gather}
\bm w_{i,l} = 
\begin{cases}
\bm{o}_{j}, & \text{if $\text{sim}(\psi(G_{i})^{[l]},\bm{c}^{*}_{j}) \geq \text{sim}(\psi(G_{i})^{[l]},\bm{c}^{*}_{h}), $}
\\
\quad &\text{$\forall h \neq j$ and $\text{sim}(\psi(G_{i})^{[l]},\bm{c}^{*}_{j}) \geq \beta$}, 
\\
\mathbf{0}, & \text{sim}(\psi(G_{i})^{[l]},\bm{c}^{*}_{j}) < \beta,
\end{cases}
\end{gather}
where $\beta$ is a hyperparameter, $\text{sim}(\cdot)$ calculates the cosine similarity. $\mathcal{L}_{v}$ is used to optimize $O$, the rationality of which is demonstrated along with the proof of Theorem \ref{thm:leq}. Then, we leverage predictions made based on potential causes to guide our training process. Specifically, we propose the following loss function:
\begin{gather}
    \mathcal{L}_{g} =  \sum_{i=1}^{N} \mathcal{H}\Big( \frac{1}{|\mathcal{V}_i|}\sum_{l=1}^{|\mathcal{V}_i|} \bm{w}_{i,l}, \bar{\delta}\big(\psi(G_{i})\big)\Big).
\end{gather}
We add up $\mathcal{L}_{g}$ and $\mathcal{L}_{v}$, along with the $\mathcal{L}_{ce}$ and $\mathcal{L}_{o}$ adopted within the pre-training phase, to updated the model. The update procedure will last for $\lfloor \frac{\mu}{2} \rfloor$ epochs and the potential causes will be relocated until convergence or reach maximum epochs.

\section{Theoretical Analysis.}
In this chapter, we support our proposed method through theoretical analysis and proof. Based on the Definitions \ref{def:poc} concerning potential causes and the Definition \ref{def:graph_causal_subset} concerning graph causal subset $G^{*}$, we propose the following theorem:
\begin{theorem}
Given the $i$-th potential cause $\Gamma_{i}$  of $Y$ and the set of all potential causes $\bigcup_{i \in \mathcal{I}^p } \Gamma^{p}_{i}$ of $Y$, where $\mathcal{I}^p$ denotes the index set of the potential causes, it can be concluded that $G^{*} \subseteq \Big(\big( \bigcup_{i \in \mathcal{I}^{p} } \Gamma^{p}_{i} \big) \cap G \Big)$. Furthermore, if $\widetilde{Y}$ does not possess any causal relationships with $G$, $G^{*} = \Big(\big( \bigcup_{i \in \mathcal{I}^{p} } \Gamma^{p}_{i} \big) \cap G \Big)$.
\label{thm:lag}
\end{theorem}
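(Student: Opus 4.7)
The plan is to prove the two inclusions separately. For the first direction $G^{*} \subseteq \big(\bigcup_{i \in \mathcal{I}^{p}} \Gamma^{p}_{i}\big) \cap G$, note that $G^{*} \subseteq G$ holds immediately from clause~1 of Definition~\ref{def:graph_causal_subset}, so it suffices to show every non-empty $X \subseteq G^{*}$ is a potential cause of $Y$. By clause~3 of Definition~\ref{def:graph_causal_subset}, $X$ has a causal relationship with $Y^{*}$, and hence with $Y = (Y^{*}, \widetilde{Y})$ since $Y^{*}$ is a component of $Y$. I would then verify the two conditions of Definition~\ref{def:poc}: (i) genuine causation implies $X$ and $Y$ are dependent in every context (under a standard faithfulness assumption), and (ii) a witness $c$ and context $S$ can be supplied by choosing any variable that is $d$-separated from $X$ given $S$ but remains $d$-connected to $Y$ through $Y^{*}$ (for example, another cause of $Y^{*}$ distinct from $X$, with $S$ chosen to block the path between the two causes but not between $c$ and $Y$). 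This places $X$ in some $\Gamma^{p}_{i}$, yielding the inclusion.

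For the converse under the added assumption that $\widetilde{Y}$ has no causal relationship with $G$, I would take any $x \in \big(\bigcup_{i \in \mathcal{I}^{p}} \Gamma^{p}_{i}\big) \cap G$ and argue $x \in G^{*}$. Since $\widetilde{Y}$ has no causal link to any portion of $G$ (and in particular not to $x$), the dependence between $x$ and $Y$ demanded by condition~1 of Definition~\ref{def:poc} cannot be routed through $\widetilde{Y}$, so it must be realized via $Y^{*}$; likewise, the witness variable $c$ separating $x$ from $Y$ must in fact separate $x$ from $Y^{*}$. Thus $x$ qualifies as a potential cause of $Y^{*}$. Suppose, for contradiction, that $x \notin G^{*}$. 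Then $\{x\} \subseteq G \setminus G^{*}$, and clause~2 of Definition~\ref{def:graph_causal_subset} forces $\{x\}$ to have no causal relationship with $Y^{*}$, contradicting the potential-cause status of $x$ with respect to $Y^{*}$ (once faithfulness rules out spurious cancellation). Therefore $x \in G^{*}$, establishing equality.

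The main obstacle is the bridge between \emph{potential cause} and \emph{actual cause} used in the contradiction step. Pearl's notion is deliberately weaker than genuine causation: a variable can satisfy Definition~\ref{def:poc} purely through correlation with a hidden common cause or through being an effect, so one cannot in general deduce a true causal edge from the two-condition template. The role of the hypothesis ``$\widetilde{Y}$ has no causal relationship with $G$'' is precisely to close the most problematic loophole, namely that $x$ appears as a potential cause of $Y$ only because of its association with $\widetilde{Y}$; combined with a faithfulness assumption on the joint distribution of $(G, Y^{*})$ and the absence of latent confounders among graph substructures, the potential-cause condition for $Y^{*}$ becomes equivalent to an actual causal relationship with $Y^{*}$. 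I would therefore state these standing causal assumptions explicitly at the outset of the proof so that the contradiction in paragraph two goes through cleanly, and verify that the witness $(c, S)$ used in Definition~\ref{def:poc} cannot be produced by a purely non-causal mechanism between $x$ and $Y^{*}$ once $\widetilde{Y}$ is removed as a source of association.
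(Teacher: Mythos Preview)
Your two-inclusion structure matches the paper's proof: first show every non-empty subset of $G^{*}$ is a potential cause of $Y$ (via the genuine-cause $\Rightarrow$ potential-cause implication), then under the extra hypothesis establish the reverse inclusion by arguing any $\Gamma^{p}_{j}$ in the intersection must be causally related to $Y^{*}$.

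The difference lies in how you complete the reverse inclusion. You propose to invoke faithfulness together with a no-latent-confounders assumption so that ``potential cause of $Y^{*}$'' collapses to ``actual cause of $Y^{*}$'', and then derive a contradiction from clause~2 of Definition~\ref{def:graph_causal_subset}. The paper instead exploits the \emph{temporal} structure of the problem: since labels are created from graphs, $Y^{*}$ cannot causally influence anything in $G$, which immediately rules out the ``$x$ is an effect of $Y^{*}$'' loophole you flag. To handle confounding, the paper does not assume it away globally; rather, it explicitly constructs the witness required by Pearl's \emph{genuine causation with temporal information} criterion, taking the external parents $\bar{\text{PA}}(\Gamma^{p}_{i})$ and a context $S$ that blocks all paths from them to $Y^{*}$ except those through $\Gamma^{p}_{i}$, then checking $\bar{\text{PA}}(\Gamma^{p}_{i}) \not\perp\!\!\!\perp Y^{*}\mid S$ but $\bar{\text{PA}}(\Gamma^{p}_{i}) \perp\!\!\!\perp Y^{*}\mid S\cup\{\Gamma^{p}_{i}\}$. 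The paper also proves the intermediate independence $\Gamma^{p}_{j}\perp\!\!\!\perp\widetilde{Y}$ (your ``cannot be routed through $\widetilde{Y}$'' step) by a short contradiction argument that again uses temporal ordering, rather than leaving it informal.

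Both routes are valid. Your faithfulness/no-confounder approach is cleaner to state but imports stronger blanket assumptions; the paper's approach is more tailored to the labeled-graph setting and avoids assuming away confounders, at the cost of invoking Pearl's temporal criterion and a more explicit construction.
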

The proof can be found in \textbf{Appendix} A.1. Theorem \ref{thm:lag} demonstrates that identifying the potential causes can effectively predict $G^{*}$, and as the causal association between $\widetilde{Y}$ and $G^{*}$ decreases, the intersection of $G$ and the potential causes will converge to $G^{*}$. Next, we analyze if our method accurately identifies potential causes, starting with this assumption:
\begin{assumption}
    Dataset $\{ G_{i}\}_{i=1}^{N}$ is large enough so that if there exist context $S$ that for some $\widetilde{G} \subset G$, $\widetilde{G} \perp \!\!\!\perp Y|S$ holds, then there is bound to be a subset $\mathcal{\hat{G}}$ of $ \{ G_{i}\}_{i=1}^{N} $ within which $\widetilde{G} \perp \!\!\!\perp Y$ holds.
\label{asp:perp}
\end{assumption}
The rationale behind Assumption \ref{asp:perp} is that as long as the dataset encompasses all cases, it is always possible to find data samples that validate $\widetilde{G} \perp \!\!\!\perp Y$. Even if such samples cannot be found, a relationship that exists across all samples and cases already serves as a sufficient basis for prediction.
\begin{theorem}
Suppose Assumption \ref{asp:perp} holds, for node representation $\bm{Z}$ and the corresponding graph data $G^{\bm{Z}} \subset G$ that lead to the output of $\bm{Z}$, given the following conditions : 1)$ \ \mathcal{L}_{o} $ that defined in Equation \ref{eq:lo} is minimized ; 2) $ \ \exists d, \bar{\delta}(\bm{Z})^{[d]} \neq 0$, then $G^{\bm{Z}}$ is a potential cause of $Y$. 
\label{thm:is_potential_cause}
\end{theorem}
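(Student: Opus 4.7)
The plan is to directly verify the two conditions in Definition \ref{def:poc} with $a = G^{\bm{Z}}$ and $b = Y$: namely (i) $G^{\bm{Z}}$ and $Y$ are dependent in every context, and (ii) there exists a variable $c$ and a context $S$ such that $G^{\bm{Z}} \perp \!\!\!\perp c \mid S$ while $c \not\perp \!\!\!\perp Y \mid S$. Together these two facts constitute $G^{\bm{Z}}$ being a potential cause of $Y$. The two theorem hypotheses---minimality of $\mathcal{L}_o$ and non-vanishing $\bar{\delta}(\bm Z)^{[d]}$ for some $d$---will drive condition (i), while the compositional richness of the PLL label $Y$ will yield condition (ii).

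For condition (i), I would argue by contraposition combined with Assumption \ref{asp:perp}. Suppose some context $S$ satisfies $G^{\bm{Z}} \perp \!\!\!\perp Y \mid S$. Then Assumption \ref{asp:perp} gives a subset $\hat{\mathcal{G}} \subseteq \{G_i\}_{i=1}^N$ on which $G^{\bm{Z}} \perp \!\!\!\perp Y$ holds unconditionally. On this subset, inspect the aggregated node prediction $\text{norm}\bigl(\sum_l (M_i \circ \bar{\delta}(\psi(G_i)))^{[l]}\bigr)$. A non-zero $\bar{\delta}(\bm Z)^{[d]}$ perturbs this aggregate in coordinate $d$: it both contributes a signed term to the sum and enlarges its own mask weight in $M_i$, so it systematically shifts the prediction along $d$ in a way that---by the independence on $\hat{\mathcal{G}}$---is uncorrelated with $Y_i^{[k]}$. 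Replacing $\bar{\delta}(\bm Z)^{[d]}$ by $0$ would therefore strictly decrease $\mathcal{L}_o$ on $\hat{\mathcal{G}}$ (and leave it unchanged elsewhere), contradicting global minimization.

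For condition (ii), the plan is to exploit that $Y$ in the PLL setting is a multi-label vector whose entries reflect both the ground truth $Y^*$ and noisy candidate labels $\widetilde{Y}$, so $Y$ inevitably has sources of variability that are not channeled through any single graph subset. A natural witness is to take $c$ to be either (a) another portion of the graph $G \setminus G^{\bm Z}$ whose marginal is independent of $G^{\bm Z}$ under a suitable background context $S$, or (b) the external annotation noise generating $\widetilde{Y}$, which is independent of the graph features by construction. In either case $G^{\bm Z} \perp \!\!\!\perp c \mid S$ holds, while $c \not\perp \!\!\!\perp Y \mid S$ follows because $c$ enters $Y$ directly, whether as a component of $\widetilde{Y}$ or through an alternative causal pathway into $Y^*$.

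The hardest step will be tightening the loss-based contradiction in condition (i). The aggregation $\text{norm}(\sum_l M_i \circ \bar{\delta}(\psi(G_i)))$ couples the $\bm Z$-contribution with the normalization and with other node contributions, so one must rule out pathological cancellations in which a non-zero $\bar{\delta}(\bm Z)^{[d]}$ is exactly offset by other node features on $\hat{\mathcal{G}}$. I expect this to be resolved by applying Assumption \ref{asp:perp} carefully: the independence $G^{\bm Z} \perp \!\!\!\perp Y$ must hold uniformly across $\hat{\mathcal{G}}$, whereas an exact offset would be a non-generic coincidence that cannot persist across every graph in $\hat{\mathcal{G}}$. Combining this with strict convexity of the cross-entropy in the coordinate $d$ should then close the contradiction and finish the argument.
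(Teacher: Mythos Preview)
Your proposal follows essentially the same route as the paper's proof. For condition (i), both you and the paper argue by contradiction through Assumption \ref{asp:perp}: if $G^{\bm Z}$ were independent of $Y$ in some context, then on the corresponding data subset the non-zero $\bar{\delta}(\bm Z)^{[d]}$ contributes only noise to the aggregated prediction, and one can strictly lower $\mathcal{L}_o$ by forcing that coordinate to zero, contradicting minimality. The paper implements the ``set to zero'' step by altering $\psi$ to a new $\psi'$ (rather than $\bar{\delta}$) so that only the targeted node outputs change, and it explicitly invokes the design of the mask $M_i$---that a zero entry in $\bar{\delta}(\cdot)^{[d]}$ carries zero weight and hence cannot perturb an already-optimal aggregate---as the mechanism that dissolves exactly the cancellation worry you flag at the end.

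The one substantive difference lies in the scaffolding for condition (ii). Before running the argument, the paper constructs an explicit Structural Causal Model for the PLL setting (validated via the IC algorithm as a separate lemma), decomposing $G$ into variables $\Omega_1,\dots,\Omega_{r+m}$ with exogenous parents $\bar{\text{PA}}(\Omega_i)$ and a noise node $N$. This SCM then hands over a concrete witness for condition (ii): take $c$ to be any $\Omega$ in $G^*$ outside $G^{\bm Z}$ and take the context $S = \{\bar{\text{PA}}(\Omega_i)\}_{i=1}^{r+m}$; the graph structure certifies $G^{\bm Z}\perp\!\!\!\perp c \mid S$ while $c \not\perp\!\!\!\perp Y \mid S$. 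Your two candidate witnesses (another graph portion, or the annotation noise generating $\widetilde{Y}$) are exactly the objects that appear in the paper's SCM, so the intuition is right, but without the SCM you do not yet have a clean certificate of the required conditional independence. Your approach buys conceptual economy by skipping the SCM construction; the paper's approach buys a concrete structural justification for condition (ii) at the cost of an additional lemma.
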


The proof can be found in \textbf{Appendix} A.2. Theorem \ref{thm:is_potential_cause} demonstrates that as long as $ \ \exists d, \bar{\delta}(\bm{Z})^{[d]} \neq 0$ holds and the model has been sufficiently trained through $\mathcal{L}_{o} $, $G^{\bm{Z}}$ can be considered a potential cause. In our practical implementation, as the ideal condition may not be met, we have adopted a threshold $\Delta$ to manage the rigor of our judgments. Additionally, our evaluations are conducted across all samples, further ensuring accuracy.

\begin{table*}[ht]\tiny
	\setlength{\tabcolsep}{5pt}
 	\caption{Comparison of test accuracy (mean ± std) across seven datasets under random PLL label. Each experiment was conducted ten times to calculate the average result. Best records are highlighted in \textbf{bold}, while second-best records are marked with \underline{underline}.}
	\label{tab:nb}

	\begin{center}
		\begin{tabular}{l|ccccccc}
			\hline\rule{-1pt}{6pt}
			\multirow{2}*{Method}  &  Graph-SST5  & Graph-SST5 & {Graph-Twitter} & {Graph-Twitter} & \multirow{2}*{Graph-SST2} & \multirow{2}*{COLLAB} & REDDIT-  \\
			& (OOD) & (ID) & (OOD) & (ID) & & & MULTI-5K   \\ 	
			\hline\rule{-1pt}{6pt}
			\text{ARMA} \cite{arma} & {35.43$\pm$1.19} & {44.83$\pm$2.84} & {54.00$\pm$4.00} & {55.52$\pm$3.90} & {83.63$\pm$1.58} & {63.33$\pm$2.84} & {40.32$\pm$4.99}\\
			\text{DIR} \cite{DBLP:conf/iclr/WuWZ0C22} & {33.95$\pm$4.60} & {33.75$\pm$5.72} & {47.48$\pm$6.31} & {48.17$\pm$3.95} & {44.10$\pm$0.01} & {53.57$\pm$3.52} & {37.33$\pm$4.62}\\
			\text{CIGA} \cite{ciga} & {38.28$\pm$4.07} & {45.66$\pm$0.87} & {61.64$\pm$2.90} & {60.14$\pm$1.68} & {85.02$\pm$0.69} & {61.20$\pm$2.80} & {39.82$\pm$1.92}\\
			\text{DISC} \cite{fan2022debiasing} & {37.70$\pm$0.10} & {35.81$\pm$0.10} & {53.00±2.03} & {56.04$\pm$0.36} & {83.41$\pm$4.50} & {57.33$\pm$0.23} & {42.11$\pm$3.12}\\
			\text{PiCO} \cite{pico} & {39.69$\pm$3.65} & {45.84$\pm$0.93} & {60.17$\pm$0.22} & {51.14$\pm$0.67} & {30.89$\pm$0.36} & {48.34$\pm$0.12} & {35.05$\pm$0.59}\\
			\text{ML-PLL} \cite{mlpll} & {32.22$\pm$1.96} & {42.69$\pm$0.30} & {42.60$\pm$0.35} & {51.12$\pm$2.97} & \bf{87.29$\pm$1.07} & {51.87$\pm$3.35} & {36.66$\pm$0.42}\\
			\text{HTML} \cite{html} & {30.38$\pm$0.73} & {39.54$\pm$0.25} & {45.07$\pm$1.12} & {49.34$\pm$0.18} & {70.39$\pm$0.67} & {33.53$\pm$0.05} & {30.18$\pm$0.78}\\
            \hline\rule{-1pt}{6pt}
   			\text{GPCD-w/o-$\lambda$} & \underline{42.37$\pm$1.74} & \underline{47.37$\pm$0.21} & \underline{62.67$\pm$0.13} & \underline{61.05$\pm$0.07} & {86.20$\pm$0.10} & \underline{64.07$\pm$1.10} & \underline{45.51$\pm$0.57}\\
			\text{GPCD-w/o-auxiliary} & {42.31$\pm$0.57} & {47.33$\pm$0.24} & {61.81$\pm$0.27} & {60.61$\pm$0.08} & {86.86$\pm$0.19} & {63.67$\pm$0.31} & {45.18$\pm$0.31}\\
   			\text{GPCD} & \bf{44.69$\pm$2.11} & \bf{48.01$\pm$0.18} & \bf{63.97$\pm$1.33} & \bf{61.29$\pm$0.08} & \underline{87.12$\pm$0.16} & \bf{65.07$\pm$0.23} & \bf{49.10$\pm$0.35}\\

			\hline
		\end{tabular}
	\end{center}
        \label{table:res1}
\end{table*}

\begin{table*}[ht]\tiny
        \vspace{-7pt}
   \setlength{\tabcolsep}{5pt}
 	\caption{Test accuracy (mean±std) comparison on three datasets with competitive PLL label at different ambiguity levels $\rho$. Each experiment was conducted ten times to calculate the average result. The label semantically closest to the ground-truth is added to the candidate label set with a probability of $\rho$. Best records are highlighted in \textbf{bold}, while second-best records are marked with \underline{underline}.}
	\label{tab:nb}
	\begin{center}
		\begin{tabular}{l|cccccc}
			\hline\rule{-1pt}{6pt}
			
			\multirow{2}*{Method} &  Graph-SST5  & Graph-Twitter & COLLAB  &  Graph-SST5  & Graph-Twitter & COLLAB  \\
                & ($\rho=0.9$) & ($\rho=0.9$) & ($\rho=0.9$) & ($\rho=0.7$) & ($\rho=0.7$) & ($\rho=0.7$) \\
			\hline\rule{-1pt}{6pt}
			\text{ARMA} \cite{arma} & {39.83$\pm$0.70} & {46.69$\pm$0.88} & {60.27$\pm$3.72} & {42.64$\pm$0.65} & {47.41$\pm$5.20} & {59.67$\pm$0.07}\\
			\text{DIR} \cite{DBLP:conf/iclr/WuWZ0C22} & {38.98$\pm$0.30} & {45.15$\pm$1.16} & {57.10$\pm$0.17} & {38.95$\pm$1.77} & {44.19$\pm$5.70} & {50.17$\pm$1.15}\\
			\text{CIGA} \cite{ciga} & {41.65$\pm$1.32} & {57.22$\pm$1.34} & \underline{60.35$\pm$2.22} & {42.64$\pm$1.48} & {60.28$\pm$1.75} & {60.72$\pm$3.69}\\
			\text{DISC} \cite{DBLP:conf/nips/Fan0MST22} & {39.38$\pm$0.12} & {52.31$\pm$0.57} & {50.20$\pm$0.05} & {38.95$\pm$2.48} & {54.32$\pm$3.24} & {54.13$\pm$0.23}\\
			\text{PiCO} \cite{pico} & {41.93$\pm$0.19} & \underline{57.83$\pm$0.56} & {49.80$\pm$1.13} & \underline{43.65$\pm$0.86} & {55.79$\pm$0.64} & {37.09$\pm$1.78}\\
			\text{ML-PLL} \cite{mlpll} & {42.30$\pm$0.26} & {51.20$\pm$0.03} & {46.80$\pm$0.57} & {42.86$\pm$0.34} & {51.12$\pm$2.97} & {52.64$\pm$2.59}\\
			\text{HTML} \cite{html} & {37.26$\pm$0.77} & {48.63$\pm$1.01} & {30.74$\pm$1.05} & {39.34$\pm$0.11} & {50.75$\pm$0.72} & {31.49$\pm$0.78}\\      
            \hline\rule{-1.5pt}{6pt}
			\text{GPCD-w/o-$\lambda$} & \underline{42.72$\pm$0.08} & {57.25$\pm$0.18} & {60.33$\pm$0.28} & {43.48$\pm$0.13} & \underline{61.19$\pm$0.22} & {60.60$\pm$0.57}\\
			\text{GPCD-w/o-auxiliary} & {42.19$\pm$0.15} & {57.16$\pm$0.38} & {60.01$\pm$0.59} & {45.31$\pm$0.30} & {59.46$\pm$0.08} & \underline{61.53$\pm$0.81}\\
   			\text{GPCD} & \bf{43.03$\pm$0.29} & \bf{58.69$\pm$0.07} & \bf{61.52$\pm$0.97} & \bf{46.54$\pm$0.66} & \bf{61.31$\pm$0.18} & \bf{63.73$\pm$1.01}\\

			\hline
		\end{tabular}
	\end{center}
        \label{table:res2}
\end{table*}

\begin{figure*}[h]
	\centering
	\subfigure[Training: Positive sentiment]{
			\includegraphics[width=0.32\linewidth]{./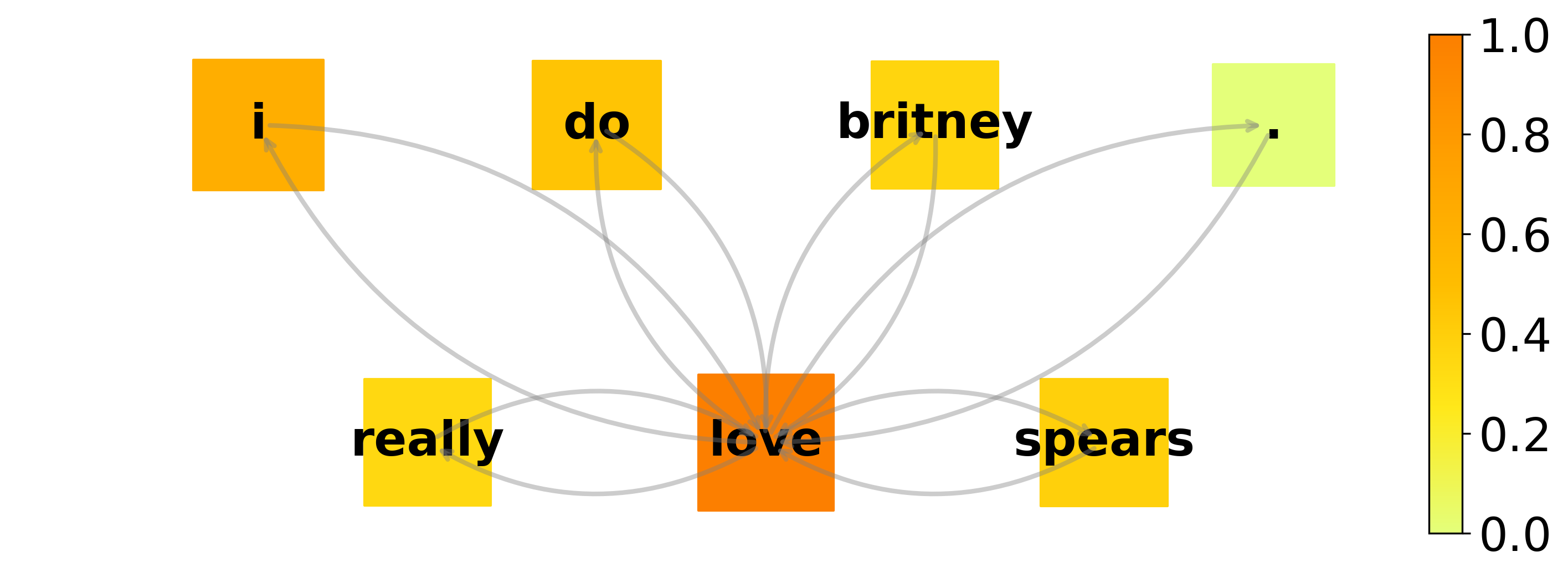}
	}%
	\subfigure[Training: Neutral sentiment]{
			\includegraphics[width=0.32\linewidth]{./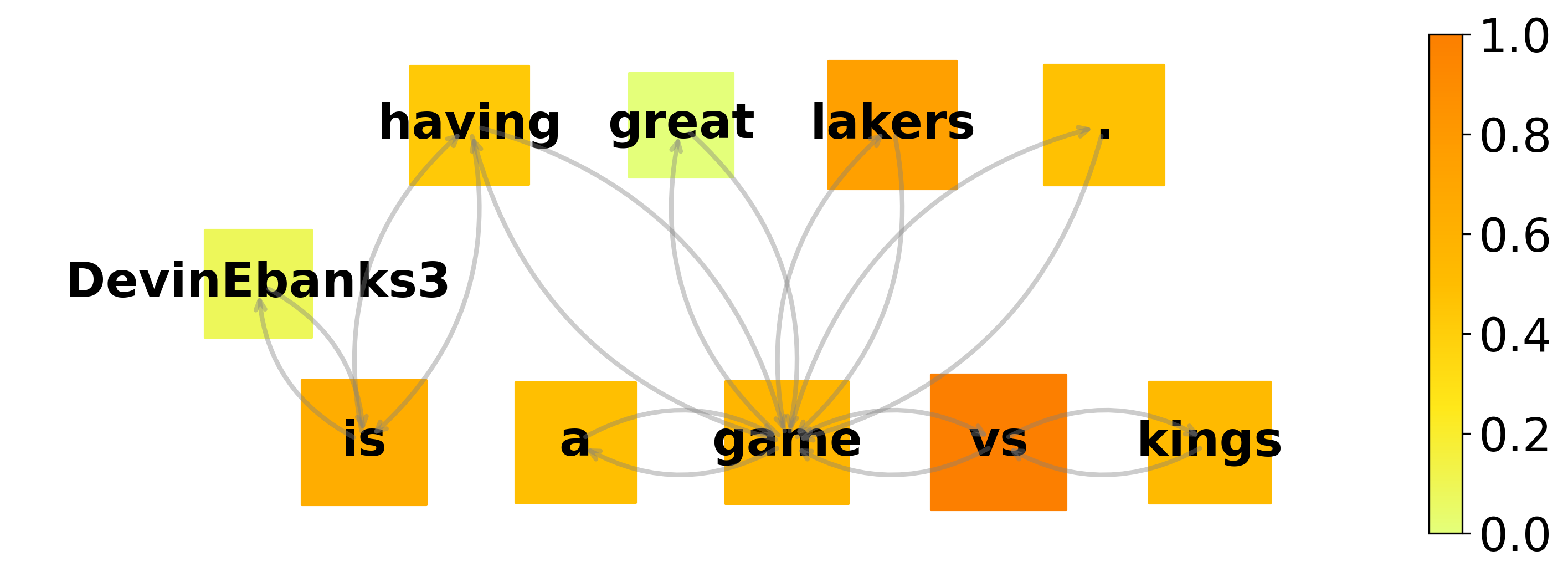}
	}%
	\subfigure[Training: Negative sentiment]{
			\includegraphics[width=0.32\linewidth]{./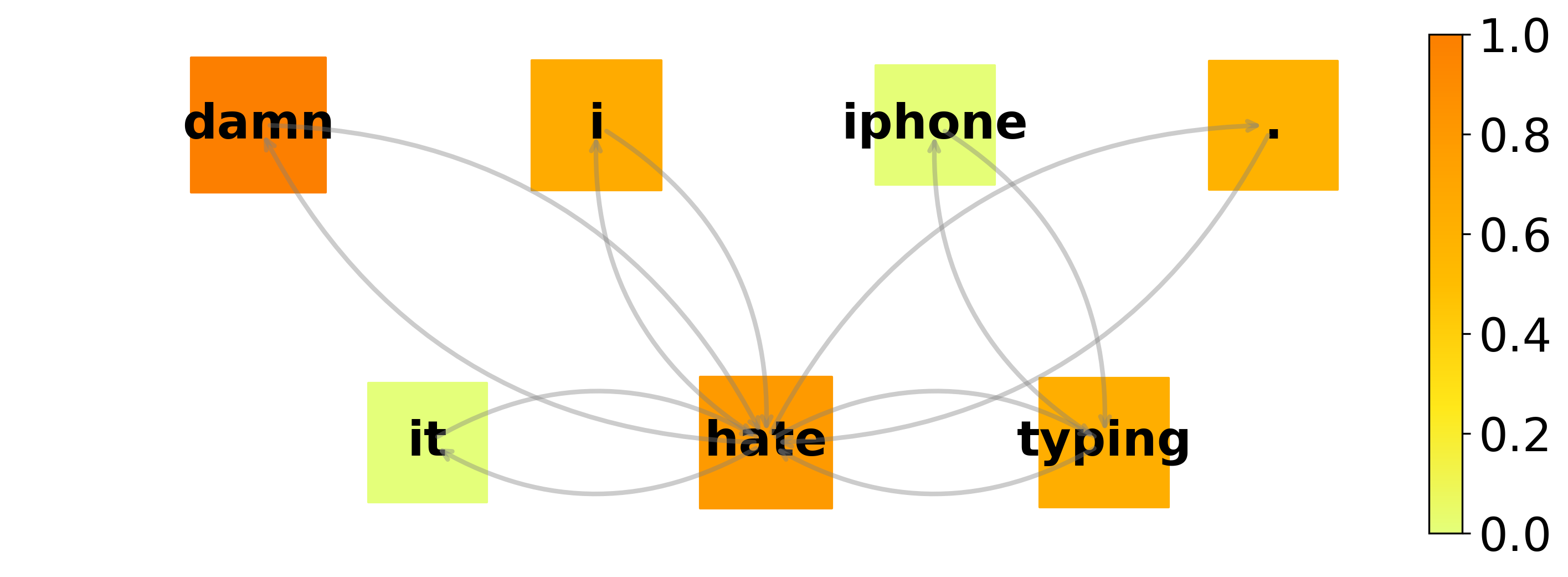}
	}%
        \vspace{-10pt}
	\subfigure[Testing: Positive sentiment]{
			\includegraphics[width=0.32\linewidth]{./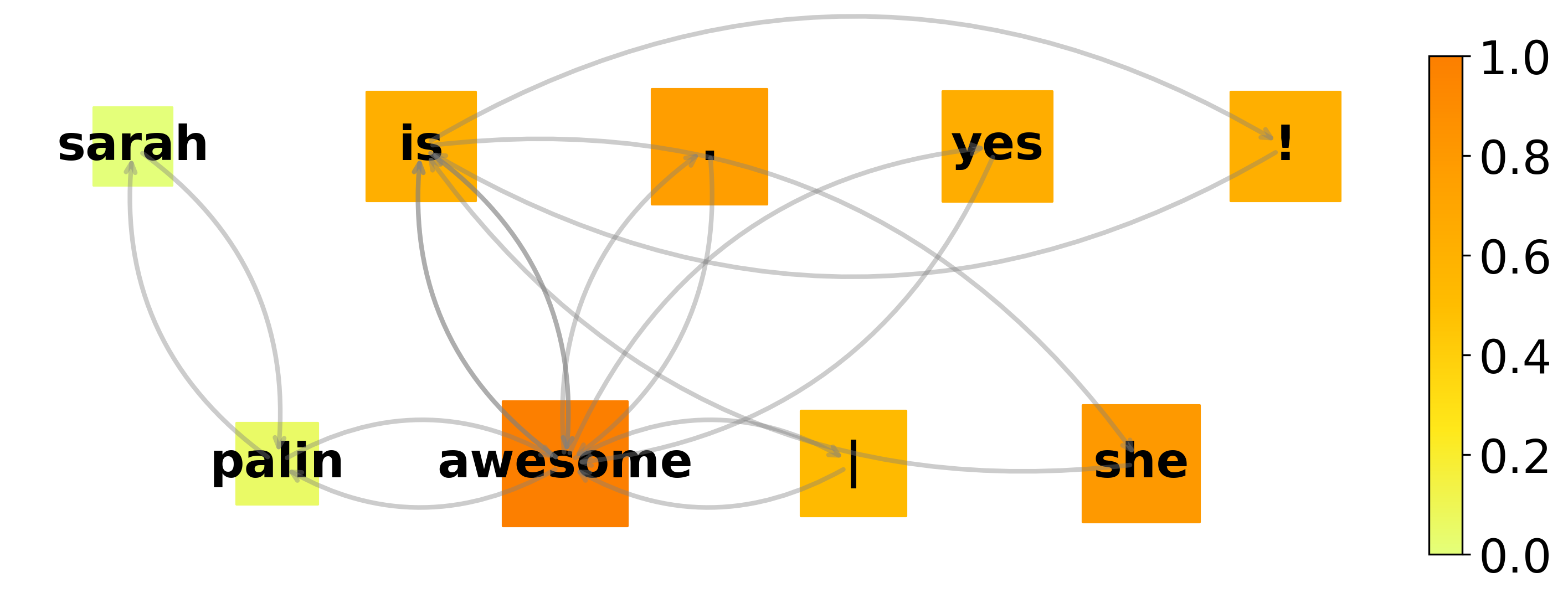}
	}%
	\subfigure[Testing: Neutral sentiment]{
			\includegraphics[width=0.32\linewidth]{./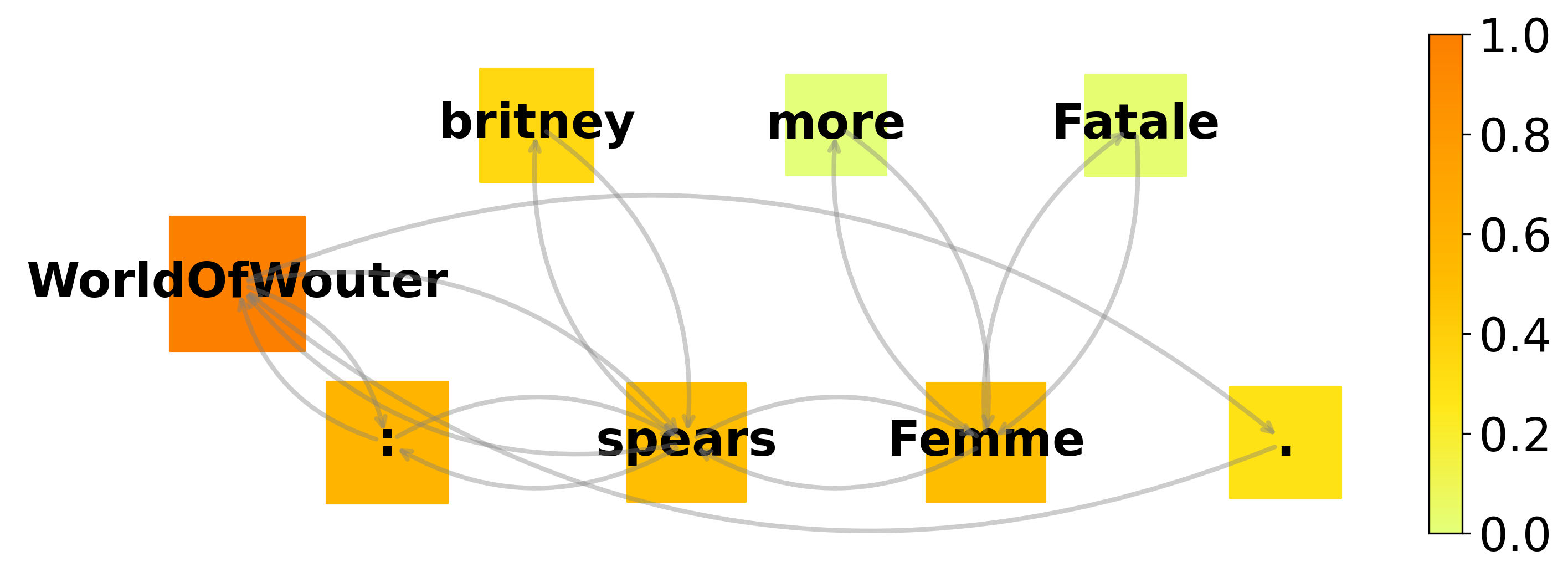}
	}%
	\subfigure[Testing: Negative sentiment]{
			\includegraphics[width=0.32\linewidth]{./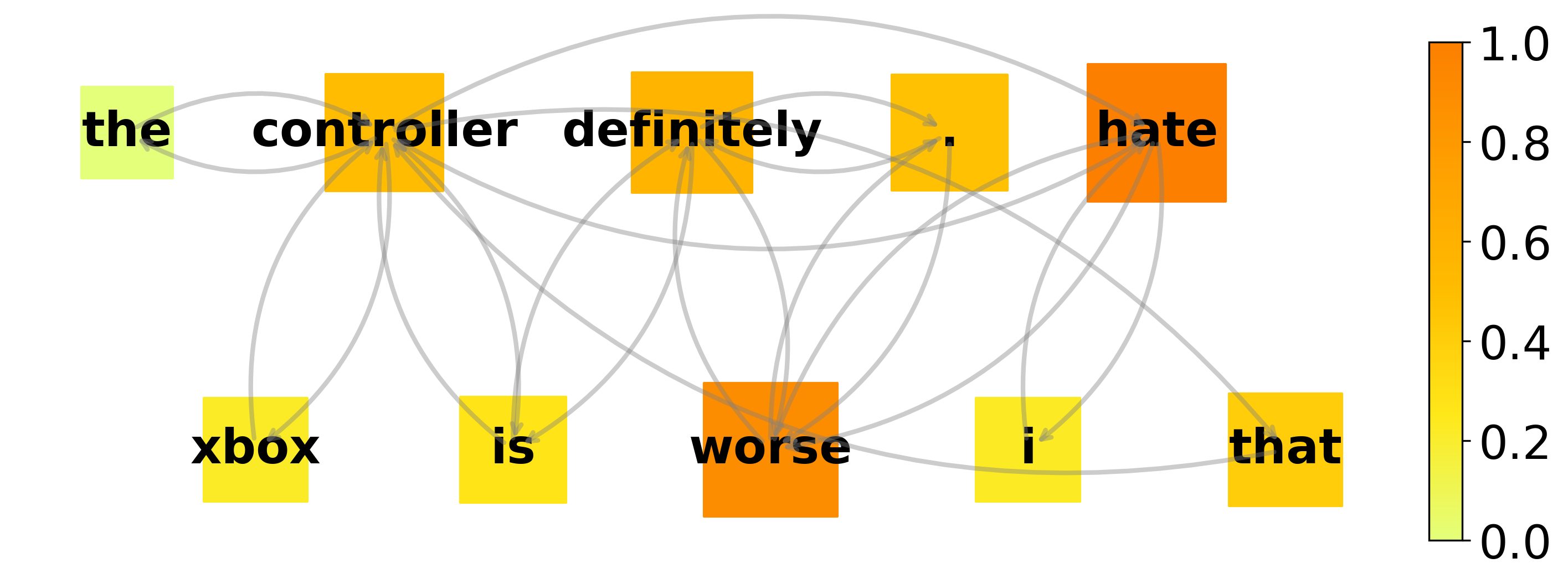}
	}%
 
	\centering
	\caption{Visualization of the node-level prediction vector on the Graph-Twitter dataset. Each graph represents a sentence. The subplot titles indicate the sentiment type corresponding to each sample.
}
	\label{fig:sentiment}
\end{figure*}

\begin{theorem} 
Given conditions that: 1) $\widetilde{Y}$ is not fully decided with $G^{*}$, i.e., $P(\widetilde{Y}=\widetilde{Y}_{i}|G^{*}=G_{i}^{*})\neq 1, \forall i \in \{1,2,...,N\}$; 2) $K = 2$, i.e., there are two labels attached with each sample, define a loss $\mathcal{L}_{1}$ as follows:
\begin{gather}
    \mathcal{L}_{1} = \sum_{i=1}^{N} \sum_{k=1}^{K} \mathcal{H}\Big( \text{epow}(f(G_{i}^{*}),\lambda), Y_{i}^{[k]}\Big), 
\end{gather}
where $\lambda > 1$, the form of $\text{epow}(\cdot)$ has already discussed below Equation \ref{eq:lv}. If $\mathcal{L}_{1}$ attains its respective minimum value, then we can conclude that the following loss $\mathcal{L}_{2}$ also reaches minimal:
\begin{gather}
    \mathcal{L}_{2} = \sum_{i=1}^{N} \mathcal{H}\Big( f(G_{i}^{*}), Y_{i}^{*}\Big), 
\end{gather}
i.e., the conventional cross-entropy loss that using only $Y^{*}$ as label also attains its minimum value.
\label{thm:leq}
\end{theorem}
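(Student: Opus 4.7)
The plan is to reduce the minimization of $\mathcal{L}_1$ to a per-$G^*$ optimization and then use the convexity of $x \mapsto x^\lambda$ for $\lambda > 1$ to pin down a unique optimizer. I first expand the cross-entropy: writing $p_i = f(G_i^*)$ as a probability vector and letting $y_i^{[k]}$ denote the class index of the one-hot label $Y_i^{[k]}$, the term $\mathcal{H}\bigl(\text{epow}(f(G_i^*),\lambda), Y_i^{[k]}\bigr)$ collapses to $-\log e^{p_{i,y_i^{[k]}}^{\lambda}} = -p_{i,y_i^{[k]}}^{\lambda}$. Using $K=2$ and denoting the two label class indices attached to sample $i$ by $y_i^*$ and $\widetilde{y}_i$, minimizing $\mathcal{L}_1$ is equivalent to maximizing $\sum_i \bigl(p_{i,y_i^*}^{\lambda} + p_{i,\widetilde{y}_i}^{\lambda}\bigr)$.

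Since $f$ is a deterministic function of $G^*$, all samples sharing a common value $g$ must receive the same prediction $p(g)$. I would therefore group samples by $G^*$. For each group indexed by $g$, letting $q_d = P(\widetilde{Y} = d \mid G^* = g)$ denote the conditional distribution of the noisy label, the contribution reduces, up to a positive multiplicative factor, to
\[
\sum_d r_d \, p(g)_d^{\lambda}, \qquad r_d \;=\; \mathbf{1}[d = y^*(g)] + q_d.
\]
The non-degeneracy assumption $P(\widetilde{Y} = \widetilde{Y}_i \mid G^* = G_i^*) \neq 1$ for every $i$ translates to $q_d < 1$ for every $d$: otherwise some $d'$ would carry all the mass of the noisy label given $g$, forcing every sample with $G_i^* = g$ to have $\widetilde{Y}_i = d'$ and violating the assumption. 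Consequently $r_{y^*(g)} \geq 1 > q_d = r_d$ for all $d \neq y^*(g)$, so the argmax of $r$ is unique and equal to $y^*(g)$.

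The heart of the argument is then the simplex maximization. For $\lambda > 1$ and any $p(g)$ on the probability simplex one has $p(g)_d^{\lambda} \leq p(g)_d$, whence
\[
\sum_d r_d \, p(g)_d^{\lambda} \;\leq\; \Bigl(\max_d r_d\Bigr) \sum_d p(g)_d^{\lambda} \;\leq\; \max_d r_d,
\]
with equality in the first step only when $r_d$ is maximal on the support of $p(g)$, and equality in the second only when $p(g)$ is a vertex of the simplex (all mass on a single index). Uniqueness of the argmax of $r$ then forces $p(g)$ to be one-hot at $y^*(g)$, so $f(G_i^*) = Y_i^*$ for every $i$. Substituting into $\mathcal{L}_2$ gives $\sum_i \mathcal{H}(Y_i^*, Y_i^*) = -\log 1 = 0$, which is the global minimum of $\mathcal{L}_2$.

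I expect the main obstacle to be the transition between the sample-wise form of the non-degeneracy assumption and the per-$G^*$ statement needed for the strict inequality $r_{y^*(g)} > \max_{d \neq y^*(g)} r_d$; one must ensure that the assumption, stated only for the realized $\widetilde{Y}_i$, still rules out $q_d = 1$ for every $d$ (including $d$'s that may never appear as $\widetilde{Y}_i$ in a finite sample with that $g$). The convexity-based vertex argument itself is standard once this reduction is in place, and the role of $\lambda > 1$ is precisely to turn the weak inequality $p^{\lambda} \leq p$ into concentration at a unique vertex.
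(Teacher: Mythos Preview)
Your proposal is correct and shares the paper's core mechanism: expand $\mathcal{H}(\text{epow}(\cdot,\lambda),\cdot)$ to $-p^{\lambda}$, then use the strict convexity of $x\mapsto x^{\lambda}$ on $[0,1]$ to force any minimizer of $\mathcal{L}_1$ onto a simplex vertex, and invoke the non-degeneracy assumption to rule out the noisy-label vertex. The difference is organizational. The paper treats each sample separately and runs a three-case split (mass on a candidate index, mass on a non-candidate index, interior of the simplex), showing the last two cases strictly suboptimal via direct Jensen-type inequalities and then arguing informally in the first case that ``one cannot predict $\widetilde{Y}$ from $G^{*}$'' forces $\Psi_{i,d^{*}}=1$. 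Your grouping by $G^{*}$-value, with the weighted objective $\sum_d r_d\,p(g)_d^{\lambda}$ and the two-line chain $\sum_d r_d p_d^{\lambda}\le(\max_d r_d)\sum_d p_d^{\lambda}\le\max_d r_d$, compresses all three cases into one inequality and, more importantly, makes the role of the functional constraint $p_i=f(G_i^{*})$ explicit; this is precisely what is needed to justify the paper's otherwise informal Case-1 step. The subtlety you flag (the theorem's hypothesis concerns the population probability $P(\widetilde{Y}=\cdot\mid G^{*})$, whereas the finite-sum loss depends on empirical frequencies within each $G^{*}$-group) is real and is equally glossed over in the paper's own proof; your formulation at least isolates the gap precisely.
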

The proof can be found in \textbf{Appendix} A.3. Theorem \ref{thm:leq} illustrates that discovering $G^{*}$ allows us to obtain a model equivalent to a GNN trained without noisy labels. Under specific conditions, Theorem \ref{thm:leq} holds true in a broader spectrum of scenarios. We demonstrate this through the following corollary.
\begin{corollary}
    Theorem \ref{thm:leq} will still hold for $K > 2$ if the parameter $\lambda$ within $\mathcal{L}_{1}$ can be set to an arbitrarily large value.
\label{cly:tt}
\end{corollary}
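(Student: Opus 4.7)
The plan is to lift the proof of Theorem~\ref{thm:leq} to $K > 2$ by exploiting the increasingly sharp, vertex-selecting behavior of $\text{epow}(\cdot,\lambda)$ as $\lambda\to\infty$. First, I would isolate precisely where the restriction $K=2$ enters the Appendix~A.3 argument: the natural place is a per-sample comparison between concentrating prediction mass on $Y^{*}_i$ versus distributing it among the noisy candidates. With only one noisy slot per sample, the assumption that $P(\widetilde{Y}=\widetilde{Y}_i\mid G^*=G^*_i)\neq 1$ is enough to show that, aggregated over all samples sharing the same value $g$ of $G^*$, the ground-truth label $y^*(g)$ is the uniquely most-frequent label among the $K$ candidate slots. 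With this strict frequency gap and the convexity of $x^\lambda$ for $\lambda>1$, the per-$g$ minimizer of $\mathcal{L}_1$ on the simplex must be the one-hot at $y^*(g)$, which then implies $\mathcal{L}_2$ is also minimized.

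For $K>2$, the obstacle is that the raw frequency gap may collapse: some incorrect label might happen to appear in every candidate set at $G^*=g$, tying $y^*(g)$ in count and leaving several simplex vertices as candidate minimizers. The plan is to regain strict selection of $y^*(g)$ by sending $\lambda\to\infty$. Concretely, for each value $g$ I would write the per-$g$ loss contribution as $\sum_c n_{g,c}\,\mathcal{H}\!\left(\text{epow}(p,\lambda),e_c\right)$ with $p=f(g)\in\Delta^D$, then show via a compactness/continuity argument on the simplex that the set of minimizers converges as $\lambda\to\infty$ to the set of one-hot vectors supported at $\arg\max_c n_{g,c}$. Because the hypothesis allows $\lambda$ to be taken arbitrarily large, any finite threshold required to break near-ties and drive the minimizer to a vertex is admissible.

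It remains to argue that among the limiting vertex minimizers, $y^*(g)$ is the unique choice, and this is where I expect the main obstacle to lie. The difficulty is that condition~1 in its stated form only rules out the entire noisy-label tuple from being deterministic given $G^*$, not each coordinate individually, so the naive frequency count $n_{g,c}$ can in principle match $n_{g,y^*(g)}=N_g$ for an incorrect $c$. To close the gap I would refine the use of condition~1: the non-degeneracy of the conditional distribution of $\widetilde{Y}$ given $G^*$ guarantees that there exists at least one incorrect label whose slot-frequency is strictly below $N_g$, so the joint argmax across labels still singles out $y^*(g)$ provided we sum per-sample contributions weighted by $p_c^\lambda$ rather than compare counts coordinate-wise. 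Combining this with the vertex-selection limit from the previous step yields that, for $\lambda$ large enough, the unique minimizer of $\mathcal{L}_1$ per $g$ is the one-hot at $y^*(g)$, after which the reduction to $\mathcal{L}_2$ reaching its minimum is the same final step as in the proof of Theorem~\ref{thm:leq}.
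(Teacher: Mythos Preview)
Your core strategy matches the paper's: drive $\lambda\to\infty$ so that $(\Psi_{i,d})^\lambda\to 0$ whenever $\Psi_{i,d}<1$, forcing the minimizer of $\mathcal{L}_1$ to a simplex vertex, and then invoke the analysis of Theorem~\ref{thm:leq} (condition~1) to argue the selected vertex must be $y^*(g)$. The paper's own proof is a two-sentence version of exactly this: it states $\lim_{\lambda\to\infty}(\Psi_{i,d})^\lambda=0$ for $\Psi_{i,d}<1$ and then simply says ``the analysis from the proof of Theorem~\ref{thm:leq} indicates that at this point, the loss can only reach its minimum value when $\Psi_{i,d^*}=1$.'' Your decomposition into per-$g$ frequency counts $n_{g,c}$ and the compactness/continuity argument on the simplex are a more explicit and more rigorous rendering of the same idea; the paper does not spell any of that out.

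Where your proposal diverges is that you correctly flag a subtlety the paper passes over: for $K>2$, condition~1 only forbids the full noisy tuple from being deterministic in $G^*$, so some incorrect class $c$ could still appear in every candidate set at $G^*=g$, giving $n_{g,c}=N_g=n_{g,y^*(g)}$ and a genuine tie among vertex minimizers. Your proposed fix, however, does not close this gap: ``summing per-sample contributions weighted by $p_c^\lambda$ rather than comparing counts coordinate-wise'' collapses back to the same count comparison at any vertex, since $p_c^\lambda\in\{0,1\}$ there, so a tying $c$ remains an equally good minimizer of $\mathcal{L}_1$. The paper's proof does not address this either; it implicitly assumes (or would need) a coordinate-wise strengthening of condition~1 so that no single noisy label is a deterministic function of $G^*$. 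So your plan is sound as a faithful expansion of the paper's argument, but the tie-breaking step you single out as ``the main obstacle'' is genuinely unresolved in both your sketch and the paper.
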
 
The proof and required values for the parameter $\lambda$ can be found in \textbf{Appendix} A.4.

\section{Experiments}

\subsection{Comparsion with State-of-the-art methods}

\paragraph{Datasets.}

We created graph PLL datasets based on methods used in PLL of other domains. Specifically, we selected seven widely-recognized datasets: Graph-SST5 \cite{dataset:sst}, Graph-SST5 (OOD) \cite{dataset:sst,DBLP:conf/iclr/WuWZ0C22}, Graph-Twitter \cite{dataset:sst}, Graph-Twitter (OOD) \cite{dataset:sst,DBLP:conf/iclr/WuWZ0C22}, Graph-SST2 \cite{dataset:sst}, COLLAB \cite{dataset:collab} and REDDIT-MULTI-5K \cite{DBLP:conf/aaai/0001RFHLRG19}, then introduced random PLL label (extra PLL labels are randomly selected and added) and competitive PLL label (extra PLL labels are strongly correlated with the ground-truth label to induce confusion) using the method from \cite{mlpll}. This approach to PLL dataset construction follows the common paradigm in PLL researchs \cite{identify3,identify4,avg3}. Please refer to \textbf{Appendix} B.2 for details.

\begin{figure}
\vspace{-9pt}
	\centering
	\includegraphics[width=1\linewidth]{./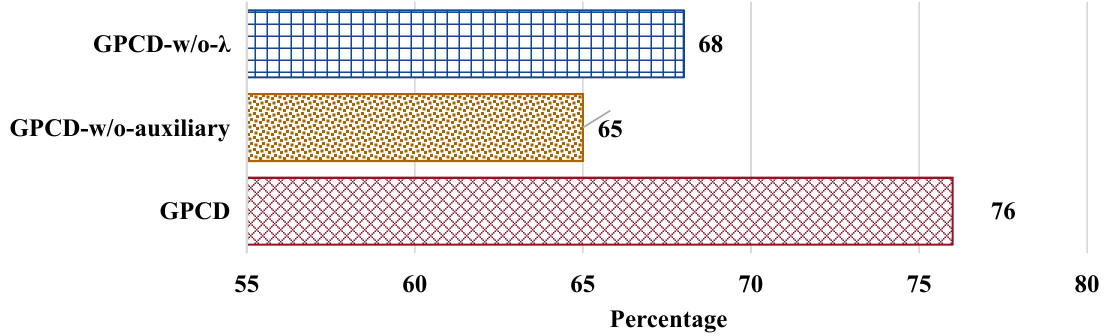}
    \vskip -0.1in
    \caption{Statistics of the focus ratio on sentiment words for different methods.}
    \label{fig:percentage}
    \vskip -0.1in
\end{figure}


\begin{figure}
	\centering
	\subfigure[ARMA(Train)]{
		\begin{minipage}[t]{0.27\linewidth}
			\centering
			\includegraphics[width=1\linewidth]{./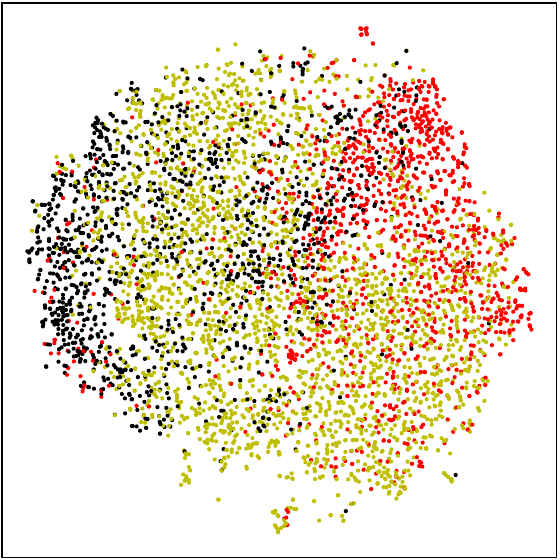}
		\end{minipage}%
	}%
 	\subfigure[ARMA(Test)]{
		\begin{minipage}[t]{0.27\linewidth}
			\centering
			\includegraphics[width=1\linewidth]{./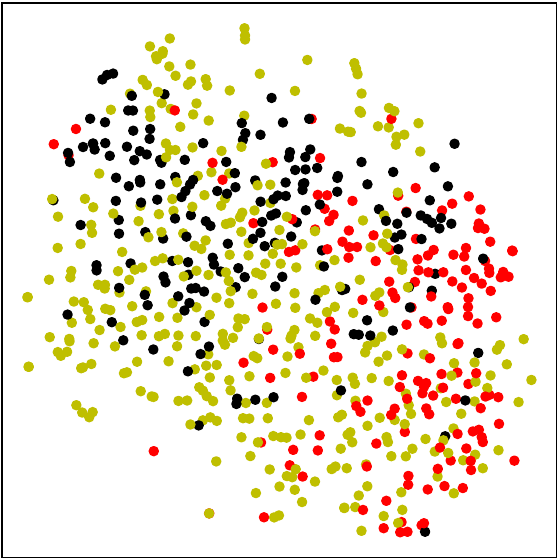}
		\end{minipage}%
	}%
	\subfigure[PICO(Train)]{
		\begin{minipage}[t]{0.27\linewidth}
			\centering
			\includegraphics[width=1\linewidth]{./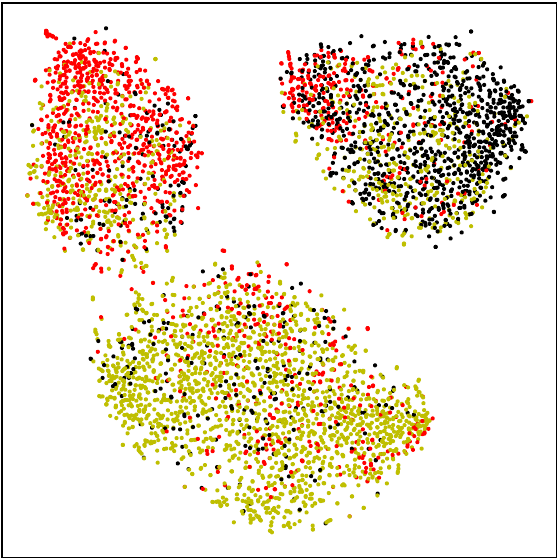}
		\end{minipage}%
	}%
    \\
 	\subfigure[PICO(Test)]{
		\begin{minipage}[t]{0.27\linewidth}
			\centering
			\includegraphics[width=1\linewidth]{./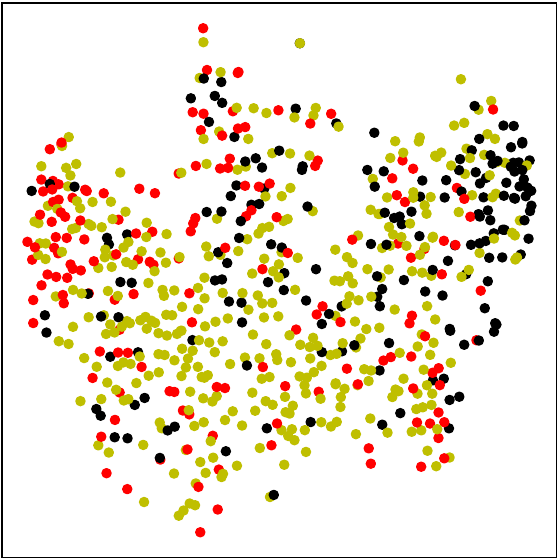}
		\end{minipage}%
	}%
	\subfigure[GPCD(Train)]{
		\begin{minipage}[t]{0.27\linewidth}
			\centering
			\includegraphics[width=1\linewidth]{./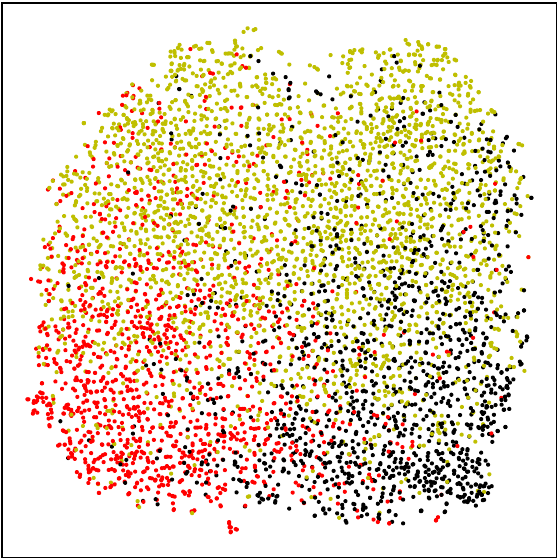}
		\end{minipage}%
	}%
 	\subfigure[GPCD(Test)]{
		\begin{minipage}[t]{0.27\linewidth}
			\centering
			\includegraphics[width=1\linewidth]{./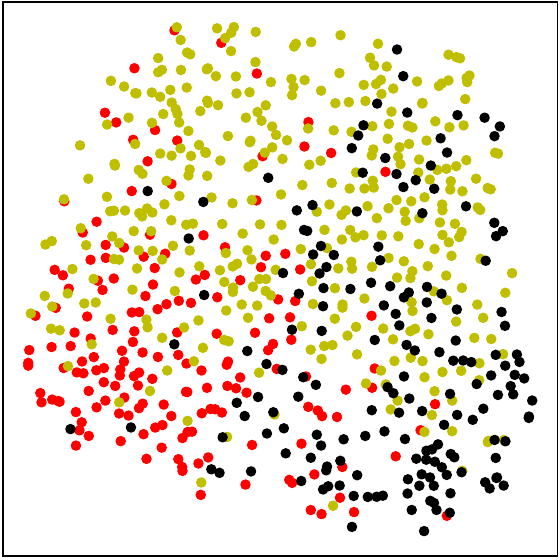}
		\end{minipage}%
	}%
	\centering

        \vskip -0.1in
 
	\caption{T-SNE visualization of the graph features on Graph-Twitter with random label noise. The first and second rows represent the features extracted from the training set and test set, respectively.}
	\label{fig:tsne}
\vspace{-10pt}
\end{figure}

\begin{figure}
	\centering
	\subfigure[Train]{
		\begin{minipage}[t]{0.45\linewidth}
			\centering
			\includegraphics[width=1\linewidth]{./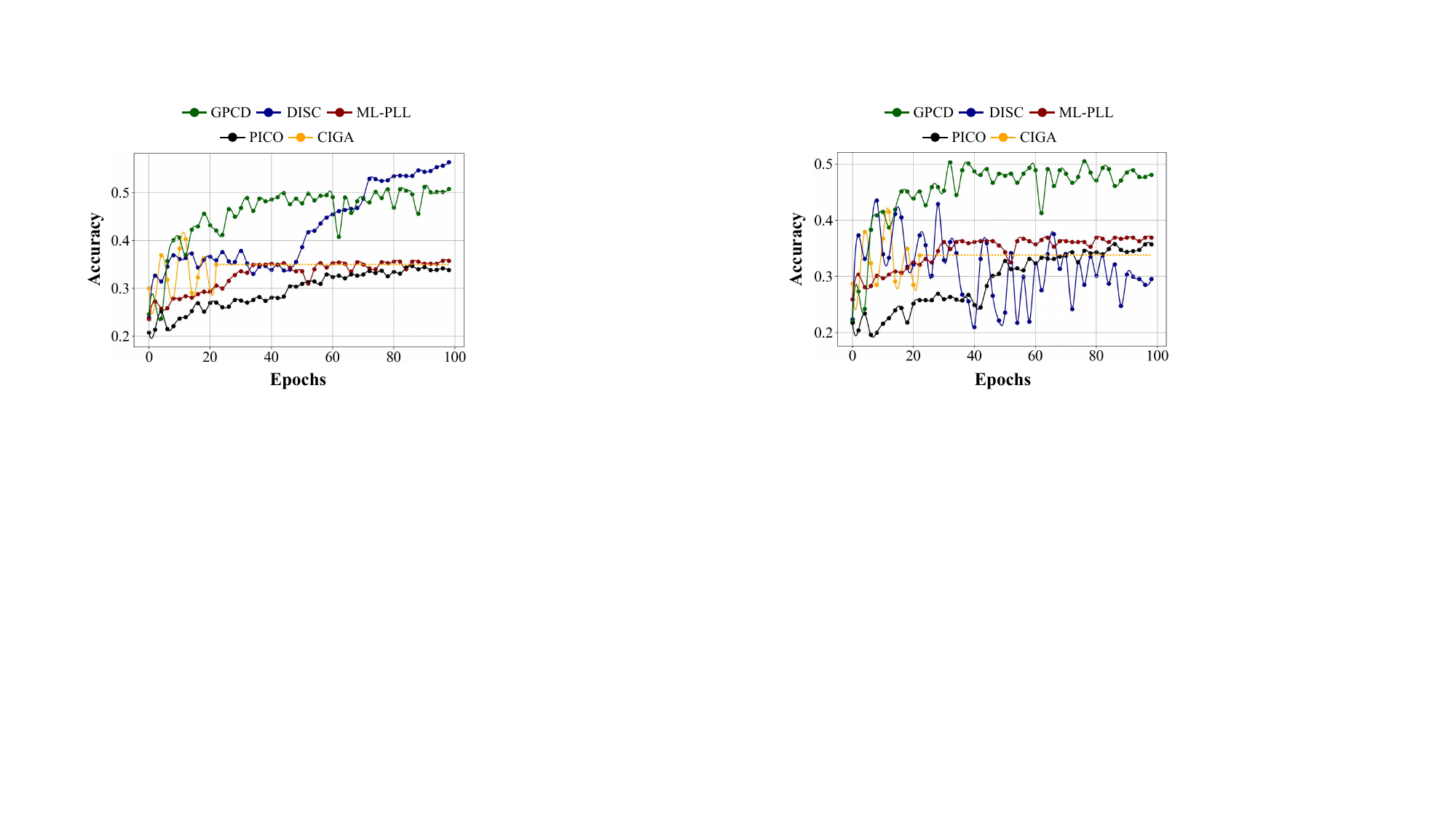}
        \vskip -0.1in
               \label{fig:conv-a}
		\end{minipage}%
	}%
	\subfigure[Test]{
		\begin{minipage}[t]{0.45\linewidth}
			\centering
			\includegraphics[width=1\linewidth]{./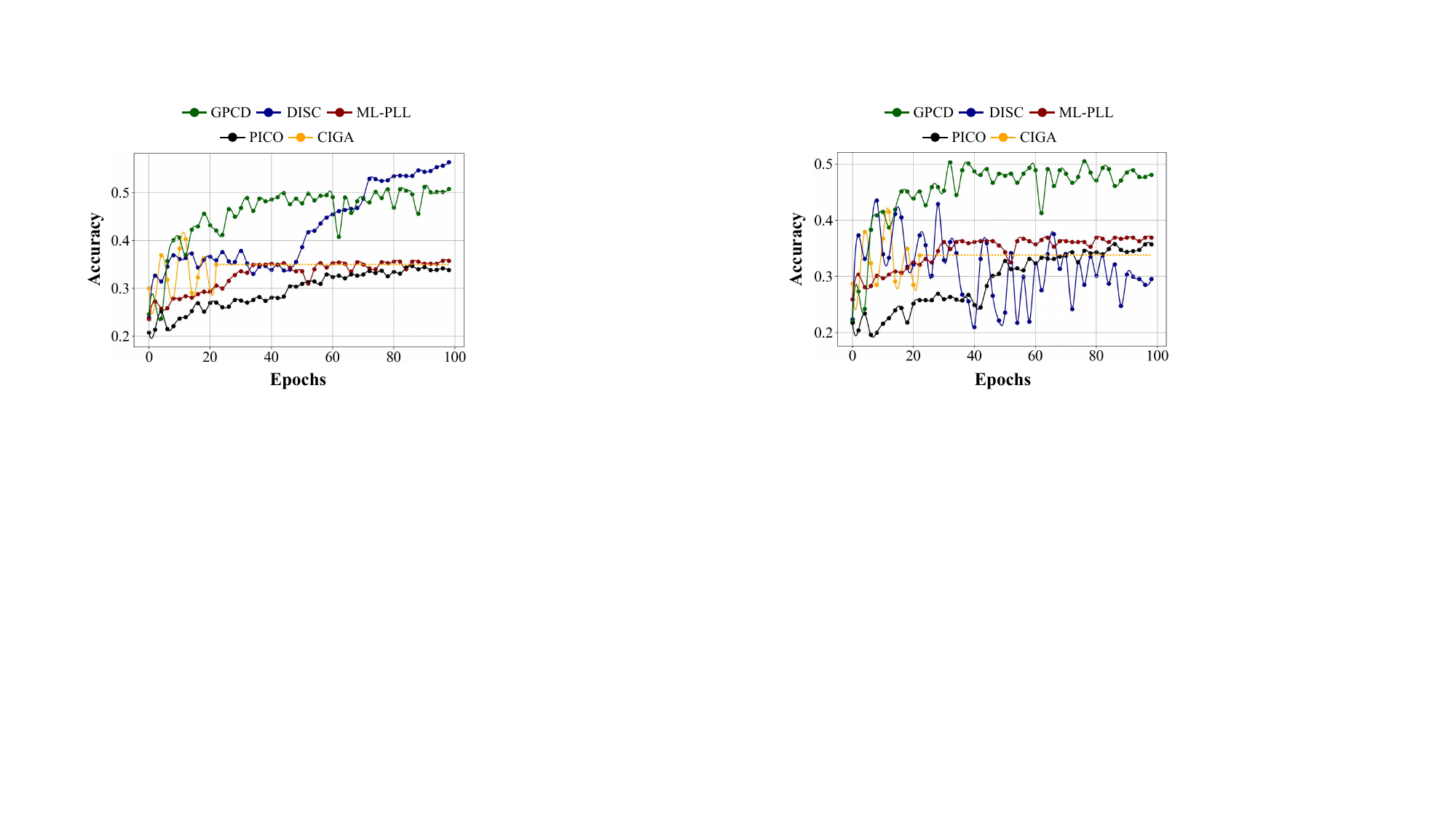}
        \vskip -0.1in
               \label{fig:conv-b}
		\end{minipage}%
	}%

	\centering
      \vskip -0.1in
 
	\caption{Comparison of GPCD's performance on the REDDIT-MULTI-5K dataset with other methods along the training procedure.}
     \vskip -0.1in

	\label{fig:conv}
\end{figure}


\paragraph{Settings.}
We compare GPCD with six baseline methods, including 1) graph causal learning methods: DIR \cite{DBLP:conf/iclr/WuWZ0C22}, CIGA \cite{ciga}, DISC \cite{DBLP:conf/nips/Fan0MST22}; 2) PLL methods: PICO \cite{pico}, ML-PLL \cite{mlpll}; 3) conventional GNN method ARMA \cite{arma}; 4) Graph representation learning method specifically designed for handling noisy data in labels, HTML \cite{html}. DEER \cite{10543125} that mentioned within the related works is not compared as none official implementation of this method is found. To ensure a fair comparison, we employed the same GNN backbone and dataset settings for all methods. Please refer to \textbf{Appendix} B. for details.
To validate the effectiveness of $\mathcal{L}_{v}$ and auxiliary training, we conducted two ablation experiments. For GPCD-w/o-$\lambda$, we set the $\lambda$ value to 1, and the function's concavity was removed according to Theorem \ref{thm:leq}. For GPCD-w/o-auxiliary, we eliminated the extraction of potential causes, and potential causes were no longer utilized as auxiliary guidance during the training process.

\paragraph{Results.}
Experimental results within Tables \ref{table:res1} and \ref{table:res2} show that GPCD outperforms baseline methods, demonstrates strong generalization on OOD data, and performs well with label noise. In the ablation experiments, GPCD's performance was superior to GPCD-w/o-$\lambda$ and GPCD-w/o-auxiliary, demonstrating the effectiveness of our designs.  We further discuss computational complexity within \textbf{Appendix} B.6.

\subsection{In-depth Study}

\paragraph{Analysis on Causality of Representations.}
In Figure \ref{fig:sentiment}, we visualize the absolute value of the output of $\bar{\delta}(\cdot)$ according to each node in the Graph-Twitter dataset. This highlights the words our model relies on for final predictions. It can be observed that GPCD focuses more on sentiment-related words. As Graph-Twitter is a sentiment analysis dataset, such a phenomenon demonstrates its ability to better capture causal information. We further randomly selected 100 samples from Graph-Twitter and calculated the ratio of sentiment-related words as the most focused nodes by GPCD and the ablation versions of it. The results are demonstrated within Figure \ref{fig:percentage}, which shows that GPCD more accurately identifies sentiment words.




\paragraph{Feature Visualization.}
Figure \ref{fig:tsne} visualizes graph features from the Graph-Twitter dataset with random label noise using t-SNE \cite{tsne}. We compare three methods: a) ARMA, which treats labels equally; b) PICO, a partial-label method; and c) our method, GPCD. ARMA's features are mixed in feature space. PICO forms three clusters on the training set, but the categories within each cluster are still mixed, and its features lack generalization, differing between training and test sets. In contrast, GPCD clusters features into three distinct groups on the training set, maintaining the same distribution on the test set. GPCD effectively captures features from the graph that are closely related to the ground-truth label.

\paragraph{Training Procure Analysis.}
In Figure \ref{fig:conv}, we conducted a comprehensive comparative analysis of the performance of various models throughout the entire training process. The results reveal that the GPCD model surpasses other models in terms of performance, exhibiting higher learning efficiency and enhanced stability. It is noteworthy that although DISC achieved better results in the subsequent learning phase on the training set, only GPCD demonstrates optimal performance on the test set. This substantiates that our method not only eliminates interference information but also consistently learns stably.


\section{Conclusion}
We introduce a novel method, GPCD, with the goal of enabling GNN models to effectively learn discriminative information in the context of Partially Labeled Learning. GPCD achieves this by effectively combating label noise in the data by identifying potential causes within the graph data. We provide theoretical analysis to substantiate the rationale behind GPCD's design. Additionally, multiple experiments demonstrate the superiority of the GPCD method.


\bibliography{aaai25}

\newpage
\appendix
\onecolumn
\clearpage
\section{A. Proofs}

\subsection{A.1. Proof of Theorem \ref{thm:lag}}
\label{prf:lag}
In order to prove the theorem, our initial step involves showing that $G^{*} \subseteq \Big(\big( \bigcup_{i \in \mathcal{I}^{p}} \Gamma^{p}_{i} \big) \cap G \Big)$. 
To achieve such a goal, it suffices to demonstrate that any 
$\Gamma^{p}_{i} \in G^{*}$ is a potential cause of $Y$. As delineated in Definition \ref{def:graph_causal_subset}, any subset of $G^{*}$ holds causal relationship with $Y$ and, therefore, is a genuine cause \cite{pearl2000models} of $Y$. Thus, According to the definition of genuine cause, $\Gamma^{p}_{i}$ is bound to be a potential cause of $Y$. Hence, we have substantiated the assertion that $G^{*} \subseteq \Big(\big( \bigcup_{i \in \mathcal{I}^{p}} \Gamma^{p}_{i} \big) \cap G \Big)$ is accurate.

Next, we will prove that if $\widetilde{Y}$ does not possess any causal relationships with $G^{*}$, $G^{*} = \Big(\big( \bigcup_{i \in \mathcal{I}^{p}} \Gamma^{p}_{i} \big) \cap G \Big)$. We have already demonstrated that $G^{*} \subseteq \Big(\big( \bigcup_{i \in \mathcal{I}^{p}} \Gamma^{p}_{i} \big) \cap G \Big)$ holds; thus, we only need to establish that for the case when there exist no relationships between $\widetilde{Y}$ and $G$, any $\Gamma^{p}_{i} \in \Big(\big( \bigcup_{i \in \mathcal{I}^{p}} \Gamma^{p}_{i} \big) \cap G \Big)$ satisfies $\Gamma^{p}_{i} \in G^{*}$.

We first propose that if $\widetilde{Y}$ does not possess any causal relationships with $G$, then the following equation holds:
\begin{equation}
    \Gamma^{p}_{j} \perp \!\!\!\perp \widetilde{Y}, \forall \Gamma^{p}_{j} \in  \Big(\big( \bigcup_{i \in \mathcal{I}^{p}} \Gamma^{p}_{i} \big) \cap G \Big).
\label{eq:gammaindwy}
\end{equation}
If not, then there will exist certain $\Gamma^{p}_{j}$ that satisfied: 1) $\Gamma^{p}_{j} \not \perp \!\!\!\perp \widetilde{Y} | \varnothing $, and 2) $\Gamma^{p}_{j} \perp \!\!\!\perp \widetilde{Y} | G$. Then, there will also exist $\Gamma^{p}_{j'}$ that satisfied: 1) $\Gamma^{p}_{j'} \not \perp \!\!\!\perp \widetilde{Y} | \varnothing $, and 2) $\Gamma^{p}_{j'} \perp \!\!\!\perp \widetilde{Y} | \hat{G}$, where $\hat{G} \subseteq G$ and the data within $\hat{G}$ recorded before $\Gamma^{p}_{j'}$, otherwise we can switch $\Gamma^{p}_{j'}$ with the element within $\hat{G}$ that recorded after $\Gamma^{p}_{j'}$. Then, according to genuine causation with temporal information \cite{pearl2000models}, $\Gamma^{p}_{j}$ is a genuine causation of $\widetilde{Y}$, which is contrary to $\widetilde{Y}$ does not possess any causal relationships with $G$. Therefore, under the given condition, Equation \ref{eq:gammaindwy} holds.

Subsequently, as $\Gamma^{p}_{j}$ is the potential cause of $Y$, therefore we have the following equation holds for all contexts:
\begin{equation}
    \Gamma^{p}_{j} \not \perp \!\!\!\perp Y, \forall \Gamma^{p}_{j} \in  \Big(\big( \bigcup_{i \in \mathcal{I}^{p}} \Gamma^{p}_{i} \big) \cap G \Big).
\label{eq:gammanindy}
\end{equation}
Here, context, as already explained within Definition \ref{def:poc}, means a set of variables tied to specific values. Based on Equation \ref{eq:gammaindwy} and \ref{eq:gammanindy}, and $Y = \{Y^{*},\widetilde{Y}\}$, we can conclude that the following equation holds for all context:
\begin{equation}
    \Gamma^{p}_{j} \not \perp \!\!\!\perp Y^{*}, \forall \Gamma^{p}_{j} \in  \Big(\big( \bigcup_{i \in \mathcal{I}^{p}} \Gamma^{p}_{i} \big) \cap G \Big).
\label{eq:gammanindy}
\end{equation}
As label $Y^{*}$ are created based on $G$, based on the temporal sequence causal relationship follows \cite{DBLP:conf/uai/PearlR95}, 
$Y^{*}$ won't hold any causal influence on $\Big(\big( \bigcup_{i \in \mathcal{I}^{p}} \Gamma^{p}_{i} \big) \cap G \Big)$. Furthermore, for $\text{PA}(\Gamma^{p}_{i}) \notin G$, i.e., variable that holds causal relationship upon $\Gamma^{p}_{i}$ yet excluded from $G$, we can pick a context $S$ that block all causal roots from $\text{PA}(\Gamma^{p}_{i})$ to $Y^{*}$ other than those went trough $\Gamma^{p}_{i}$, then have $\text{PA}(\Gamma^{p}_{i}) \not \perp \!\!\!\perp Y^{*} | S$ and $\text{PA}(\Gamma^{p}_{i})  \perp \!\!\!\perp Y^{*} | S \cap \Gamma^{p}_{i}$ holds. According to genuine causation with temporal information, $\Gamma^{p}_{i}$ holds causal influence upon $Y^{*}$, therefore $\Gamma^{p}_{i} \in G^{*}$, the theorem is proved.

\subsection{A.2. Proof of Theorem \ref{thm:is_potential_cause}}
\label{prf:is_potential_cause}


The proof of the theorem requires a more detailed analysis of the causal model corresponding to the graph data and labels in the PLL scenario. Therefore, we introduce the Structural Causal Model (SCM)\cite{pearl2000models} to help characterize this causal model.

\paragraph{Structural Causal Model (SCM)\cite{pearl2000models}.} A Structural Causal Model (SCM) is a mathematical framework used to represent and analyze causal relationships within a system. It typically consists of a set of variables and equations that describe how these variables influence each other. SCMs are particularly useful for distinguishing between correlation and causation, allowing researchers to make predictions about the effects of interventions. In an SCM, each variable is either an exogenous variable, which is determined by factors outside the model, or an endogenous variable, whose value is derived within the model based on other variables. The relationships are expressed through structural equations that link each endogenous variable to a function of other variables and possibly some exogenous variables.

In Figure \ref{fig:SCM}, we demonstrated the SCM for the graph data and labels in the PLL scenario. $\bar{\text{PA}}(\cdot)$ denotes the parents of a variable outside $G$. Set $\{\Omega_{i}\}_{i=1}^{r}$ denotes $G^{*}$, with $\Omega_{i}$ denotes a variable within $G^{*}$. Similarly, Set $\{\Omega_{j}\}_{j=r+1}^{r+m}$ denotes $G \setminus G^{*}$. $N$ denotes the noise variables outside of $G$ that may influence $\widetilde{Y}$. Solid arrows represent causal relationships. Double dashed arrows denote confounding arcs, which indicate uncertainty about the existence of causal relationships between variables, and if such causal relationships exist, the direction of causation is unknown. Single dashed arrows represent confounding arcs where the direction of influence is determinable. In the Figure, confounding arcs exist between all elements within set $\{\bar{\text{PA}}(\Omega_{i})\}_{i=1}^{r+m}$ and similarly within set $\{\Omega_{\gamma}\}_{\gamma=1}^{r+m}$. However, it is not feasible to depict all these confounding arcs in the diagram. For simplicity, relationships involving omitted variables, as well as those requiring connections across ellipses in the figure, are not illustrated. Based on Figure \ref{fig:SCM}, we propose the following lemma. 

\begin{figure*}
	\centering
	\begin{minipage}[t]{1\linewidth}
        \centering
        \includegraphics[width=0.7\linewidth]{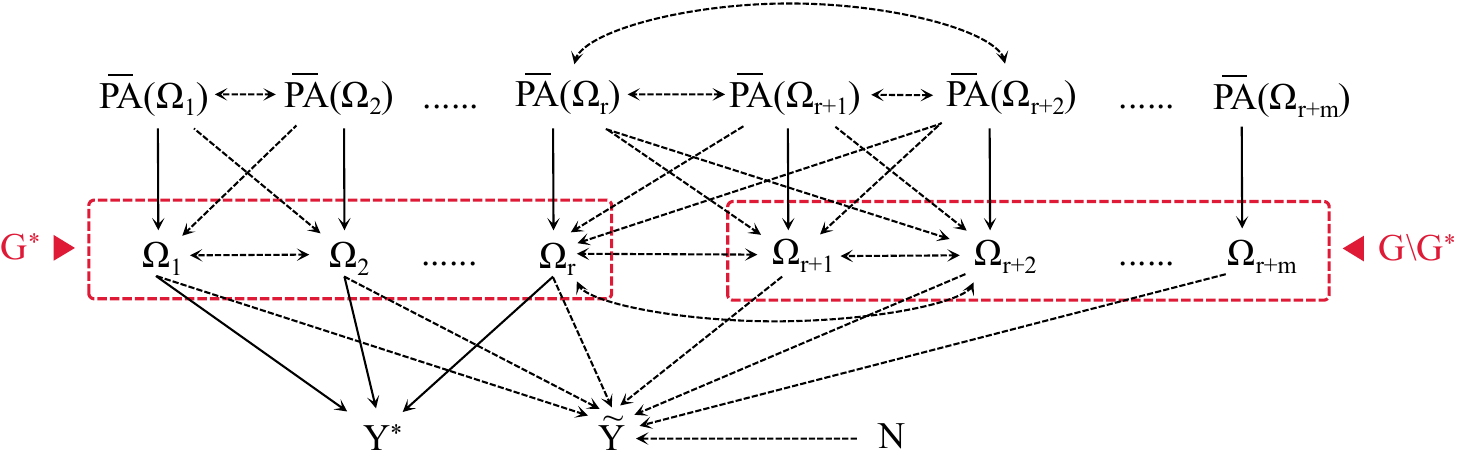}
	\end{minipage}
	\centering
    \vskip -0.05in
	\caption{SCM of the studied scene. For simplicity, relationships involving omitted variables, as well as those requiring connections across ellipses in the figure, are not represented.}
	\label{fig:SCM}
\end{figure*}

\begin{figure*}
	\centering
	\subfigure[]{
		\begin{minipage}[t]{0.7\linewidth}
			\centering
			\includegraphics[width=1\linewidth]{./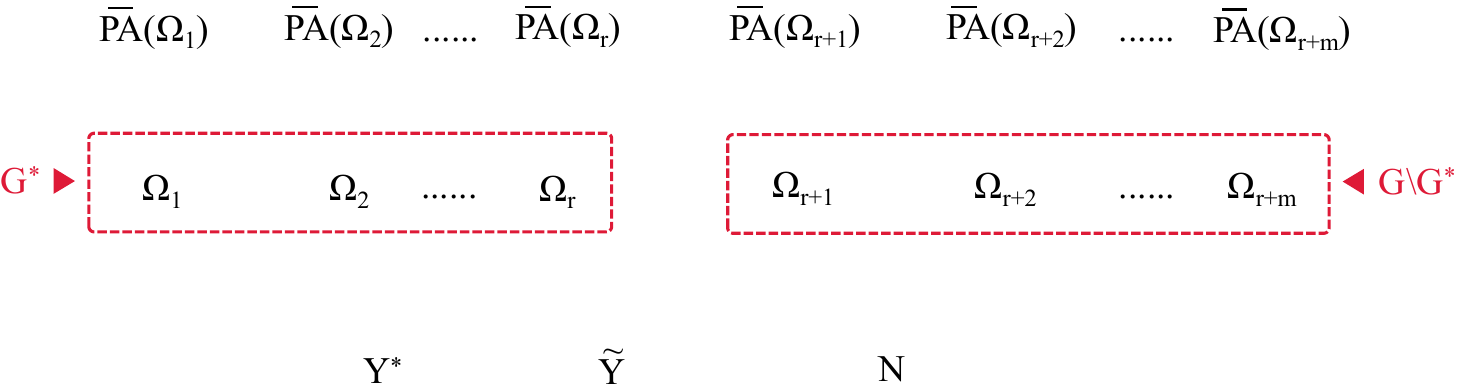}
   \vskip -0.05in
   \label{fig:IC1}
		\end{minipage}%
	}\\%
	\subfigure[]{
		\begin{minipage}[t]{0.7\linewidth}
			\centering
			\includegraphics[width=1\linewidth]{./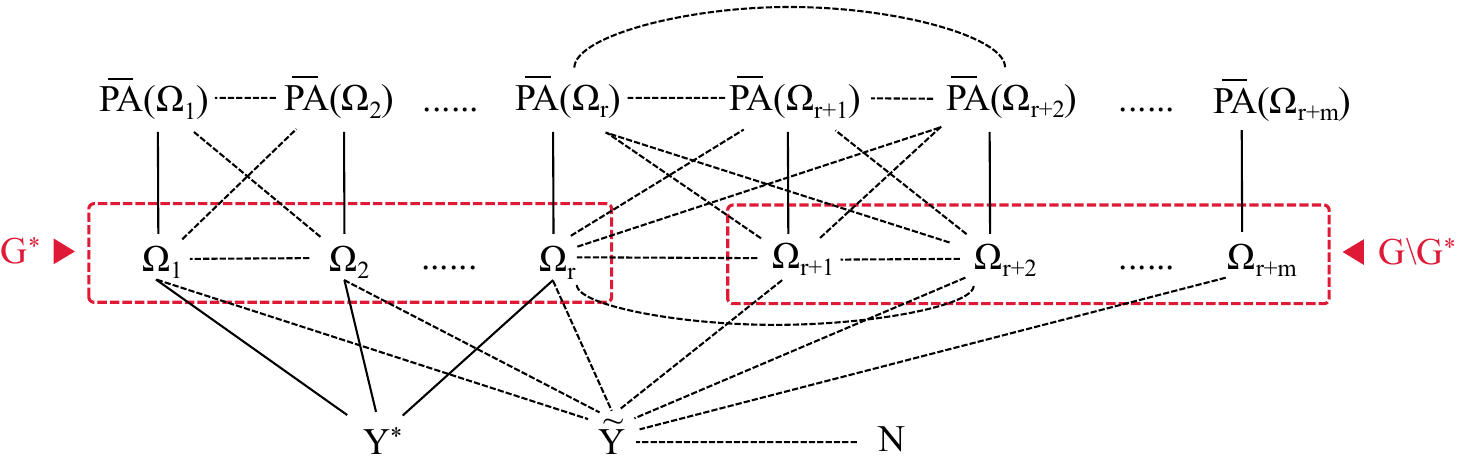}
   \vskip -0.05in
   \label{fig:IC2}
		\end{minipage}%
	}\\%
 	\subfigure[]{
		\begin{minipage}[t]{0.7\linewidth}
			\centering
			\includegraphics[width=1\linewidth]{./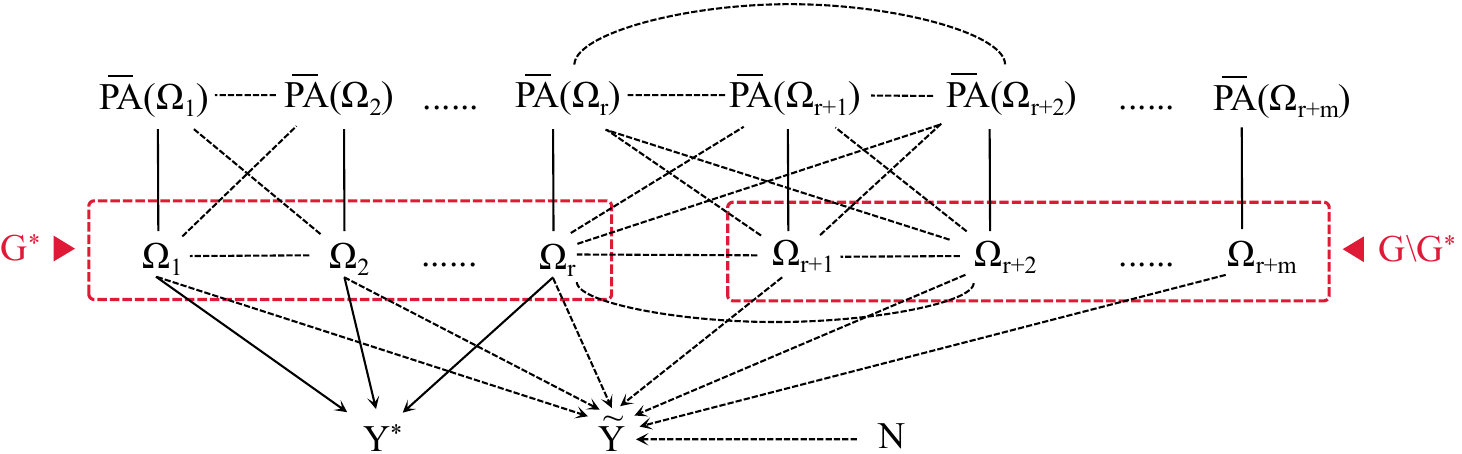}
   \vskip -0.05in
   \label{fig:IC3}
		\end{minipage}%
	}\\%
 	\subfigure[]{
		\begin{minipage}[t]{0.7\linewidth}
			\centering
			\includegraphics[width=1\linewidth]{./fig/SCM4.pdf}
   \label{fig:IC4}
		\end{minipage}%
	}%
	\centering
    \vskip -0.1in
	\caption{The procedure of IC algorithm.}

	\label{fig:IC}
\end{figure*}

\begin{lemma}
    The SCM within Figure \ref{fig:SCM} can represent the causal model in graph datasets under the PLL scenario.
\label{lma:SCM}
\end{lemma}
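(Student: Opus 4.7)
The plan is to justify the SCM in Figure \ref{fig:SCM} by constructing it step by step from the semantics of the PLL graph learning setup, using the IC (Inductive Causation) algorithm of \cite{pearl2000models} as the organizing tool. The four subfigures of Figure \ref{fig:IC} already suggest that the intended proof proceeds in four stages, so I would align the argument with these stages and, for each, show both (i) which edges/arcs the definition of the PLL scenario forces into the graph and (ii) why no further edges can be excluded a priori.

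First I would fix the variable set of the SCM directly from the problem formulation: the endogenous variables $\{\Omega_i\}_{i=1}^{r}$ constituting $G^{*}$, the endogenous variables $\{\Omega_j\}_{j=r+1}^{r+m}$ constituting $G\setminus G^{*}$, the ground-truth label $Y^{*}$, the noisy label $\widetilde{Y}$, and the exogenous noise $N$ together with the external parents $\bar{\mathrm{PA}}(\Omega_i)$ of each graph variable. This establishes the vertex set of Figure \ref{fig:IC4}. Stage (a) of the IC algorithm then posits an undirected edge between every pair of variables that are statistically dependent under every context; I would argue that, because the components of $G$ are arbitrarily interconnected and their external parents may share hidden common causes, every pair inside $\{\bar{\mathrm{PA}}(\Omega_i)\}$ and every pair inside $\{\Omega_\gamma\}$ must start out connected, and that $Y^{*}$ must be linked to every $\Omega_i\in G^{*}$ by Definition \ref{def:graph_causal_subset}, while $\widetilde{Y}$ must be linked to $N$ and possibly to parts of $G$ (since the PLL definition does not forbid label noise from depending on the graph).

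Next, stages (b)–(c) of IC orient edges using v-structures and the acyclicity/temporal constraints already invoked in the proof of Theorem \ref{thm:lag}. The key observations I would use are: (1) labels are generated after the graph, so any edge between an $\Omega_\gamma$ and $Y^{*}$ or $\widetilde{Y}$ must point into the label; (2) every $\Omega_i\in G^{*}$ has, by Definition \ref{def:graph_causal_subset}, a genuine causal influence on $Y^{*}$, forcing solid directed arrows $\Omega_i\to Y^{*}$; (3) no $\Omega_j\in G\setminus G^{*}$ has a causal influence on $Y^{*}$ (again by Definition \ref{def:graph_causal_subset}), so any residual dependence between $\Omega_j$ and $Y^{*}$ must be attributed to a confounding path — hence a double-dashed confounding arc rather than a directed edge; (4) the external parents $\bar{\mathrm{PA}}(\Omega_i)$ precede the graph variables in time, so edges between them and the $\Omega$'s must point into $\Omega$, giving single-dashed arcs whose direction is determinable. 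Stage (d) then simply records the remaining ambiguous adjacencies — e.g., among $\{\bar{\mathrm{PA}}(\Omega_i)\}$ or between $\widetilde{Y}$ and arbitrary graph components — as double-dashed confounding arcs, which matches Figure \ref{fig:SCM}.

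Finally, I would close the argument by verifying two directions: every PLL graph instance admits an SCM of the form in Figure \ref{fig:SCM} (shown by the construction above, with unused arcs instantiated as independences), and every SCM of that form is consistent with the PLL definition (shown by checking that Definitions \ref{def:poc} and \ref{def:graph_causal_subset} remain satisfied under the allowed edge patterns). I expect the main obstacle to lie in the middle stage: rigorously justifying the distinction between single-dashed (oriented) and double-dashed (unoriented) confounding arcs without inadvertently ruling out pathological but permissible PLL scenarios — in particular, arguing that temporal precedence of $\bar{\mathrm{PA}}(\cdot)$ over $G$ is a safe assumption here, while refusing to impose any such precedence among the exogenous parents themselves, which is what forces the picture in Figure \ref{fig:SCM} to be the most general admissible SCM rather than an over-specified one.
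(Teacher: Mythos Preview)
Your proposal is correct and follows essentially the same IC-algorithm scaffolding as the paper's proof: both start from the variable set, build the undirected skeleton by conditional-independence tests, and then orient edges to recover Figure~\ref{fig:SCM}. The one noteworthy difference is in the orientation step. You lean primarily on temporal precedence (labels are generated after $G$; external parents $\bar{\mathrm{PA}}(\Omega_i)$ precede $\Omega_i$) to direct the arrows, whereas the paper stays closer to the textbook IC rules and orients $\Omega_i \to Y^{*}$ and $\bar{\mathrm{PA}}(\Omega_i)\to\Omega_i$ by detecting v-structures (e.g., arguing that if $\Omega_1 \perp\!\!\!\perp \Omega_r \mid S$ for some $S$ then $Y^{*}\notin S$, hence $\Omega_1\to Y^{*}\leftarrow\Omega_r$). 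Your temporal argument is arguably cleaner and is already invoked elsewhere in the paper (proof of Theorem~\ref{thm:lag}), while the paper's v-structure argument is more faithful to IC as stated but requires the reader to verify the collider conditions. Your added final paragraph checking both directions (every PLL instance fits the SCM and vice versa) is not in the paper's proof but is a reasonable closing sanity check.
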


\begin{proof}
We utilized the IC algorithm \cite{DBLP:conf/uai/VermaP90} to demonstrate the validity of the proposed SCM. The IC algorithm is an effective method for deriving the structure of an SCM, and it consists of three steps. For step 1, the IC algorithm studies each pair of variables and identifies a set $S$ such that the pair of variables is conditionally independent given $S$. Then, the algorithm constructs an undirected graph where vertices are connected by an edge if and only if no such set $S$ can be identified. For step 2, the algorithm studies each pair of nonadjacent variables that share a common neighbor $c$, check the specified condition $c \in S$: 1) If the condition holds, proceed without any changes; 2) If the condition does not hold, modify the graph by adding arrowheads pointing towards $c$. For step 3, the algorithm studies the resulting partially directed graph, orienting the remaining undirected edges as much as possible while adhering to two constraints:1) Any alternative orientation should not result in the formation of a new v-structure. 2) Any alternative orientation should not create a directed cycle. Next, we will follow these steps to construct the proposed SCM from scratch.

\textbf{Step 1.} We first construct Figure \ref{fig:IC1} that represents all variables, then proceed to establish undirected edges. As the relationships within Set $\{\bar{\text{PA}}(\Omega_{i})\}_{i=1}^{r+m}$ are uncertain, it is impossible to find context $S$ such that $\bar{\text{PA}}(\Omega_{i}) \perp \!\!\!\perp \bar{\text{PA}}(\Omega_{j}) $ holds, $i$ and $j$ can be any possible index. Therefore we need to link all the variables within $\{\bar{\text{PA}}(\Omega_{i})\}_{i=1}^{r+m}$. The same condition holds for $\{\Omega_{\gamma}\}_{\gamma=1}^{r+m}$ and $\{\Omega_{\gamma}\}_{\gamma=1}^{r+m} \cap \{\bar{\text{PA}}(\Omega_{i})\}_{i=1}^{r+m}$, therefore these variables are linked. We link $\widetilde{Y}$ with $\{\Omega_{\gamma}\}_{\gamma=1}^{r+m}$ and $N$ for the same reason. Furthermore, as each $\Omega_{i}$ within $\{\Omega_{i}\}_{i=1}^{r}$ are part of $G^{*}$, therefore is the potential cause of $Y^{*}$, so we can't find any context $S$ that makes $\Omega_{i} \perp \!\!\!\perp Y^{*}$ holds, $i$ can be any index of $\Omega$ within $\{\Omega_{i}\}_{i=1}^{r}$, then we link the corresponding variables. As $\{\bar{\text{PA}}(\Omega_{i})\}_{i=r+1}^{r+m}  \perp \!\!\!\perp Y^{*}| G$ and $\{\bar{\text{PA}}(\Omega_{i})\}_{i=1}^{r+m}  \perp \!\!\!\perp N| G$ holds, these variables will not be linked. The results are demonstrated in Figure \ref{fig:IC2}. Please note that edges involving omitted variables, as well as those requiring connections across ellipses in the figure, are not illustrated. 

\textbf{Step 2.} Similar to the final step previously, we connect variables due to the indeterminate nature of their causal relationships. Consequently, we substitute these connections with double-dashed arrows to denote the uncertainty. Next, as within $\{\Omega_{i}\}_{i=1}^{r}$, if for certain context $S$, $\Omega_{1} \perp \!\!\!\perp \Omega_{r}|S$ holds, then $Y^{*} \notin S$. Therefore, we create directed edges $\Omega_{1} \rightarrow Y^{*} \leftarrow \Omega_{r}$. Similarly, we create $\Omega_{2} \rightarrow Y^{*}, \Omega_{3} \rightarrow Y^{*}, ..., \Omega_{r-1} \rightarrow Y^{*}$. Based on the same approach, we create $\Omega_{r+1} \rightarrow \widetilde{Y}, \Omega_{r+2} \rightarrow \widetilde{Y}, ..., \Omega_{r+m} \rightarrow \widetilde{Y}$. However, the existence of causal relationships between these variables is, as elaborated before, uncertain. Therefore, we adopt dashed single-head arrows to link them. The results are demonstrated in Figure \ref{fig:IC3}.

\textbf{Step 3.}
We construct $\bar{\text{PA}}(\Omega_{1}) \rightarrow \Omega_{1} \leftarrow \bar{\text{PA}}(\Omega_{2}) $, as it will generate new v-structures. Similarly, we could directed all edges between set $\{\bar{\text{PA}}(\Omega_{i})\}_{i=1}^{r+m}$ and set $\{\Omega_{i})\}_{i=1}^{r+m}$. As the existence of causal relationships between some of these variables is uncertain, we adopt dashed single-head arrows to link them. The rest will be linked with dashed double-head arrows. The results are illustrated within Figure \ref{fig:IC4}, which is identical with the proposed SCM in Figure \ref{fig:SCM}, the lemma is proved.
\end{proof}

Drawing on Lemma \ref{lma:SCM} and the proposed SCM, we demonstrate the theorem by showing that under the circumstances of the theorem, if the condition $\exists d, \delta(\bm{Z})^{[d]} \neq 0$ is satisfied, then $G^{\bm{Z}}$ is related to $Y$ in all contexts. According to the given conditions, $\mathcal{L}_{o}$ is maximized. Furthermore, we have:
\begin{align}
    \mathcal{L}_{o} &= -\frac{1}{NK} \sum_{i=1}^{N} \sum_{k=1}
    ^{K} \mathcal{H}\Big( \text{norm}\Big(\sum_{l=1}^{|\mathcal{V}_i|} \Big(M_{i} \circ \bar{\delta}\big(\psi(G_{i})\big)\Big)^{[l]}\Big), Y_{i}^{[k]}\Big)  \nonumber\\
    &=
    -\frac{1}{NK}\sum_{i=1}^{N} \sum_{k=1}^{K} \sum_{d=1}^{D} Y_{i}^{[k,d]} log \bigg( \text{norm}\Big(\sum_{l=1}^{|\mathcal{V}_i|} \Big(M_{i} \circ \bar{\delta}\big(\psi(G_{i})\big)\Big)^{[l]}\Big)^{[d]}\bigg),
    \label{eq:loi}
\end{align}
where $D$, as mentioned before, denotes the number of classes. Based on the SCM within Figure \ref{fig:SCM}, we have:
\begin{gather}
    -\frac{1}{NK}\sum_{i=1}^{N} \sum_{k=1}^{K} \sum_{d=1}^{D} Y_{i}^{[k,d]} log \bigg( \text{norm}\Big(\sum_{l=1}^{|\mathcal{V}_i|} \Big(M_{i} \circ \bar{\delta}\big(\psi(G_{i})\big)\Big)^{[l]}\Big)^{[d]}\bigg)  \nonumber\\
    = 
    -\frac{1}{NK}\sum_{i=1}^{N} \sum_{k=1}^{K} \sum_{d=1}^{D} Y_{i}^{[k,d]} log \bigg( \text{norm}\Big(\sum_{l=1}^{|\mathcal{V}_i|} \Big(M_{i} \circ \bar{\delta}\big(\psi(\{\Omega_{\gamma,i}\}_{\gamma=1}^{r+m})\big)\Big)^{[l]}\Big)^{[d]}\bigg),
    \label{eq:nkmsum}
\end{gather}
where $\{\Omega_{\gamma,i}\}_{\gamma=1}^{r+m} = G_{i}$, subscript $\gamma$ denote the index of different variable within $G$, subscript $i$ denotes the index of graph sample. Furthermore, we denote that $G^{\bm{Z}} = \{\Omega_{\beta}\}_{\beta=1}^{B}$, where $\{\Omega_{\beta}\}_{\beta=1}^{B} \subseteq \{\Omega_{\gamma}\}_{\gamma=1}^{r+m}$. $l^{\bm{Z}}$, similar to the preceding definitions, denotes the node index where the corresponding output representation is $\bm{Z}$. 

To conduct proof, we bring up a hypothesize that if condition $\delta(\bm{Z})^{[d]} \neq 0$ is satisfied, and $G^{\bm{Z}} = \{\Omega_{\beta}\}_{\beta=1}^{B}$ is not related to $Y^{[k,d]}$ in some contexts. Then we have $\{\Omega_{\beta}\}_{\beta=1}^{B}$ is not related to $Y$ in some context, i.e., in certain context $S$, $\{\Omega_{\beta}\}_{\beta=1}^{B} \perp \!\!\!\perp Y^{[k,d]}$. Based on Equation \ref{eq:loi} and \ref{eq:nkmsum}, we have:
\begin{align}
    \mathcal{L}_{o} &= -\frac{1}{NK}\sum_{i=1}^{N} \sum_{k=1}^{K} \sum_{d=1}^{D} Y_{i}^{[k,d]} log \bigg( \text{norm}\Big(\sum_{l=1}^{|\mathcal{V}_i|} \Big(M_{i} \circ \bar{\delta}\big(\psi(\{\Omega_{\gamma,i}\}_{\gamma=1}^{r+m})\big)\Big)^{[l]}\Big)^{[d]}\bigg) \nonumber\\
    &= -\frac{1}{NK}\sum_{i=1}^{N} \sum_{k=1}^{K} \sum_{d=1}^{D} Y_{i}^{[k,d]} log \bigg( \text{norm}\Big(\sum_{l=1}^{|\mathcal{V}_i|} \Big(M_{i} \circ \bar{\delta}\big(\psi\big(\{\Omega_{\beta,i}\}_{\beta=1}^{B} \cup (\{\Omega_{\gamma,i}\}_{\gamma=1}^{r+m} \nonumber\\
    & \quad \ \ \ \ \  \setminus \{\Omega_{\beta,i}\}_{\beta=1}^{B})\big)\big)\Big)^{[l]}\Big)^{[d]}\bigg) \nonumber\\
    &= -\frac{1}{NK}\sum_{i=1}^{N} \sum_{k=1}^{K} \sum_{d=1}^{D} Y_{i}^{[k,d]} log \bigg( \text{norm}\Big(\sum_{a=1}^{|\mathcal{A}_i|} \Big(M_{i} \circ \bar{\delta}\big(\psi\big(\{\Omega_{\beta,i}\}_{\beta=1}^{B} \cup (\{\Omega_{\gamma,i}\}_{\gamma=1}^{r+m}  \nonumber\\
    & \quad \ \ \ \ \  \setminus \{\Omega_{\beta,i}\}_{\beta=1}^{B})\big)\big)\Big)^{[a]} +\Xi_{i}\Big)^{[d]}\bigg), \nonumber\\
    \label{eq:adda}
\end{align}
where $\Xi_{i}$ denotes the sum of output values other than $\{(M_{i} \circ \bar{\delta}(\{\Omega_{\beta,i}\}_{\beta=1}^{B} \cup (\{\Omega_{\gamma,i}\}_{\gamma=1}^{r+m} \setminus \{\Omega_{\beta,i}\}_{\beta=1}^{B}))\}_{a=1}^{A_{i}}$, $\mathcal{A}_{i}$ denotes the set of nodes corresponding to $\{\Omega_{\beta}\}_{\beta=1}^{B}$ within graph sample $G_{i}$, furthermore, the following equation define the output of $\psi(\{\Omega_{\beta,i}\}_{\beta=1}^{B} )^{[a]}$:
\begin{gather}
    \psi(\{\Omega_{\beta,i}\}_{\beta=1}^{B} )^{[a]} = X, \forall a \in [1,|\mathcal{A}_{i}|].
\end{gather}
As $G^{\bm{Z}} = \{\Omega_{\beta}\}_{\beta=1}^{B}$, we can omit $(\{\Omega_{\gamma,i}\}_{\gamma=1}^{r+m} \setminus \{\Omega_{\beta,i}\}_{\beta=1}^{B})$ within Equation \ref{eq:adda} and have:
\begin{gather}
    \mathcal{L}_{o} = -\frac{1}{NK}\sum_{i=1}^{N} \sum_{k=1}^{K} \sum_{d=1}^{D} Y_{i}^{[k,d]} log \bigg( \text{norm}\Big(\sum_{a=1}^{|\mathcal{A}_i|} \Big(M_{i} \circ \bar{\delta}\big(\psi\big(\{\Omega_{\beta,i}\}_{\beta=1}^{B} \big)\big)\Big)^{[a]} +\Xi_{i}\Big)^{[d]}\bigg).
\end{gather}
As Assumption \ref{asp:perp} holds, the hypothesis we propose asserts that in certain context $S$ we have $\{\Omega_{\beta}\}_{\beta=1}^{B} \perp \!\!\!\perp Y^{[k,d]}$, and $\bar{\delta}(X)^{[d]} \neq 0$, we can generate a new $\psi'(\cdot)$ through only alter the parameters of $\psi(\cdot)$ and have:
\begin{gather}
     -\sum_{u=1}^{U} \sum_{k=1}^{K} \sum_{d=1}^{D} Y_{u}^{[k,d]} log \bigg( \text{norm}\Big(\sum_{a=1}^{|\mathcal{A}_u|} \Big(M_{u} \circ \bar{\delta}\big(\psi'\big(\{\Omega_{\beta,u}\}_{\beta=1}^{B} \big)\big)\Big)^{[a]} +\Xi_{u}\Big)^{[d]}\bigg)    \nonumber\\
     < -\sum_{u=1}^{U} \sum_{k=1}^{K} \sum_{d=1}^{D} Y_{u}^{[k,d]} log \bigg( \text{norm}\Big(\sum_{a=1}^{|\mathcal{A}_u|} \Big(M_{u} \circ \bar{\delta}\big(\psi\big(\{\Omega_{\beta,u}\}_{\beta=1}^{B} \big)\big)\Big)^{[a]} +\Xi_{u}\Big)^{[d]}\bigg), 
\end{gather}
by only letting $\psi'(\cdot)$ change the output for $\bar{\delta}\big(\psi'(\{\Omega_{\beta,u}\}_{\beta=1}^{B})^{[a]}\big)^{[d]}$ from $\bar{\delta}(X)^{[d]}$ to $0$, where $\{G_{u}\}_{u=1}^{U}$ denote the set of graph samples that have $\{\Omega_{\beta}\}_{\beta=1}^{B} \perp \!\!\!\perp Y^{[k,d]}$ holds. The reason behind this is $M_{i}$ ensures that when $\bar{\delta}(\bm{Z})^{[d]}=0$, it does not impact the output, and at this point, the output of the other parts of the model has already reached its optimum. Therefore, when $\bar{\delta}(\bm{Z})^{[d]}=0$, the output remains optimal. However, if $\bar{\delta}(\bm{Z})^{[d]} \neq 0$, then $\bar{\delta}(\bm{Z})^{[d]}$ will influence the output, and the impact from irrelevant information will cause the output to deviate from the optimal value.

Subsequently, we have:
\begin{align}
        \mathcal{L}_{o} &= -\frac{1}{NK}\sum_{i=1}^{N} \sum_{k=1}^{K} \sum_{d=1}^{D} Y_{i}^{[k,d]} log 
        \bigg( \text{norm}\Big(\sum_{l=1}^{|\mathcal{V}_i|} \Big(M_{i} 
        \circ \bar{\delta}\big(\psi(\{\Omega_{\gamma,i}\}_{\gamma=1}^{r+m})\big)\Big)^{[l]}\Big)^{[d]}\bigg) \nonumber\\
        &= -\frac{1}{NK}\sum_{w=1}^{W} \sum_{k=1}^{K} \sum_{d=1}^{D} Y_{i}^{[k,d]} log \bigg( \text{norm}\Big(\sum_{l=1}^{|\mathcal{V}_w|} \Big(M_{w} \circ \bar{\delta}\big(\psi(\{\Omega_{\gamma,w}\}_{\gamma=1}^{r+m})\big)\Big)^{[l]}\Big)^{[d]}\bigg)  \nonumber\\
        & + \sum_{u=1}^{U} \sum_{k=1}^{K} \sum_{d=1}^{D} Y_{u}^{[k,d]} log \bigg( \text{norm}\Big(\sum_{l=1}^{|\mathcal{V}_u|} \Big(M_{u} \circ \bar{\delta}\big(\psi(\{\Omega_{\gamma,u}\}_{\gamma=1}^{r+m})\big)\Big)^{[l]}\Big)^{[d]}\bigg)  \nonumber\\
        &= -\frac{1}{NK}\sum_{w=1}^{W} \sum_{k=1}^{K} \sum_{d=1}^{D} Y_{w}^{[k,d]}log \Big( \frac{1}{|\mathcal{V}_{w}|}\sum_{l=1}^{|\mathcal{V}_w|} \bar{\delta}\big( \psi(\{\Omega_{\gamma,w}\}_{\gamma=1}^{r+m} )^{[l]} \big)^{[d]}\Big) + \sum_{u=1}^{U} \sum_{k=1}^{K} \sum_{d=1}^{D} Y_{u}^{[k,d]} \nonumber\\
        &log\Big( \frac{1}{A_u}\sum_{a=1}^{A_u} \bar{\delta}\big(\psi(\{\Omega_{\beta,u}\}_{\beta=1}^{B})^{[a]}\big)^{[d]}+ \Xi_{u} \Big) \nonumber\\
        &> -\frac{1}{NK}\sum_{w=1}^{W} \sum_{k=1}^{K} \sum_{d=1}^{D} Y_{w}^{[k,d]}log \Big( \frac{1}{|\mathcal{V}_{w}|}\sum_{l=1}^{|\mathcal{V}_w|} \bar{\delta}\big( \psi'(\{\Omega_{\gamma,w}\}_{\gamma=1}^{r+m} )^{[l]} \big)^{[d]}\Big) + \sum_{u=1}^{U} \sum_{k=1}^{K} \sum_{d=1}^{D} Y_{u}^{[k,d]} \nonumber\\
        &log\Big( \frac{1}{A_u}\sum_{a=1}^{A_u} \bar{\delta}\big(\psi'(\{\Omega_{\beta,u}\}_{\beta=1}^{B})^{[a]}\big)^{[d]}+ \Xi_{u} \Big), 
\end{align}
where $\{G_{w}\}_{w=1}^{W} = \{G_{i}\}_{i=1}^{N} \setminus \{G_{u}\}_{u=1}^{U}$. Therefore $\mathcal{L}_{o}$ can be further minimized, which is contrary to the given conditions, the hypothesis we make does not hold, therefore if the condition $\delta(\bm{Z})^{[d]} \neq 0$ is satisfied, then $G^{\bm{Z}}$ is related to $Y^{[k,d]}$ in all contexts, therefore related to $Y$ in all contexts. 

The above conclusion demonstrates that under the given conditions, $G^{\bm{Z}}$ satisfied the first condition of Definition \ref{def:poc}, now we only need to show $G^{\bm{Z}}$ also satisfied the second principle of Definition \ref{def:poc} to prove the theorem. From the SCM in Figure \ref{fig:SCM}, it is obvious that there is always a causal root from $\{\Omega_{\beta}\}_{\beta=1}^{B}$ towards $Y^{*}$ or $\widetilde{Y}$ as we have proved that under the given condition, $G^{\bm{Z}}$, i.e. $\{\Omega_{\beta}\}_{\beta=1}^{B}$, is related to $Y^{[:,d]}$ in all contexts. Next, we can choose any $\Omega$ within $\{\Omega_{1}, ..., \Omega_{r}\}$ other than those within $\{\Omega_{\beta}\}_{\beta=1}^{B}$, and given all $\{\bar{\text{PA}}(\Omega_{i})\}_{i=1}^{r+m}$, the chosen $\Omega$ still related to $Y$, therefore the second condition of Definition \ref{def:poc} holds, the theorem is proved.

\subsection{A.3. Proof of Theorem \ref{thm:leq}}
\label{prf:2}
According to Theorem \ref{thm:leq}, $\mathcal{L}_{1}$ can be formulated as follows:
\begin{align}
    \mathcal{L}_{1} &= \sum_{i=1}^{N} \sum_{k=1}^{K} \mathcal{H}\Big( \text{epow}(f_{\theta},\lambda), Y_{i}^{[k]}\Big) \nonumber\\
    &= \sum_{i=1}^{N} \mathcal{H}\Big(\text{epow}(f_{\theta},\lambda), Y_{i}^{*} \Big)  +  \sum_{i=1}^{N} \mathcal{H}\Big(\text{epow}(f_{\theta},\lambda), \widetilde{Y}_{i} \Big) 
    \nonumber\\
    &= -\sum_{i=1}^{N}\sum_{d=1}^{D} Y_{i}^{*[d]}log\big(e^{(f(G_{i}^{*})^{[d]})^{\lambda}}\big) 
    - \sum_{i=1}^{N}\sum_{d=1}^{D} \widetilde{Y}_{i}^{[d]}log\big(e^{(f(G_{i}^{*})^{[d]})^{\lambda}}\big) 
    \nonumber\\
    &= -\sum_{i=1}^{N}\sum_{d=1}^{D} Y_{i}^{*[d]}\big(f(G_{i}^{*})^{[d]}\big)^{\lambda}  
     -\sum_{i=1}^{N}\sum_{d=1}^{D} \widetilde{Y}_{i}^{[d]}\big(f(G_{i}^{*})^{[d]}\big)^{\lambda}.    
    \label{eq:prffy}
\end{align}
As $Y_{i}^{*}$ and $\widetilde{Y}_{i}$ are one-hot vectors, Equation \ref{eq:prffy} can be formulated as:
\begin{gather}
\mathcal{L}_{1} = -\sum_{i=0}^{N} \big(\Psi_{i,d^{*}}\big)^{\lambda}
    -\sum_{i=0}^{N} \big(\Psi_{i,\widetilde{d}})^{\lambda},   
\end{gather}
where $\Psi_{i}$ denote the output of $f(\cdot)$ with $G_{i}^{*}$ as the input. $\Psi_{i,d^{*}}$ denote $d^{*}$-th dimension of $\Psi_{i}$, $Y_{i,d^{*}}^{*} = 1$. Likewise, $\Psi_{i,\widetilde{d}}$ denote $\widetilde{d}$-th dimension of $\Psi_{i}$, $\widetilde{Y}_{i,\widetilde{d}} = 1$. Meanwhile, We have:
\begin{gather}
\sum_{d=1}^{D}\Psi_{i,d} = 1, \forall i \in \{1,2,...,N\}.
\label{eq:sum}
\end{gather}
According to the definition of $\mathcal{L}_{1}$, $\lambda > 1$, then $\big(X_{i,d^{*}}\big)^{\lambda}$ is a strictly concave function of $X_{i,d^{*}}$. Therefore, we assert that if and only if $\Psi_{i,d^{*}}=1, \forall i \in \{1,2,...,N\}$, $\mathcal{L}_{1}$ reaches minimal. We will substantiate this claim by analyzing the problem within different cases.

As Equation \ref{eq:sum} holds, if $d^{*} \neq \widetilde{d}$, we have:
\begin{gather}
\Psi_{i,1} + ... + \Psi_{i,d^{*}} + ... + \Psi_{i,\widetilde{d}} + ... + \Psi_{i,D} = 1,
\label{eq:a1}
\end{gather}
where $\Psi_{i,\widetilde{d}}$ is not necessarily positioned after $\Psi_{i,d^{*}}$, in some cases $d^{*} = \widetilde{d}$. Therefore, we can divide the studied problem into three cases: 1) $\Psi_{i,d^{*}}=1$ or $\Psi_{i,\widetilde{d}}=1$ or $\Psi_{i,d^{*}}=\Psi_{i,\widetilde{d}}=1$ ; 2) $\Psi_{i,d}=1, d \neq d^{*} \ \text{and} \ d \neq \widetilde{d} $ ; 3) $\Psi_{i,d}<1,\forall i \in \{1,2,...,D\}$. 

For Case 1, according to the theorem, condition $P(\widetilde{Y}=\widetilde{Y}_{i}|G^{*}=G_{i}^{*})\neq 1, \forall i \in \{1,2,...,N\}$ holds. Consequently, it is not feasible to predict $\widetilde{Y}$ accurately solely based on $G^{*}$. Therefore, the only condition under which $\mathcal{L}_{1}$ achieves its minimum for Case 1 is when $\Psi_{i,d^{*}}=1$ for every $i \in \{1,2,...,N\}$.

For Case 2, we have:
\begin{align}
\mathcal{L}_{1} &= -\sum_{i=0}^{N} \big(\Psi_{i,d^{*}}\big)^{\lambda}
    -\sum_{i=0}^{N} \big(\Psi_{i,\widetilde{d}})^{\lambda} = -\sum_{i=0}^{N} \big(0\big)^{\lambda}
    -\sum_{i=0}^{N} \big(0)^{\lambda} = 0,   
\end{align}
This value is evidently greater than the minimum of $\mathcal{L}_{1}$ in Case 1. Consequently, in Case 2, $\mathcal{L}_{1}$ cannot achieve its minimum value.

For Case 3, based on Equation 15, it can be seen that variables other than $\Psi_{i,d^{*}}$ and $\Psi_{i,\widetilde{d}}$ do not contribute to the value of $\mathcal{L}_{1}$. Furthermore, an increase in the values of $\Psi_{i,d^{*}}$ and $\Psi_{i,\widetilde{d}}$ leads to a decrease in $\mathcal{L}_{1}$. Therefore, based on Equation \ref{eq:a1}, when $\mathcal{L}_{1}$ reaches its minimal possible value in Case 3, if $d^{*} \neq \widetilde{d}$,  $\Psi_{i,d^{*}} + \Psi_{i,\widetilde{d}} = 1$. However, as Case 3 constraint that $\Psi_{i,d}<1,\forall i \in \{1,2,...,D\}$, we can conclude that:
\begin{align}
    \mathcal{L}^{\text{Case 3}}_{1} &= -\sum_{i=0}^{N} \big(\Psi^{\text{Case 3}}_{i,d^{*}}\big)^{\lambda}
    -\sum_{i=0}^{N} \big(\Psi^{\text{Case 3}}_{i,\widetilde{d}})^{\lambda} \nonumber\\
    &= -\sum_{i=0}^{N} \Big(\big(\Psi^{\text{Case 3}}_{i,d^{*}}\big)^{\lambda} + \big(\Psi^{\text{Case 3}}_{i,\widetilde{d}})^{\lambda}\Big) \nonumber\\
    &< -\sum_{i=0}^{N} \big(\Psi^{\text{Case 1}}_{i,d^{*}}\big)^{\lambda} = \mathcal{L}^{\text{Case 1}}_{1}, 
    \label{eq:leqpsi}
\end{align}
where $\mathcal{L}^{\text{Case 3}}_{1}$ and $\mathcal{L}^{\text{Case 1}}$ denotes the minimal value of $\mathcal{L}_{1}$ within Case 3 and Case 1 when $d^{*} \neq \widetilde{d}$, $\Psi^{\text{Case 3}}_{i,d^{*}}$ and $\Psi^{\text{Case 3}}_{i,\widetilde{d}}$ denote the corresponding output values in Case 3, $\Psi^{\text{Case 1}}_{i,d^{*}}$ is the corresponding output value in Case 1. Equation \ref{eq:leqpsi} holds because that as $\Psi^{\text{Case 1}}_{i,d^{*}} = 1 = \Psi^{\text{Case 3}}_{i,d^{*}} + \Psi^{\text{Case 3}}_{i,\widetilde{d}}$, then with $0 < \tau < 1$ we can denote that $\tau\Psi^{\text{Case 1}}_{i,d^{*}} = \Psi^{\text{Case 3}}_{i,d^{*}} $ and $(1-\tau)\Psi^{\text{Case 1}}_{i,d^{*}} = \Psi^{\text{Case 3}}_{i,\widetilde{d}}$, according to the convexity inequality, we have:
\begin{gather}
    \tau\big(\Psi^{\text{Case 1}}_{i,d^{*}}\big)^{\lambda} + (1-\tau)\big(0)^{\lambda} > 
    \big(\tau\Psi^{\text{Case 1}}_{i,d^{*}} + (1-\tau)\big(0)\big)^{\lambda} =
    \big(\Psi^{\text{Case 3}}_{i,d^{*}} \big)^{\lambda}
\end{gather}
and:
\begin{gather}
    (1-\tau)\big(\Psi^{\text{Case 1}}_{i,d^{*}}\big)^{\lambda} + (\tau)\big(0)^{\lambda} > 
    \big((1-\tau)\Psi^{\text{Case 1}}_{i,d^{*}} + (\tau)\big(0)\big)^{\lambda} =
    \big(\Psi^{\text{Case 3}}_{i,\widetilde{d}} \big)^{\lambda}.
\end{gather}
Then we can conclude that:
\begin{gather}
    \Psi^{\text{Case 1}}_{i,d^{*}} > \big(\Psi^{\text{Case 3}}_{i,d^{*}}\big)^{\lambda} + \big(\Psi^{\text{Case 3}}_{i,\widetilde{d}})^{\lambda}.
\end{gather}
For $d^{*} = \widetilde{d}$, as $\Psi_{i,d}<1,\forall i \in \{1,2,...,D\}$, then $\mathcal{L}_{1}$ in Case 3 also can't reach a value that smaller than that within Case 1. So far, we have proved our claim that if and only if $\Psi_{i,d^{*}}=1, \forall i \in \{1,2,...,N\}$, $\mathcal{L}_{1}$ reaches minimal, which means under such condition $f(\cdot)$ can accurately predict $Y^{*}$ based on $G^{*}$. Therefore, $\mathcal{L}_{2}$ also reaches its minimal value, and the theorem is proved.

\subsection{A.4. Proof of Corollary \ref{cly:tt}}
\label{prf:3}
Based on the proof of Theorem \ref{thm:lag} We can significantly increase the value of $\lambda$, and as a result, we obtain:
\begin{gather}
    \lim_{{\lambda \to \inf}} (\Psi_{i,d})^{\lambda} = 0,
    0 \leq \Psi_{i,d} < 1.
\end{gather}
which means by setting $\lambda$ to a sufficiently large value, we make any $(\Psi_{i,d})^{\lambda}$ with $\Psi_{i,d} < 1 $ approaches zero. The analysis from the proof of Theorem \ref{thm:leq} indicates that at this point, the loss can only reach its minimum value when $\Psi_{i,d^{*}}=1$ for every $i\in \{1,2,...,N\}$, which is the same conclusion as Theorem \ref{thm:lag}. The corollary has thus been validated.

\label{apx:k9}
\begin{table*}[t]\scriptsize
        \caption{Summary of datasets.}
	\begin{center}
		\begin{tabular}{lcccccc}
			\hline\rule{0pt}{5pt}
			
			{Name}  &  Graphs\#  & Average Nodes\# & Average Edges\# & Classes\# & Task Type & Metric  \\ 	
			\hline\rule{-3pt}{10pt}
			\text{Graph-SST5(OOD)} & {10700} & {21.29} &{40.58} & {5} & {Classification} & {ACC}\\
			\text{Graph-SST5(ID)} & {11855} & {19.85} &{37.70} & {5} & {Classification} & {ACC}\\
			\text{Graph-Twitter(OOD)} & {6344} & {21.96} &{41.92} & {3} & {Classification} & {ACC}\\
                \text{Graph-Twitter(ID)} & {6940} & {21.10} &{40.20} & {3} & {Classification} & {ACC}\\
			\text{Graph-SST2} & {70042} & {10.20} &{18.40} & {2} & {Binary Classification} & {ACC}\\
			\text{COLLAB} & {5000} & {74.49} &{2457.78} & {3} & {Classification} & {ACC}\\
			\text{REDDIT-MULTI-5K} & {4999} & {508.52} &{594.87} & {5} & {Classification} & {ACC}\\

			\hline
		\end{tabular}
	\end{center}
        \label{table:dataset}
\end{table*}

\begin{table*}[t]\scriptsize
 	\caption{Summary of the backbones used in each dataset.}
	\begin{center}
		\begin{tabular}{lccccc}
			\hline\rule{0pt}{10pt}
			
			{Name}  &  Backbone\#  & Size of $g$ (GNN) & Size of $\delta$ (MLP) & Global Pool\\ 	
			\hline\rule{-3pt}{10pt}
			\text{Graph-SST5(OOD)} & {ARMA} & {[768,256,128]} & {[128,5]} & {global mean pool}\\
			\text{Graph-SST5(ID)} & {ARMA} & {[768,256,128]} & {[128,5]} & {global mean pool}\\
			\text{Graph-Twitter(OOD)} & {ARMA} & {[768,256,128]} & {[128,3]} & {global mean pool}\\
                \text{Graph-Twitter(ID)} & {ARMA} & {[768,256,128]} & {[128,3]} & {global mean pool}\\
			\text{Graph-SST2} & {ARMA} & {[768,256,128]} & {[128,2]} & {global mean pool}\\
			\text{COLLAB} & {ARMA} & {[768,256,128]} & {[128,3]} & {global mean pool}\\
			\text{REDDIT-MULTI-5K} & {ARMA} & {[32,256,128]} & {[128,5]} & {global mean pool}\\

			\hline
		\end{tabular}
	\end{center}
        \label{table:backbone}
\end{table*}

\begin{figure*}
	\centering
	\subfigure[Training: Positive sentiment]{
			\includegraphics[width=0.32\linewidth]{./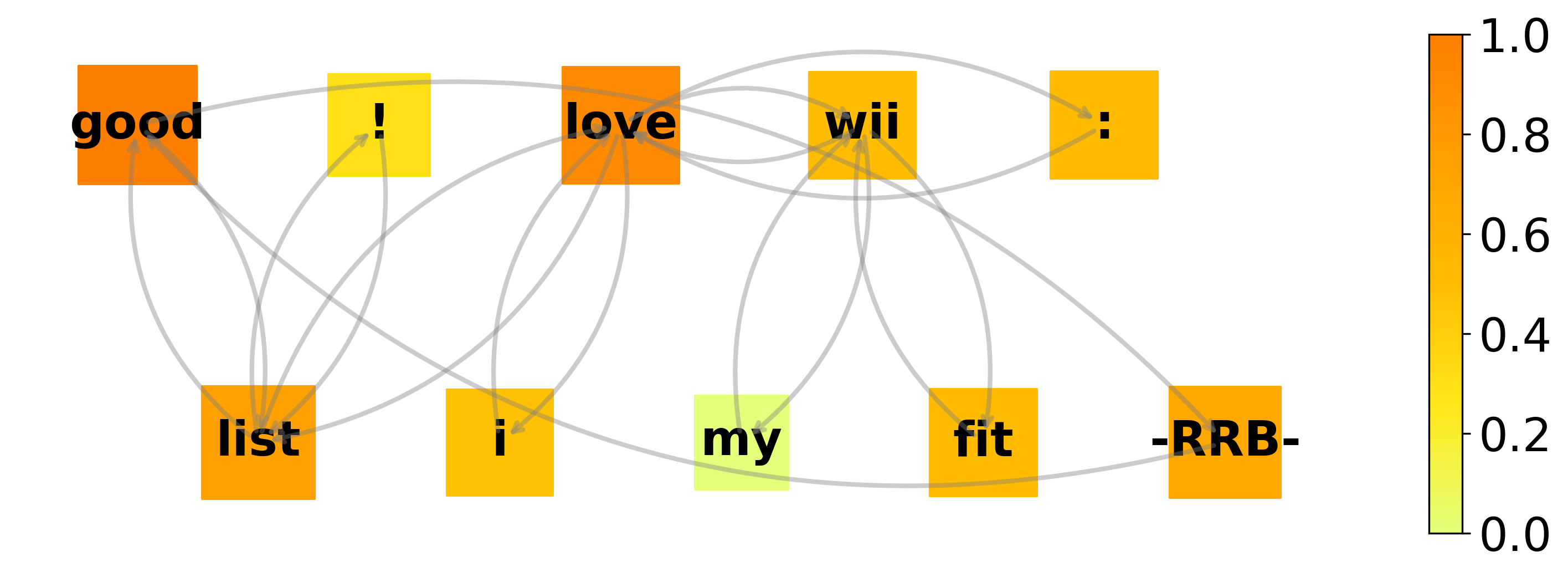}
                \includegraphics[width=0.32\linewidth]{./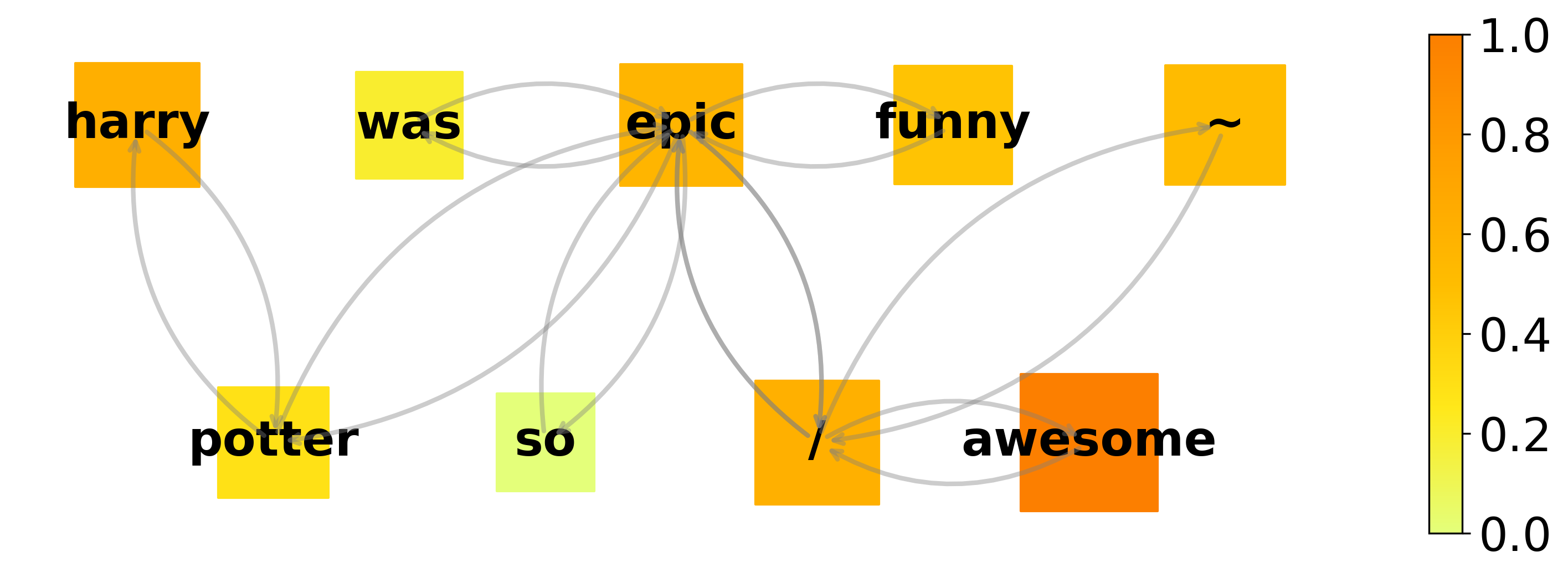}
                \includegraphics[width=0.32\linewidth]{./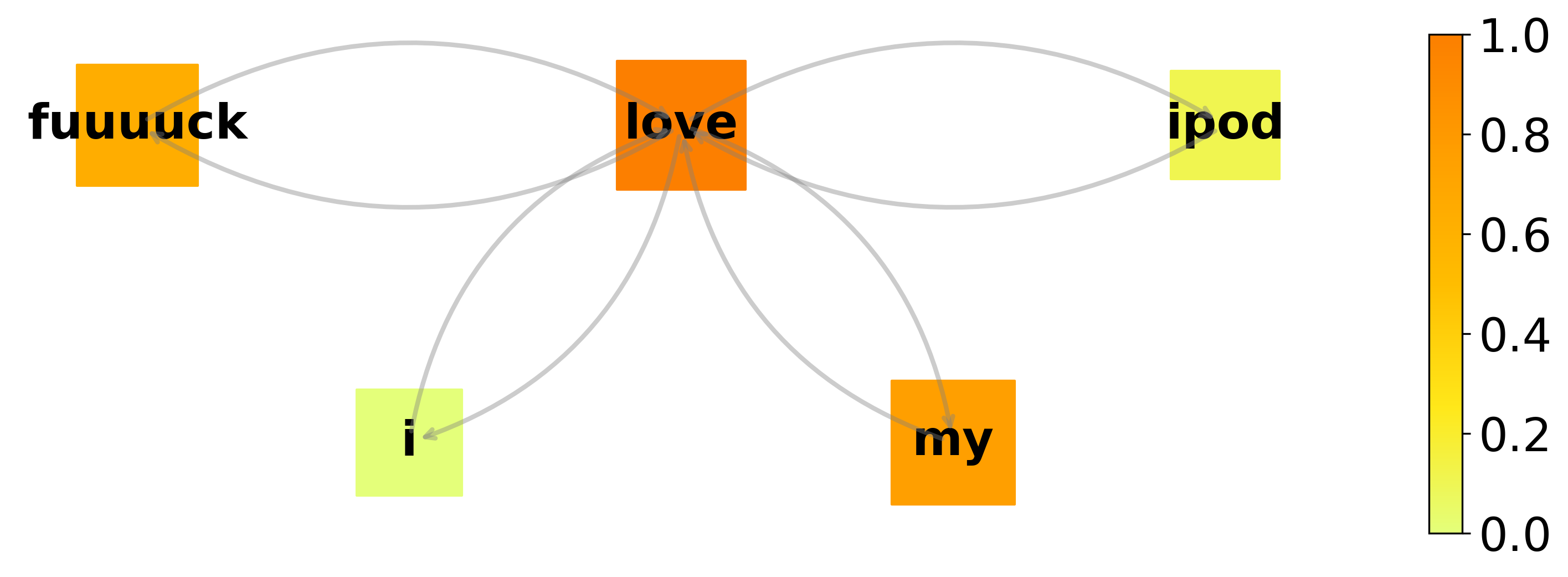}
	}\\
	\subfigure[Training: Neutral sentiment]{
			\includegraphics[width=0.32\linewidth]{./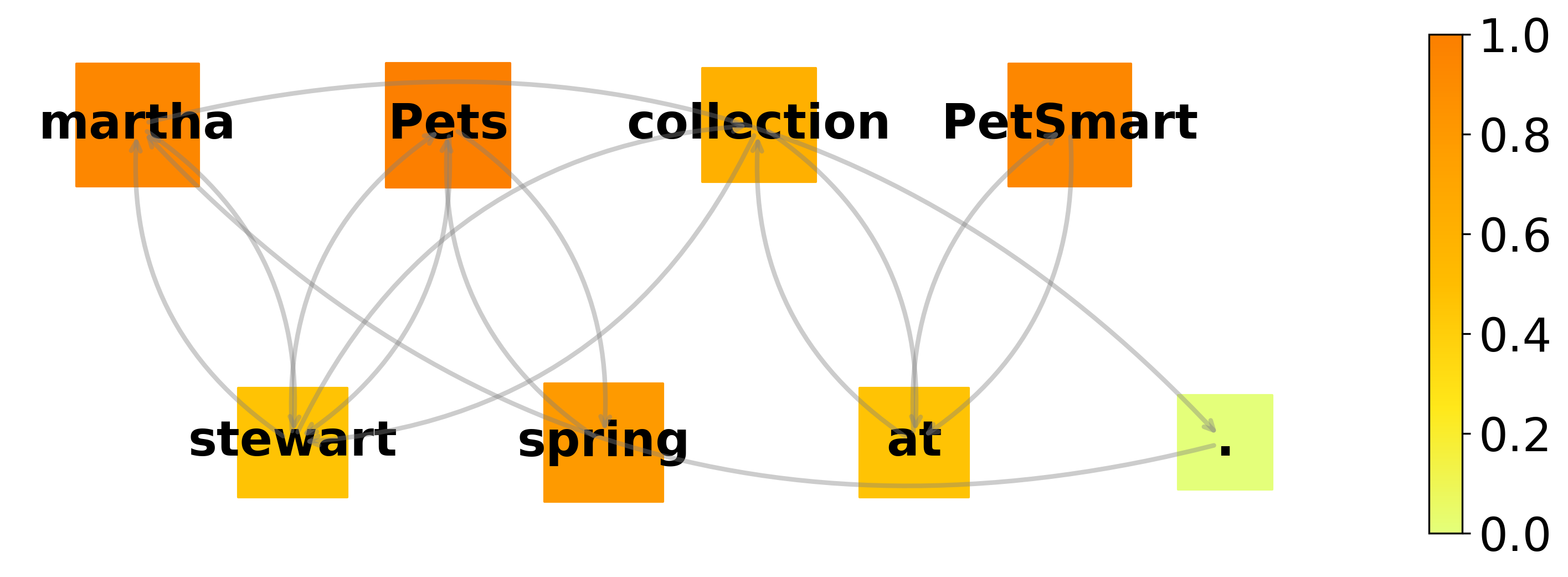}
			\includegraphics[width=0.32\linewidth]{./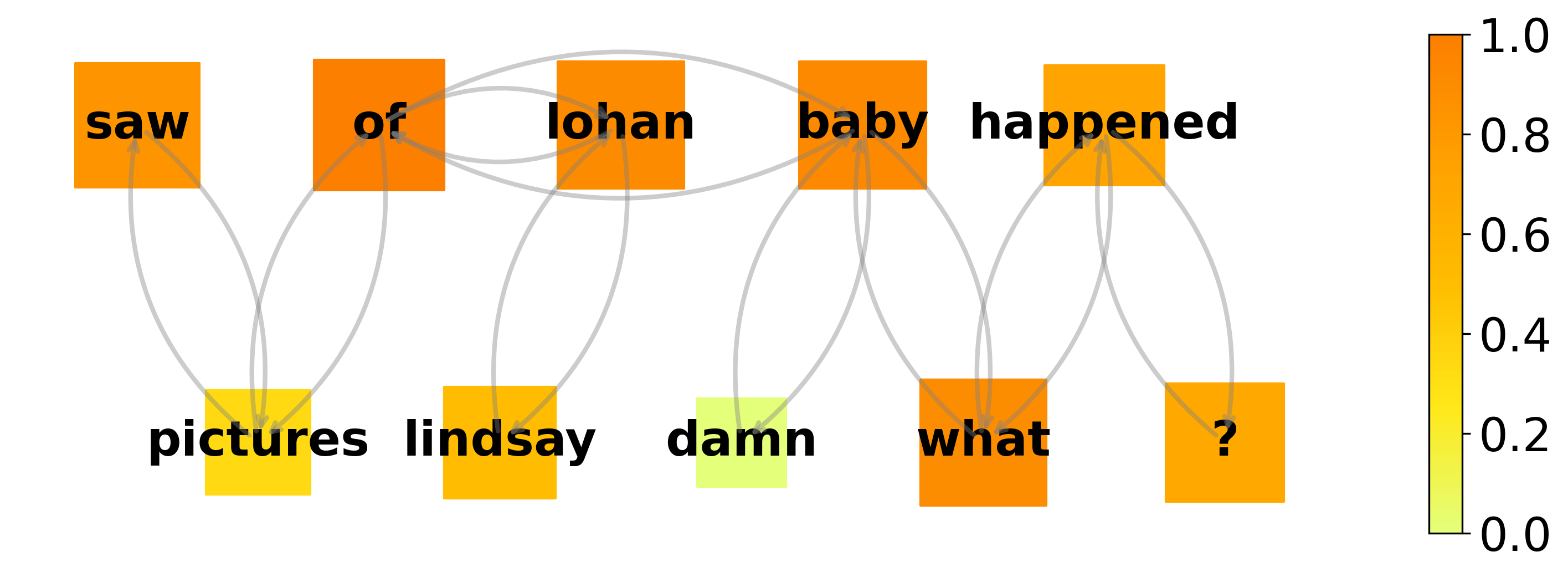}
			\includegraphics[width=0.32\linewidth]{./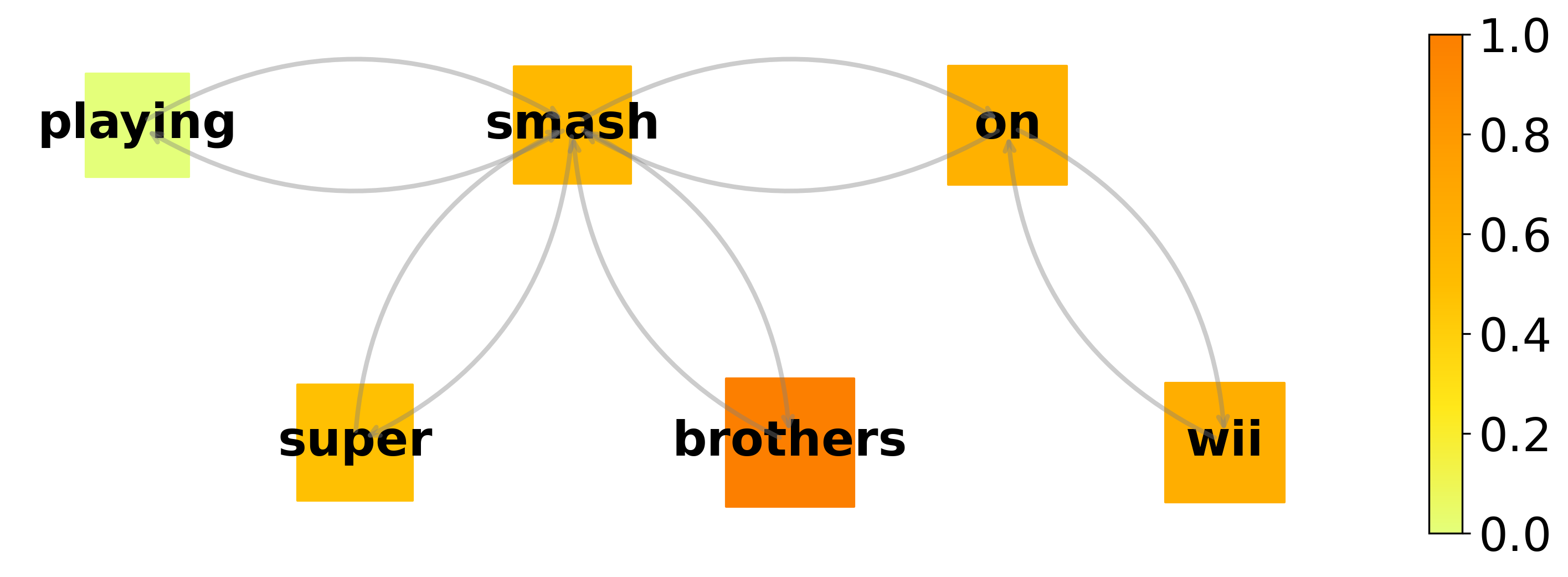}
	}\\
	\subfigure[Training: Negative sentiment]{
			\includegraphics[width=0.32\linewidth]{./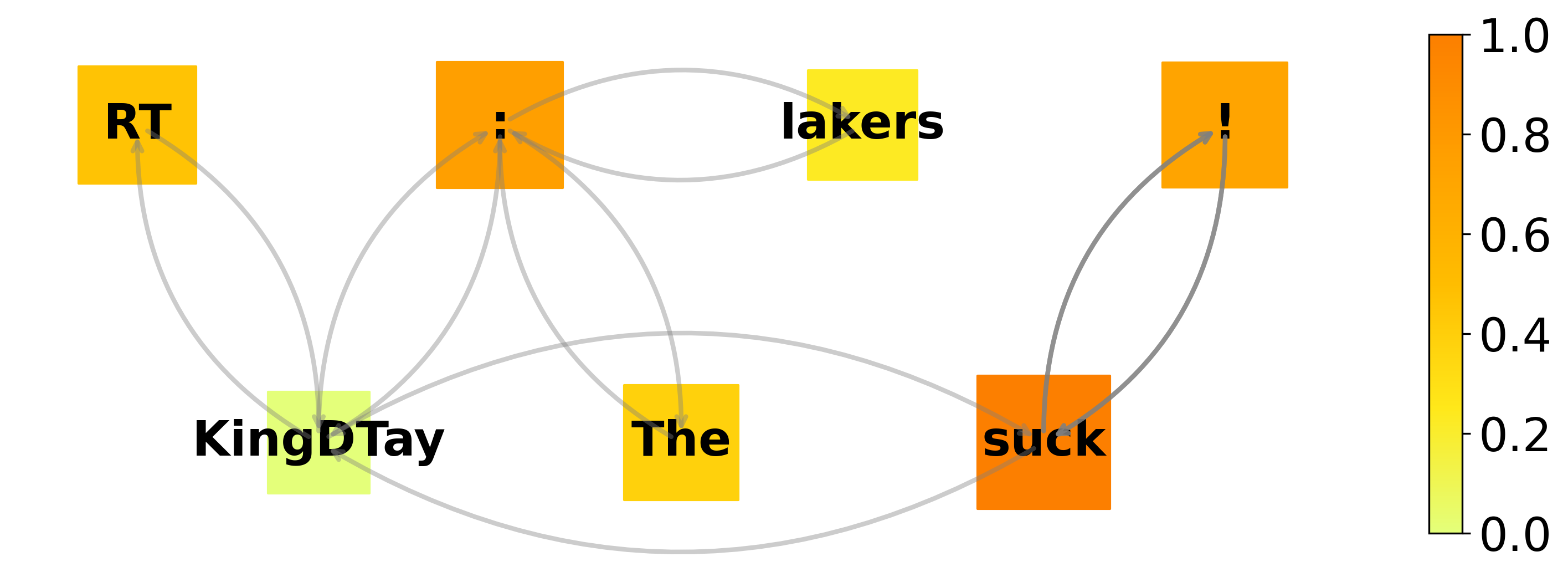}
			\includegraphics[width=0.32\linewidth]{./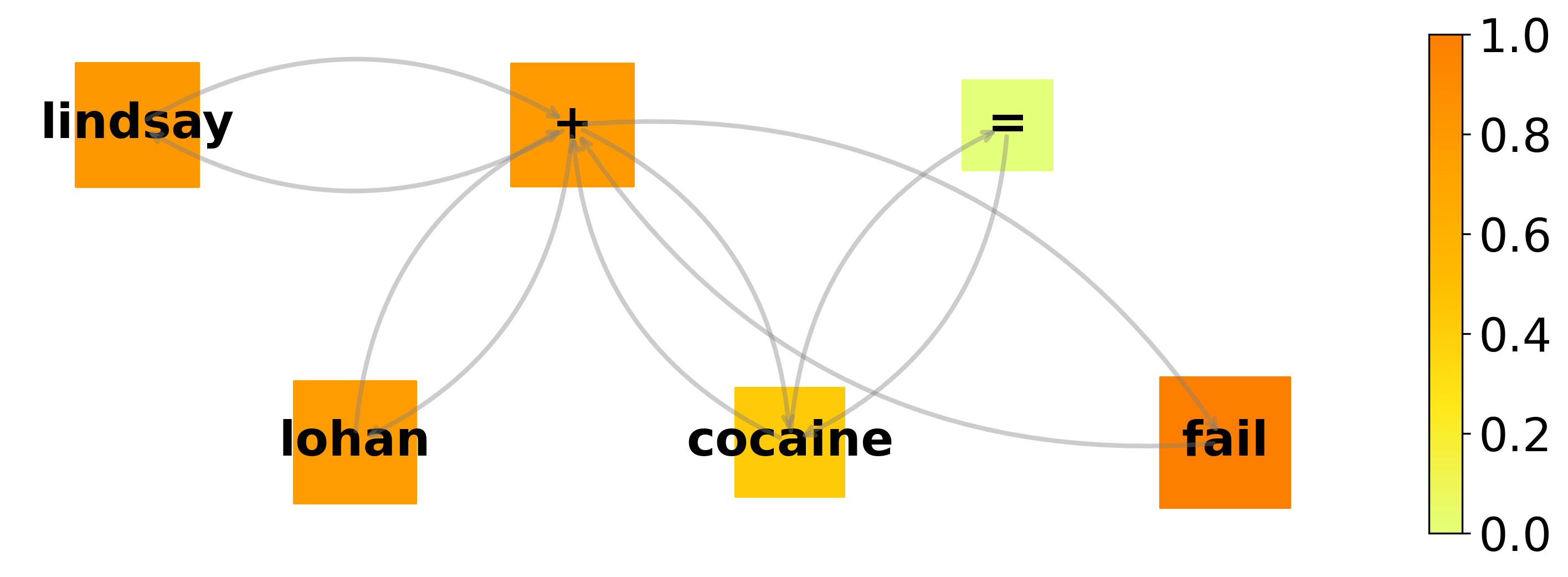}
			\includegraphics[width=0.32\linewidth]{./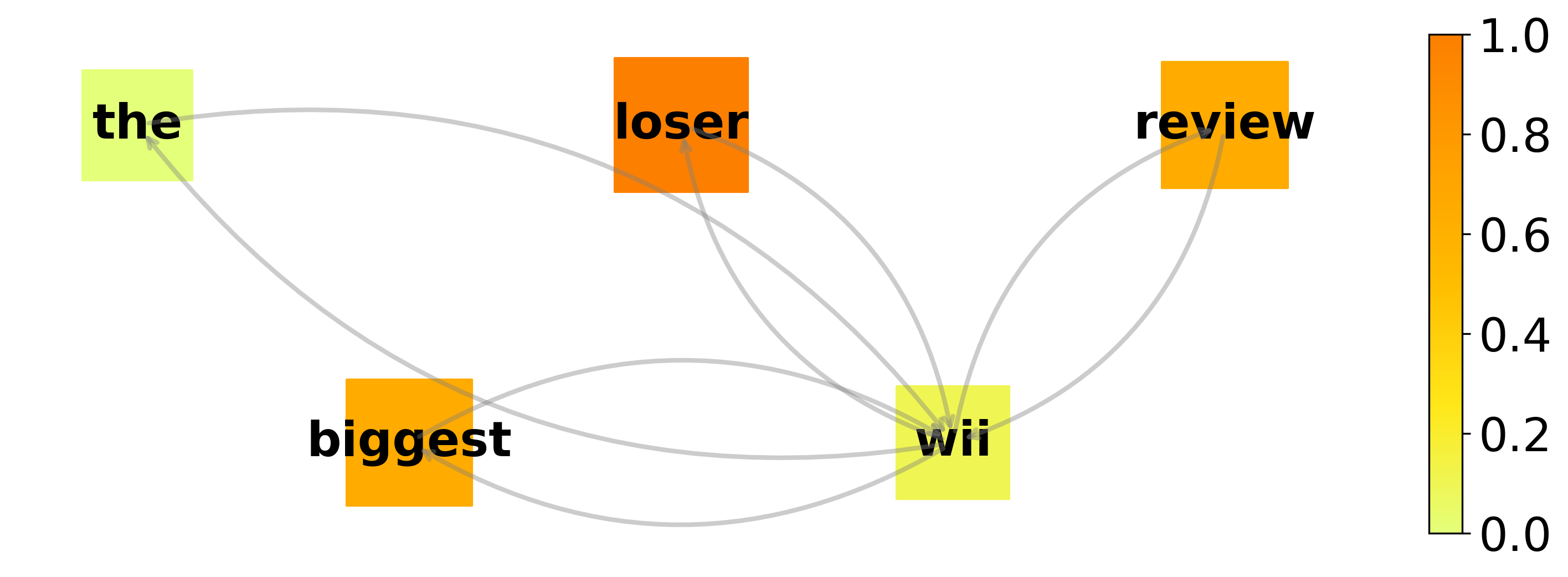}
	}\\
 
	\centering
	\caption{Visualization of node-level prediction vector on the training set of the Graph-Twitter dataset. Each graph represents a sentence.}
	\label{fig:sentiment2}
\end{figure*}

\begin{figure*}
	\centering
	\subfigure[Testing: Positive sentiment]{
			\includegraphics[width=0.32\linewidth]{./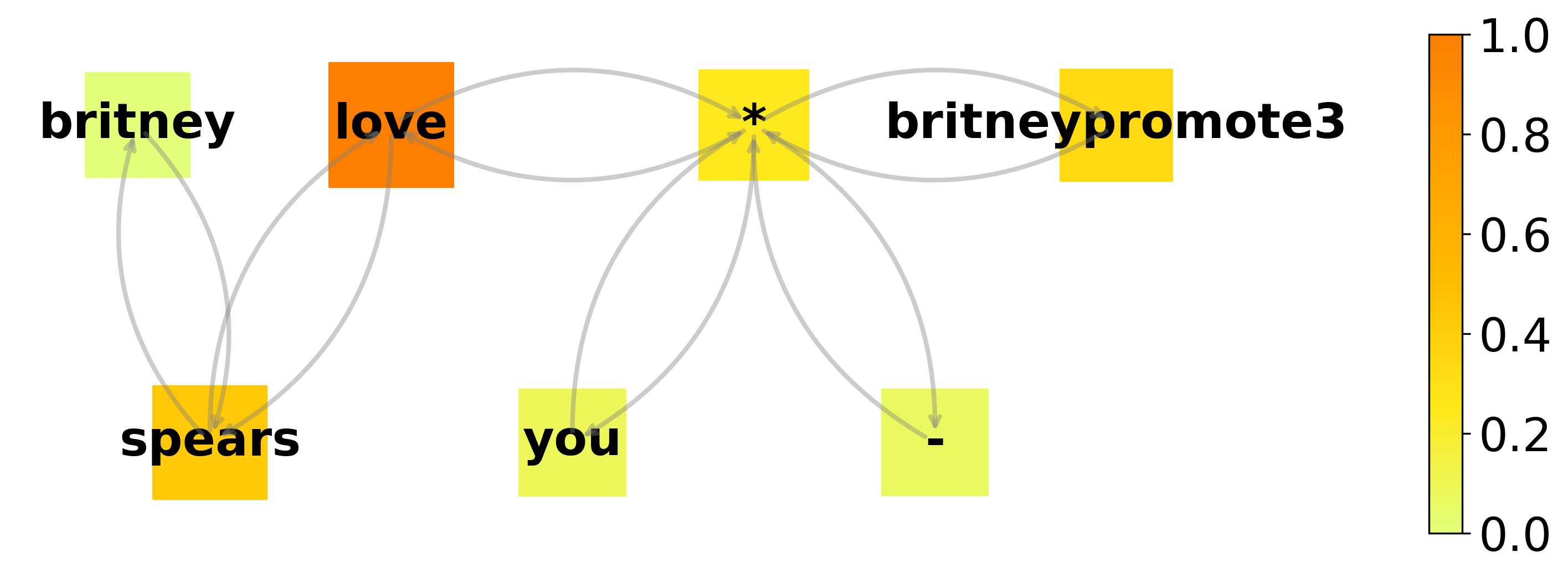}
                \includegraphics[width=0.32\linewidth]{./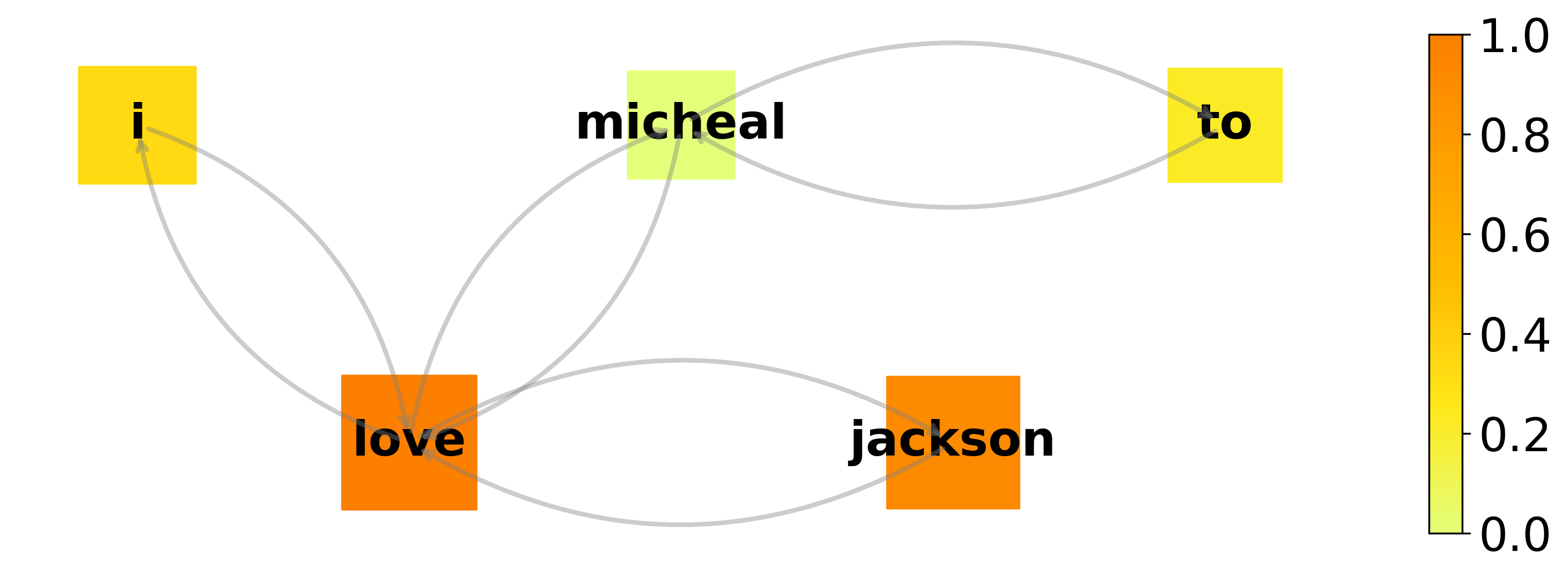}
                \includegraphics[width=0.32\linewidth]{./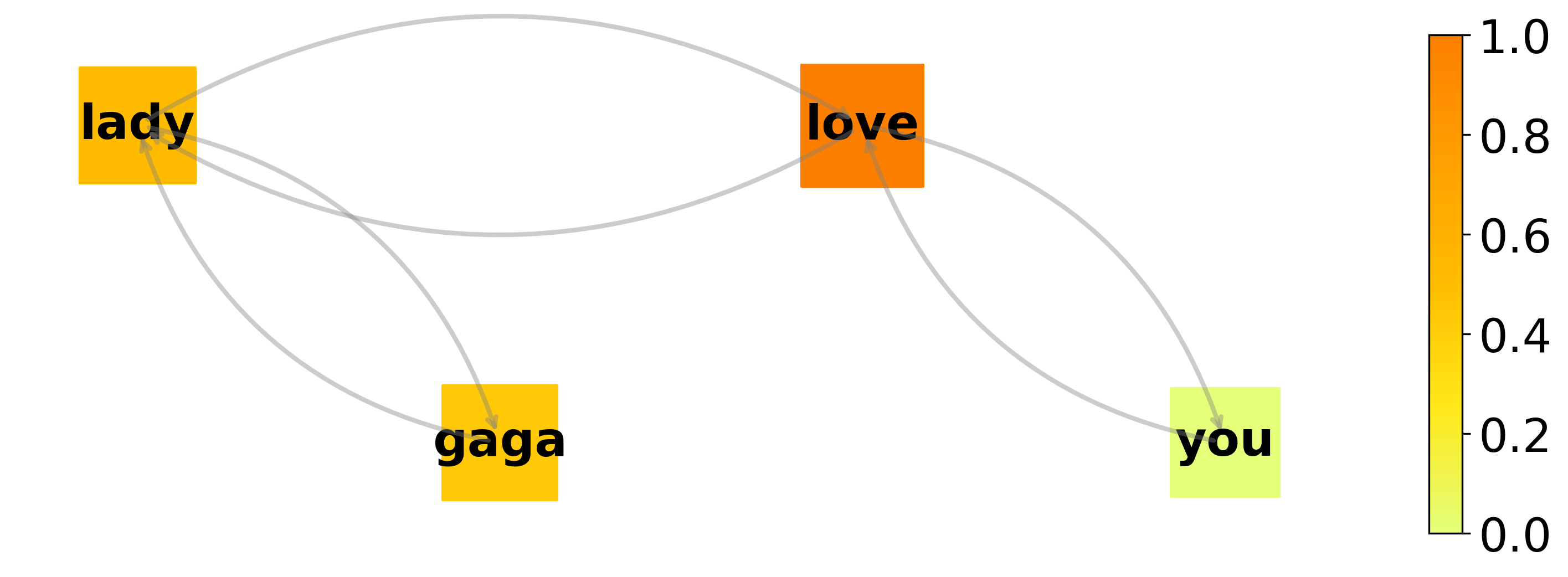}
	}\\
	\subfigure[Testing: Neutral sentiment]{
			\includegraphics[width=0.32\linewidth]{./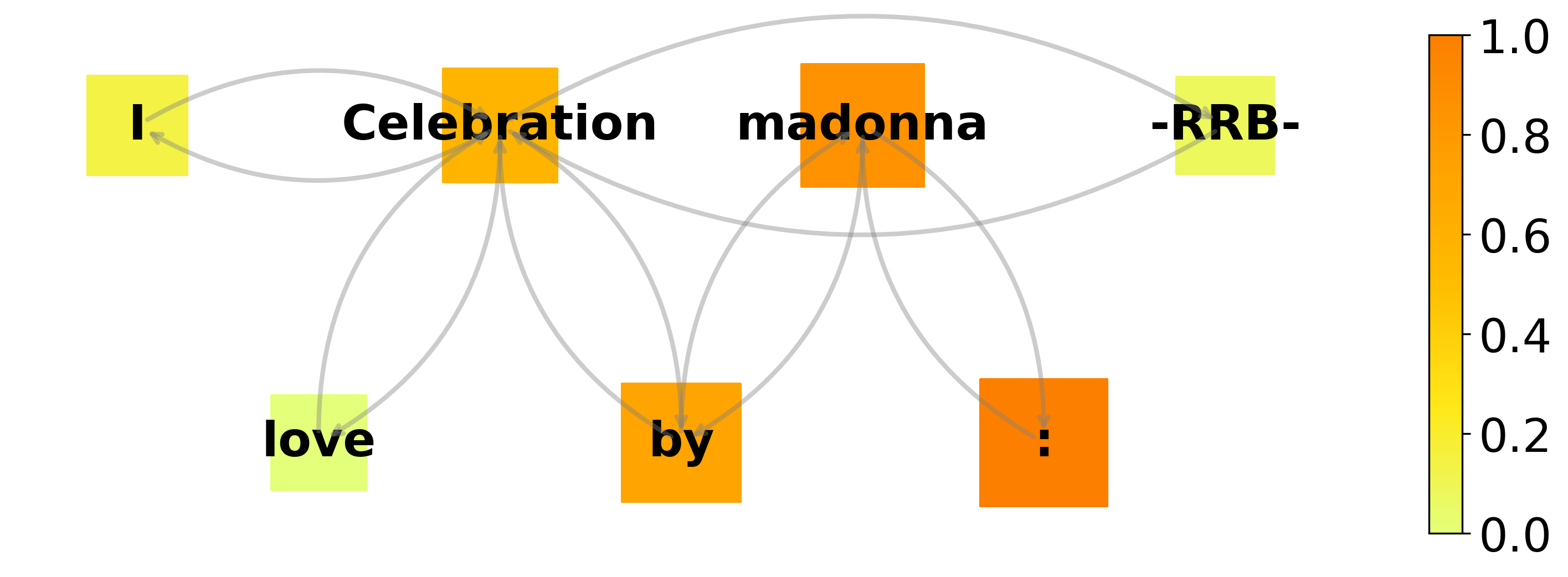}
			\includegraphics[width=0.32\linewidth]{./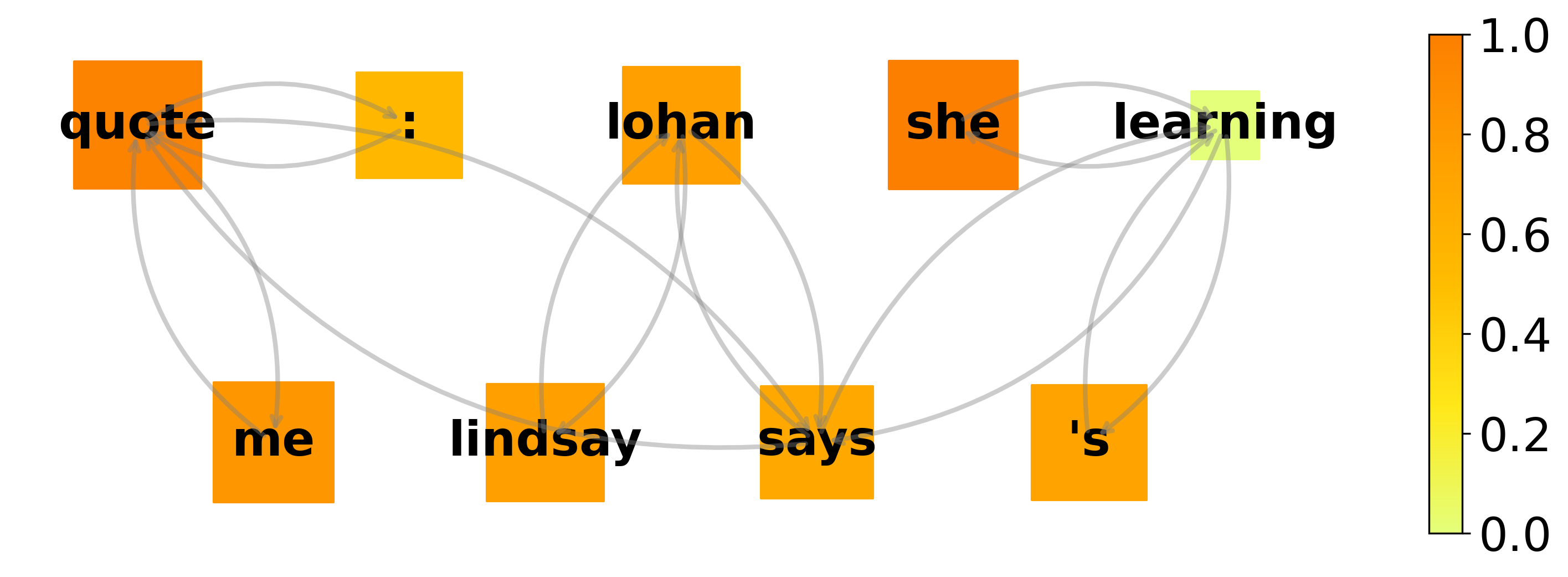}
			\includegraphics[width=0.32\linewidth]{./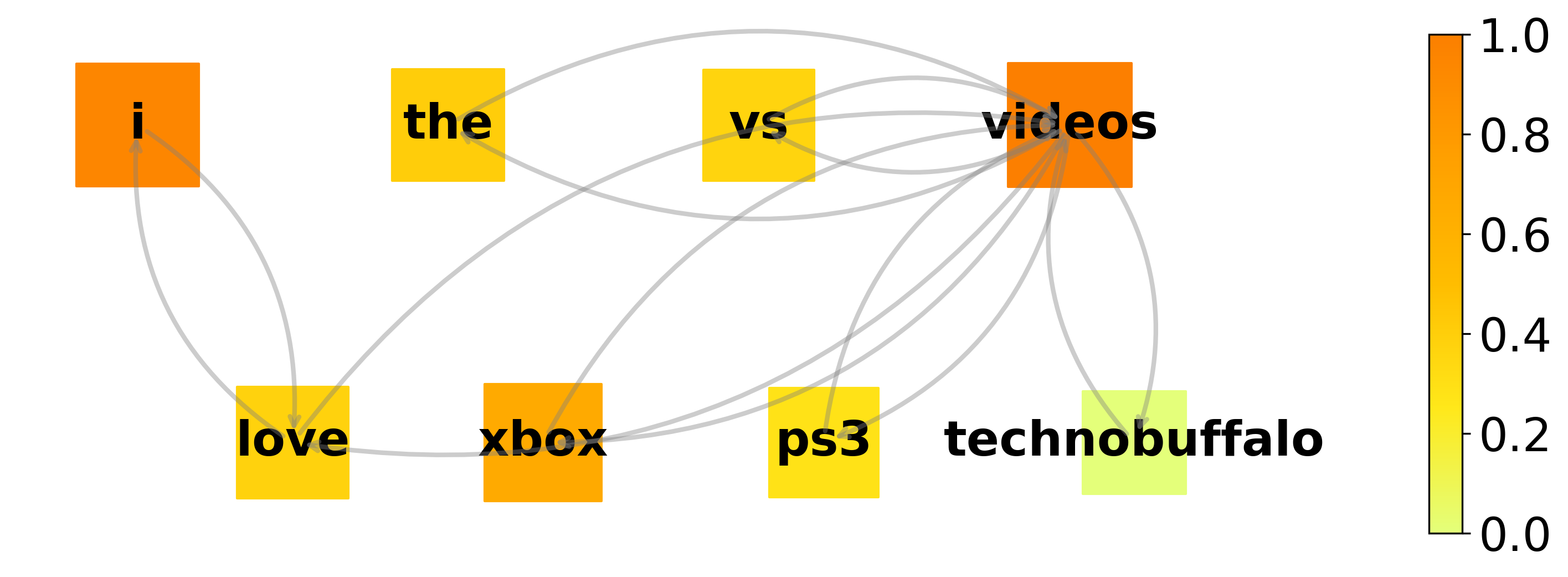}
	}\\
	\subfigure[Testing: Negative sentiment]{
			\includegraphics[width=0.32\linewidth]{./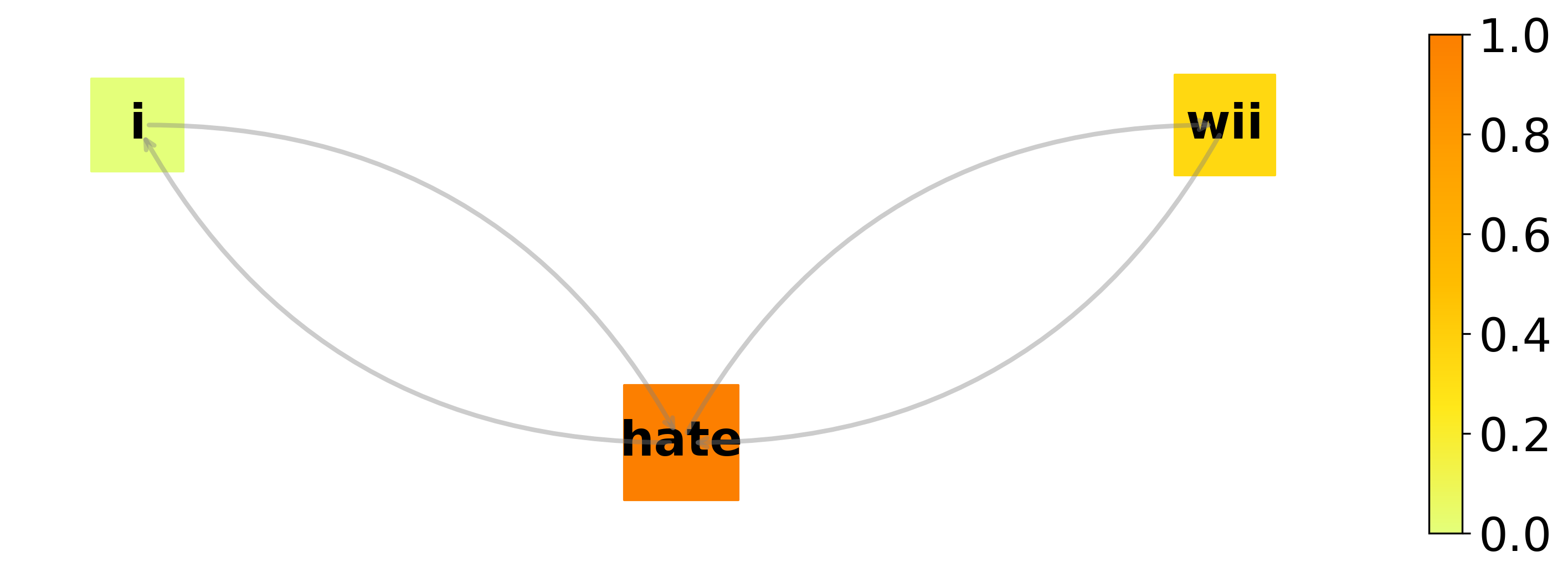}
			\includegraphics[width=0.32\linewidth]{./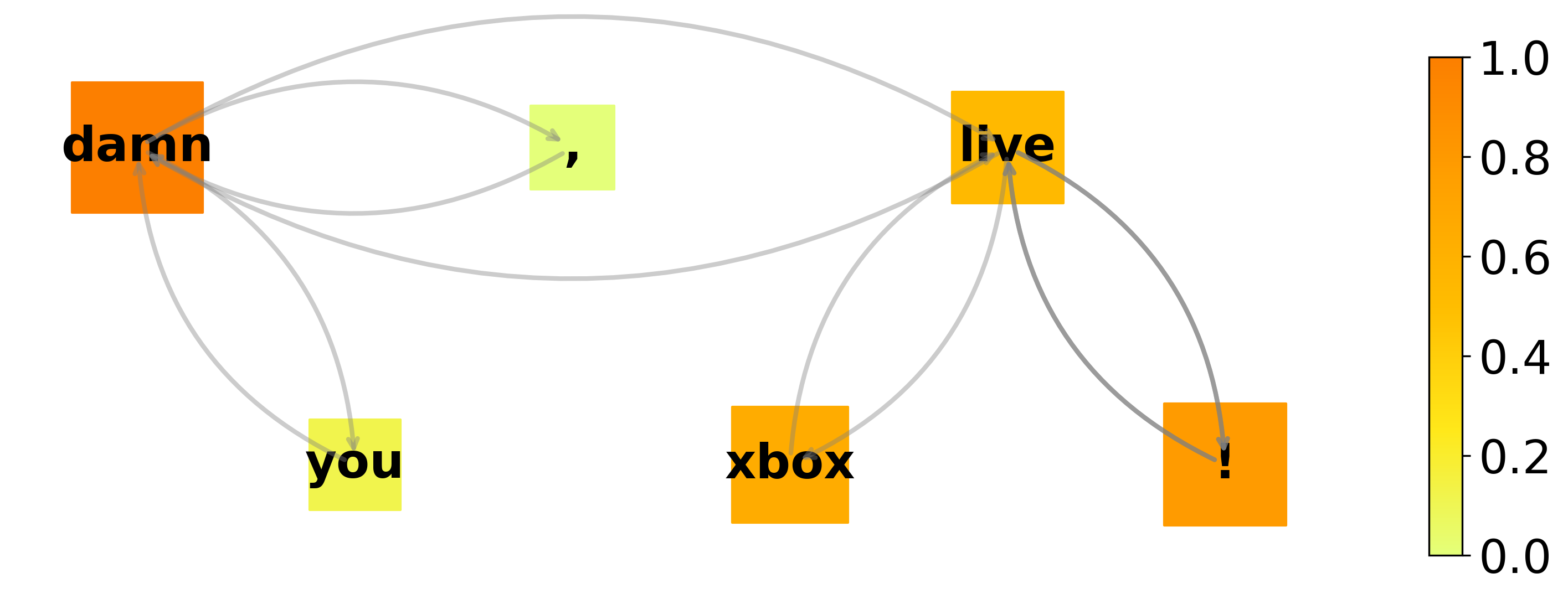}
			\includegraphics[width=0.32\linewidth]{./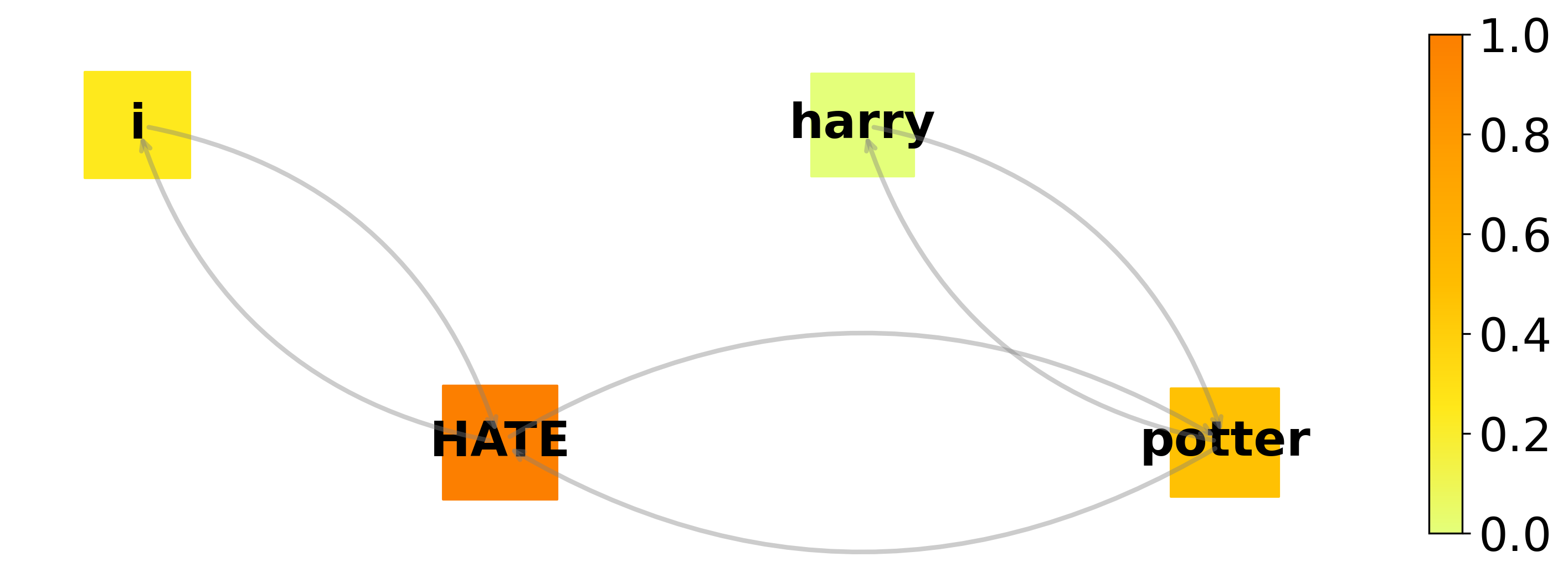}
	}\\
 
	\centering
	\caption{Visualization of node-level prediction vector on the testing set of the Graph-Twitter dataset. Each graph represents a sentence.}
	\label{fig:sentiment3}
\end{figure*}

\begin{figure}
	\centering
	\subfigure[]{
		\begin{minipage}[t]{0.35\linewidth}
			\centering
			\includegraphics[width=1\linewidth]{./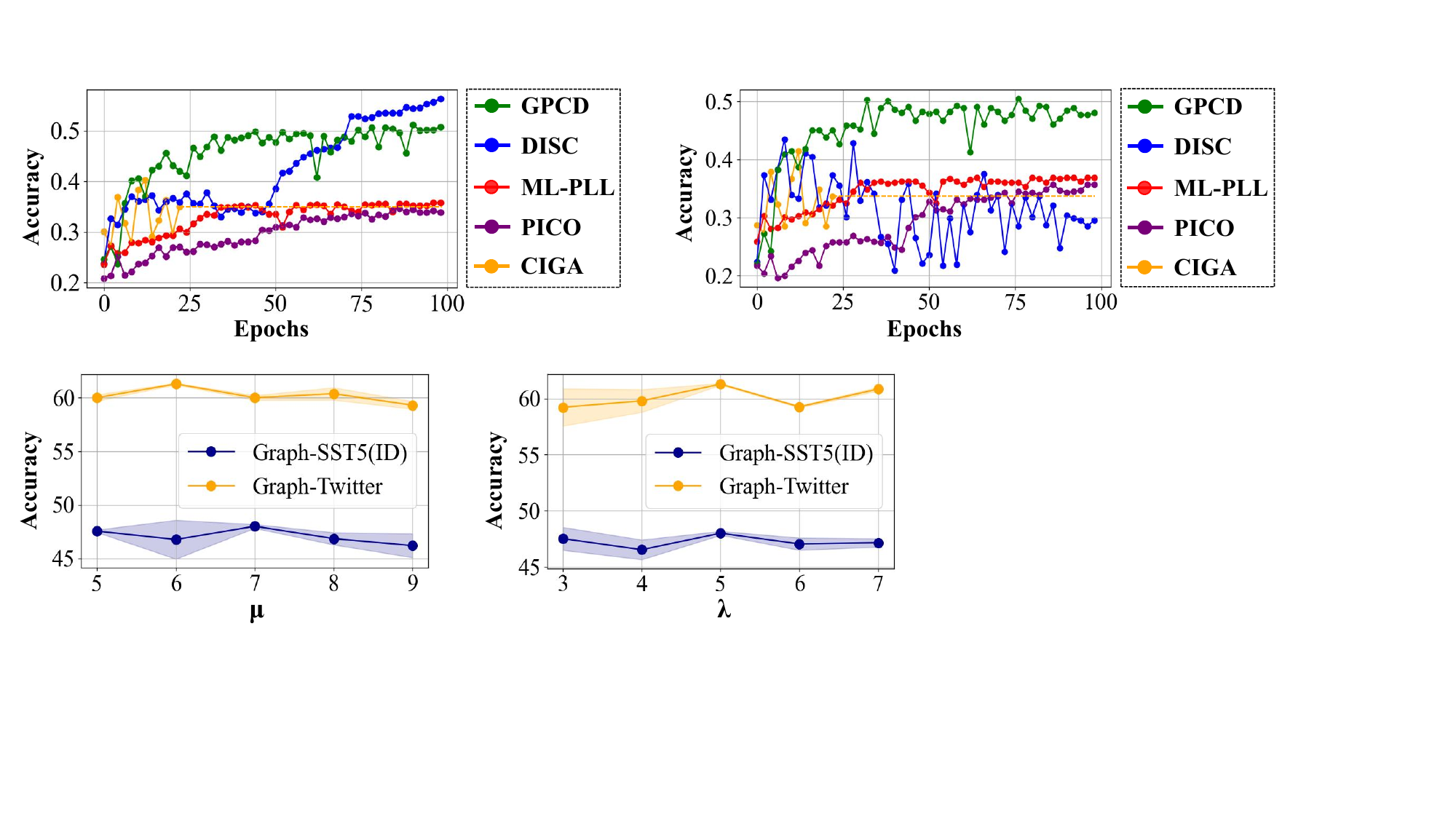}
       \vskip -0.15in
            \label{fig:parameter-a}
		\end{minipage}%
	}%
	\subfigure[]{
		\begin{minipage}[t]{0.35\linewidth}
			\centering
			\includegraphics[width=1\linewidth]{./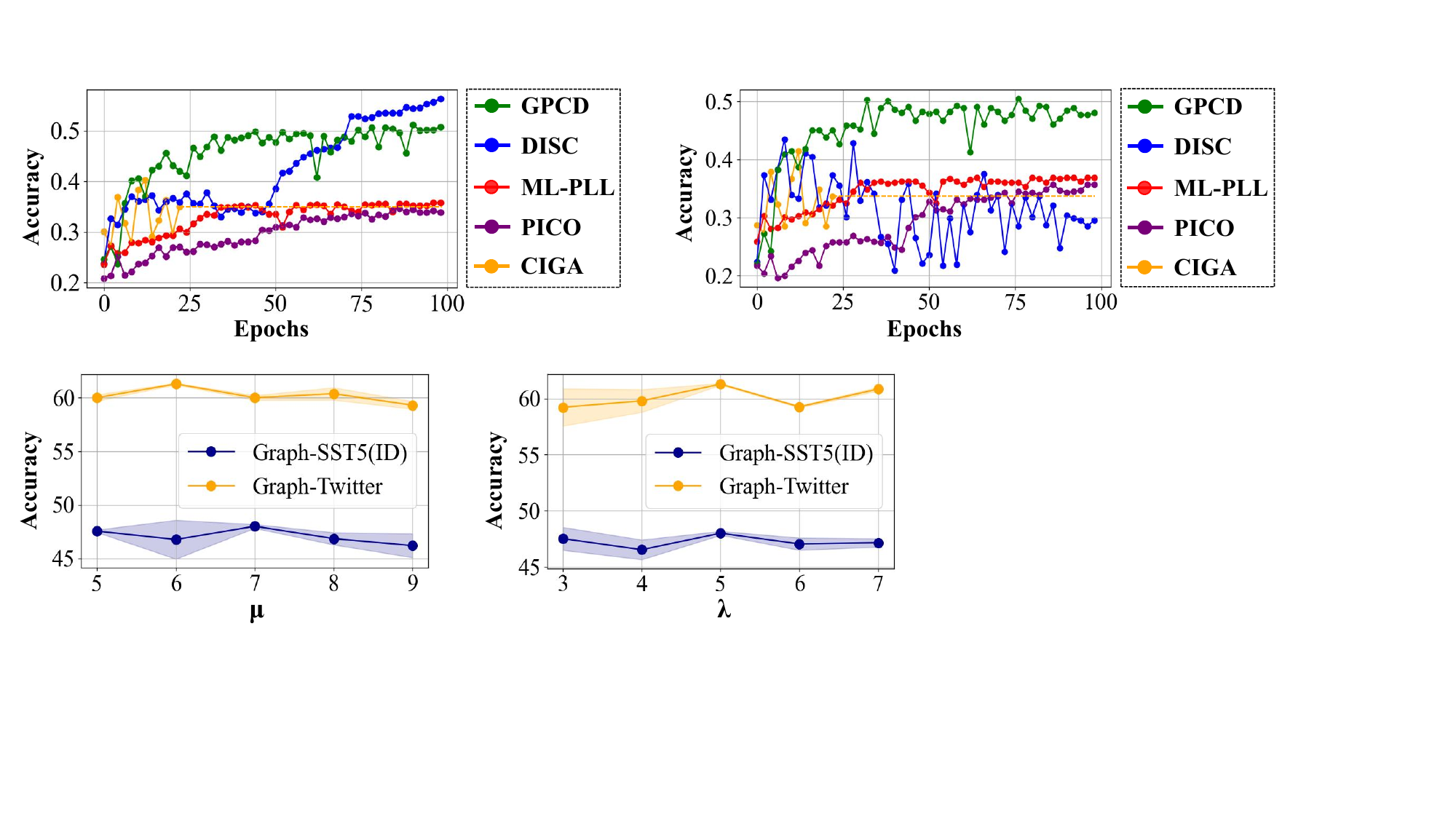}
       \vskip -0.15in
            \label{fig:parameter-b}
		\end{minipage}%
	}%
	\centering
    \vskip -0.1in
	\caption{Performance of GPCD with different $\mu$ and $\lambda$.}
	\label{fig:parameter}
     \vskip -0.1in
\end{figure}


\section{B. Experiment Details}
\subsection{B.1. Datasets}
\label{apx:dataset}
We conducted experiments on seven widely used graph datasets: Graph-SST5 (OOD) \cite{dataset:sst,DBLP:conf/iclr/WuWZ0C22}, Graph-SST5 (ID) \cite{dataset:sst}, Graph-Twitter (OOD) \cite{dataset:sst,DBLP:conf/iclr/WuWZ0C22}, Graph-Twitter (ID) \cite{dataset:sst}, Graph-SST2 \cite{dataset:sst}, COLLAB \cite{dataset:collab} and REDDIT-MULTI-5K \cite{DBLP:conf/aaai/0001RFHLRG19}.

(1) Graph-SST5 (OOD) dataset \cite{dataset:sst}. Graph-SST5 is a sentiment graph dataset. We follow \cite{DBLP:conf/iclr/WuWZ0C22} to partition the dataset into training and test sets in order to increase the task's difficulty.

(2) Graph-SST5 (ID) dataset \cite{dataset:sst}. The original Graph-SST5 dataset.

(3) Graph-SST2 dataset \cite{dataset:sst}. Similar to the Graph-SST5 dataset, but formulated as a binary classification task.

(4) Graph-Twitter (OOD) dataset \cite{dataset:sst,DBLP:conf/iclr/WuWZ0C22}. Similar to the Graph-SST5 dataset, but sourced from a different data origin. We follow \cite{DBLP:conf/iclr/WuWZ0C22} to partition the dataset into training and test sets in order to increase the task's difficulty.

(5) Graph-Twitter (ID) dataset \cite{dataset:sst}. The original Graph-Twitter dataset.

(6) COLLAB dataset \cite{dataset:collab}. COLLAB is a scientific collaboration dataset where each graph represents a researcher's ego network. In this network, researchers and their collaborators are nodes, and an edge indicates a collaboration between two researchers.

(7) REDDIT-MULTI-5K dataset \cite{DBLP:conf/aaai/0001RFHLRG19}. REDDIT-MULTI-5K is a relational dataset extracted from Reddit, with each graph representing an online discussion thread. In these graphs, users are depicted as nodes, and an edge signifies a situation where one of the two users responded to a comment made by the other user.

\subsection{B.2. Create Label Noise}
\label{apx:label}
Two distinct forms of label noise were strategically incorporated into our study: random PLL label and competitive PLL label. In the context of random PLL label, all erroneous labels are added to the candidate label set with the same probability. Concretely, for the binary classification Graph-SST2 dataset, an experimental framework was simulated, simulating three distinct data annotators each endowed with labeling accuracies of 100\%, 70\%, and 50\%, respectively, reflective of their individual competencies in correctly annotating data instances. This simulation is tailored to mimic real-world complexities.

For PLL datasets with competitive PLL label, a more intricate competitive label noise model was employed. This noise formulation emphasizes the augmentation of selecting labels that are semantically proximate to the ground-truth label, inherently capturing the nuances and intricacies prevalent in data. Illustratively, within the Graph-SST5 dataset, encompassing five labels encompassing various emotional tones: ``very negative'', ``negative'', ``neutral'', ``positive'', and ``very positive'', a deliberate emphasis was placed on elevating the likelihood of selecting labels such as ``neutral'' and ``very positive'' when the underlying ground-truth label pertained to ``positive'', thus emphasizing the thematic proximity between these concepts.

Correspondingly, the Graph-Twitter dataset similarly exhibits a tripartite labeling schema consisting of ``negative'', ``neutral'', and ``positive'' sentiments, where the contiguous labels inherently possess a heightened semantic affinity. Comparable considerations extend to the COLLAB dataset, wherein the labels ``High Energy Physics'', ``Condensed Matter Physics'', and ``Astro Physics'' are featured, with the initial two labels evincing more pronounced semantic coherence.

To introduce competitive noise, a meticulous strategy was devised: labels semantically closest to the ground-truth label are appended with a probability of $\rho$, thereby manifesting a salient noise component, the remaining labels are added to the candidate label set with a probability of $(1-\rho)$, Where $\rho$ represents the level of label ambiguity. Specifically, we constructed datasets with two levels of competitive label noise: $\rho=0.9$ and $\rho=0.7$.

This deliberate orchestration of competitive noise substantiates a dynamic framework that engenders heightened amalgamation of noise sources. The resultant intricacy within the noise profile enhances the evaluative capacity of our model, effectively probing its resilience in discerning discriminative patterns amidst intricate data landscapes.

\subsection{B.3. Setting Details}
\label{apx:setting}
We compared the proposed GPCD with the following methods: 1)ML-PLL \cite{mlpll} constructs a transformation matrix to model the relationship between the candidate label set and the ground-truth label. It utilizes a mutual learning paradigm to coordinate and guide the learning of two classifiers; 2) PiCO \cite{pico} adopts contrastive learning to enhance feature learning and employs a strategy based on class prototypes to update the confidence of candidate labels; 3) CIGA \cite{ciga} proposes an information-theoretic objective based on causality to maximize the extraction of invariant intra-class information from the subgraphs; 4) DISC \cite{DBLP:conf/nips/Fan0MST22} proposes a disentangled GNN framework to capture the causal substructure in severe bias data; 5)ARMA \cite{arma} uses the most basic graph neural network and assigns equal weights to all labels.

We split datasets into 80\% training, 10\% validation, and 10\% test sets. The test set accuracy, linked to the best-performing validation set, determined the final result. The pre-training epochs for all datasets were set to 5. The auxiliary training epochs for the dataset REDDIT-MULTI-5K were set to 50, while the rest of the datasets were set to 20. We set $\xi$ as 5. The parameter $\lambda$ was selected from \{2,3,4,5,6,7\} based on the final results.The loss weights for $\mathcal{L}_{c}$, $\mathcal{L}_{g}$, and $\mathcal{L}_{ce}$ were set to 1, 0.5, and 1, respectively.
During the extraction of potential causes, clustering was stopped if the average mean squared error was less than 0.28.
The initial value of hyperparameter $\Delta$ was set to 0.15, and it was increased after each extraction of latent causes to prevent the explosion of latent variables. The experiments indicate that an initially large  $\Delta$ will result in the failure to extract latent causes. If  $\Delta$ remains constant subsequently, latent causes will experience explosive growth, leading to deteriorated results.
We set t to 4, potential causes were re-extracted every 4 epochs during the auxiliary training process.
The model is trained by a standard Adam optimizer with a learning rate of 0.0001 and a batch size of 32.

All our experiments were conducted on a workstation with two Quadro RTX 5000 GPU (16 GB), one Intel Xeon E5-1650 CPU, 128GB RAM, and a Unbuntu 20.04 operating system. For all experimental results, we conducted five independent runs and reported the mean ± standard deviation.

\subsection{B.4. Implementation of $\mathcal{K}(\cdot)$}
\label{apx:ke}
The function $\mathcal{K}(\cdot)$ employs the K-Means clustering technique on all node representations within ${\psi(G_{i})}_{i=1}^{N}$, resulting in the creation of multiple clusters. Each cluster is representative of a distinct set of variables. Given the variance in node count across diverse graphs, graphs with fewer nodes naturally exhibit a diminished cumulative sum of squared errors. To establish a uniform benchmark for comparison, we employ the mean squared error per node, calculated by dividing the cumulative sum of squared errors by the total node count across graphs. In our specific implementation, we introduce a hyperparameter $\alpha$ denoting the threshold for the Average Mean Squared Error (AMSE). Referring to the design of the elbow method, the clustering process concludes when the AMSE descends below this stipulated threshold. Algorithm \ref{alg:ke} describes the detailed procedure.

\begin{algorithm}
\caption{Pipeline of $\mathcal{K}(\cdot)$.}
\label{alg:ke}
\begin{algorithmic}[1] 
\State All node representations within $\{\psi(G_{i})\}_{i=1}^{N}$.
\State Cluster centers set $C$.
\State Set AMSE $q^{\text{AMSE}}$ as 0, cluster num $k$ as 1.
\For{$q^{\text{AMSE}} > \alpha$}
    \State Apply K-means clustering to partition all node representations within $\{\psi(G_{i})\}_{i=1}^{N}$ into $k$ clusters.
    \State Compute $q^{\text{AMSE}}$ for these $k$ clusters.
    \State $k = k + 10$.
\EndFor
\State Acquired the cluster centers set $C$.
\State \Return $C$.
\end{algorithmic}
\end{algorithm}

\subsection{B.5. Additional Experimental Results}
In Figure \ref{fig:sentiment2} and \ref{fig:sentiment3}, we present supplementary empirical findings concerning both the training and testing sets of the Graph-Twitter dataset. Whether applied to the training or testing partition, GPCD demonstrates a strong knack for identifying words that closely align with the true meaning of sentences. This natural ability allows the model to excel in accurately classifying various text inputs. The practical results presented here highlight GPCD's skill in grasping the real essence of sentences in partial-label datasets.

\subsection{B.6. Computational complexity analysis}
\label{apx:cpx}
The computational complexity of our algorithm needs to be analyzed separately for each of the three phases it encompasses. For phase one, since the only difference from conventional GNNs involves the use of an MLP to map node representations at the feature output stage, its computational complexity is similar to that of other graph neural networks, i.e. $\mathcal{O}( N \times |\mathcal{V}| \times \mathcal{D} \times \mathcal{J})$. Here, $|\mathcal{V}|$ and $\mathcal{D}$ denote the number of averaged nodes and degrees numder in the graph, respectively, while $N$ denotes the number of samples,  $\mathcal{J}$ represents the number of layers. 

In phase two, we implemented clustering operations based on K-means and the elbow method. The computational complexity of the K-means clustering algorithm can be expressed as $\mathcal{O}(\sum_{i=1}^{N}|\mathcal{V}_{i}| \times \mathcal{C} \times  \mathcal{B} \times \bar{\mathcal{B}})$, where $\mathcal{C}$ is the number of clusters, $\mathcal{B}$ is the number of iterations required for convergence of K-means, and $\bar{\mathcal{B}}$ denotes the total number of the K-means conducted. $\mathcal{O}(\sum_{i=1}^{N}|\mathcal{V}_{i}| \times  \mathcal{C} \times \mathcal{I} \times \mathcal{B} \times \bar{\mathcal{B}})$ is equal to $\mathcal{O}(  N \times |\mathcal{V}| \times \mathcal{C} \times \mathcal{B} \times \bar{\mathcal{B}})$. The product $\mathcal{C} \times \mathcal{B} \times \bar{\mathcal{B}})$  is kept within a relatively small range by controlling the number of centers and the frequency of computing k-means. Additionally, this algorithm does not require backpropagation and performs the computation only once per $\lfloor \frac{\mu}{2} \rfloor$ round, therefore the compuation cost is no larger than conduct one epoch of GNN training.

In phase three, similarly, we have only modified the mapping of node representations, therefore the computational complexity remains $\mathcal{O}( N \times |\mathcal{V}| \times \mathcal{D} \times \mathcal{J})$.

In summary, the computational complexity of our method is essentially the same as that of the GNN model.

\paragraph{Model Structure Evaluation.}
We performed an analysis of some parameters employed in our method as a verification of the model structure, as illustrated in Figure \ref{fig:parameter-a}. The results indicate that an appropriate number $\mu$ of pre-training epochs plays a crucial role in facilitating the extraction of meaningful potential causes, ultimately contributing to optimal model performance. Figure \ref{fig:parameter-b} further elucidates that a relatively larger value of the hyperparameter $\lambda$ corresponds to more favorable outcomes, aligning with empirically validated research findings.

\clearpage

\section{Reproducibility Checklist}

This paper:

\begin{itemize}
\item Includes a conceptual outline and/or pseudocode description of AI methods introduced. (yes)

\item Clearly delineates statements that are opinions, hypothesis, and speculation from objective facts and results. (yes)

\item Provides well marked pedagogical references for less-familiare readers to gain background necessary to replicate the paper. (yes)

\item Does this paper make theoretical contributions? (yes)

\item All assumptions and restrictions are stated clearly and formally. (yes)

\item All novel claims are stated formally (e.g., in theorem statements). (yes)

\item Proofs of all novel claims are included. (yes)

\item Proof sketches or intuitions are given for complex and/or novel results. (yes)

\item Appropriate citations to theoretical tools used are given. (yes)

\item All theoretical claims are demonstrated empirically to hold. (yes)

\item All experimental code used to eliminate or disprove claims is included. (yes)

\item Appropriate citations to theoretical tools used are given. (yes) 

\item Does this paper rely on one or more datasets? (yes)

\item A motivation is given for why the experiments are conducted on the selected datasets. (yes)

\item All novel datasets introduced in this paper are included in a data appendix. (yes)

\item All novel datasets introduced in this paper will be made publicly available upon publication of the paper with a license that allows free usage for research purposes. (yes)

\item All datasets drawn from the existing literature (potentially including authors’ own previously published work) are accompanied by appropriate citations. (yes)

\item All datasets drawn from the existing literature (potentially including authors’ own previously published work) are publicly available. (yes)

\item All datasets that are not publicly available are described in detail, with explanation why publicly available alternatives are not scientifically satisficing. (yes)

\item Does this paper include computational experiments? (yes)

\item Any code required for pre-processing data is included in the appendix. (yes).

\item All source code required for conducting and analyzing the experiments is included in a code appendix. (yes)

\item All source code required for conducting and analyzing the experiments will be made publicly available upon publication of the paper with a license that allows free usage for research purposes. (yes)

\item All source code implementing new methods have comments detailing the implementation, with references to the paper where each step comes from (yes)

\item If an algorithm depends on randomness, then the method used for setting seeds is described in a way sufficient to allow replication of results. (yes)

\item This paper specifies the computing infrastructure used for running experiments (hardware and software), including GPU/CPU models; amount of memory; operating system; names and versions of relevant software libraries and frameworks. (yes)

\item This paper formally describes evaluation metrics used and explains the motivation for choosing these metrics. (yes)

\item This paper states the number of algorithm runs used to compute each reported result. (yes)

\item Analysis of experiments goes beyond single-dimensional summaries of performance (e.g., average; median) to include measures of variation, confidence, or other distributional information. (yes)

\item The significance of any improvement or decrease in performance is judged using appropriate statistical tests (e.g., Wilcoxon signed-rank). (yes)

This paper lists all final (hyper-)parameters used for each model/algorithm in the paper’s experiments. (yes)

\item This paper states the number and range of values tried per (hyper-) parameter during development of the paper, along with the criterion used for selecting the final parameter setting. (yes)
\end{itemize}

\end{document}